\documentclass{article} 
\usepackage[accepted]{icml2026}

\usepackage{amsmath,amsfonts,bm}

\def\eqref#1{equation~\ref{#1}}

\def\1{\bm{1}}

\DeclareMathAlphabet{\mathsfit}{\encodingdefault}{\sfdefault}{m}{sl}
\SetMathAlphabet{\mathsfit}{bold}{\encodingdefault}{\sfdefault}{bx}{n}

\usepackage[utf8]{inputenc} 
\usepackage[T1]{fontenc}    
\usepackage[colorlinks=true,
            linkcolor=red,
            citecolor=blue,
            filecolor=blue,
            urlcolor=blue]{hyperref}
\usepackage{url}

\usepackage{amssymb}
\usepackage{amsthm}
\usepackage{amsmath, amssymb, mathtools, amsfonts}
\newtheorem{theorem}{Theorem}

\usepackage{caption} 
\usepackage[x11names,svgnames,dvipsnames,table]{xcolor}
\usepackage{subcaption}
\usepackage{thmtools, thm-restate}
\usepackage{cleveref} 
\usepackage{multirow}
\usepackage{enumitem}

\usepackage{amssymb}   
\usepackage{pifont}    
\usepackage{booktabs}
\usepackage{wrapfig} 
\usepackage{setspace} 
\usepackage{booktabs}   
\usepackage{tabularx}   
\usepackage{ragged2e}   
\usepackage{xstring}  
\usepackage{collcell} 
\usepackage{graphicx}
\usepackage{caption}

\usepackage{xcolor}
\usepackage[most]{tcolorbox}

\newtcolorbox{hypothesisbox}{
    enhanced,
    breakable,
    colback=gray!8,
    colframe=black!65,
    boxrule=0.9pt,
    arc=3pt,
    left=10pt,
    right=10pt,
    top=8pt,
    bottom=8pt
}

\newcommand{\Yuhang}[1]{\textcolor{red}{#1}}
\newcommand{\erdun}[1]{\textcolor{red}{#1}}

\newcommand{\cmark}{\ding{51}} 
\newcommand{\xmark}{\ding{55}} 

\makeatletter
 \def\namedlabel#1#2{\begingroup
    #2%
    \def\@currentlabel{#2}%
    \phantomsection\label{#1}\endgroup
}
\makeatother

\newcommand{\encbody}{f_{\text{enc}}}

\newcommand{\decoder}{f_{\text{dec}}}

\newcommand{\schedule}{\operatorname{Schedule}}

\definecolor{yellow}{HTML}{FFD966} 
\definecolor{perturbation}{HTML}{EA9A90}
\definecolor{obs}{HTML}{7789B7}

\crefname{section}{\S}{\S\S}
\crefname{equation}{Eq.}{Equations}
\crefname{appendix}{App.}{Apps.}
\crefname{thm}{Thm.}{Thms.}
\crefname{cor}{Cor.}{Cors.}
\crefname{prop}{Prop.}{Props.}
\crefname{asm}{Asm.}{Asms.}
\crefname{defn}{Defn.}{Defns.}
\crefname{lemma}{Lem.}{Lems.}
\crefname{exm}{Ex.}{Exs.}
\crefname{clar}{Clar.}{Clars.}
\crefname{princ}{Princ.}{Princs.}

\declaretheorem[name=Lemma,numberwithin=section]{lemma}

\declaretheorem[name=Principle,style=definition,numberwithin=section]{princ}

\usepackage[textsize=tiny]{todonotes}

\usepackage{minitoc}

\icmltitlerunning{What Makes a Representation Good for Single-Cell Perturbation Prediction?}

\begin{document}

\twocolumn[
  \icmltitle{What Makes a Representation Good for Single-Cell Perturbation Prediction?}



  \icmlsetsymbol{equal}{*}

  \begin{icmlauthorlist}
    \icmlauthor{Wenkang Jiang}{aiml}
    \icmlauthor{Yuhang Liu}{aiml,rair}
    \icmlauthor{Yichao Cai}{aiml}
    \icmlauthor{Erdun Gao}{aiml,rair}
    \icmlauthor{Jiayi Dong}{fdu}
    \icmlauthor{Ehsan Abbasnejad}{monash}
    \icmlauthor{Lina Yao}{unsw,csiro}
    \icmlauthor{Javen Qinfeng Shi}{aiml,rair}
  \end{icmlauthorlist}

  \icmlaffiliation{aiml}{Australian Institute for Machine Learning, Adelaide University, Australia}
 \icmlaffiliation{rair}{Responsible AI Research Centre, Australia}
  \icmlaffiliation{fdu}{College of Computer Science and Artificial Intelligence, Fudan University, China}
  \icmlaffiliation{monash}{Department of Data Science and AI, Monash University, Australia}
  \icmlaffiliation{unsw}{School of Computer Science and Engineering, University of New South Wales, Australia}
\icmlaffiliation{csiro}{CSIRO, Australia}
  \icmlcorrespondingauthor{Yuhang Liu}{yuhang.liu01@adelaide.edu.au}

  \icmlkeywords{Machine Learning, ICML}

  \vskip 0.3in
]



\printAffiliationsAndNotice{}  

\doparttoc 
\faketableofcontents

\begin{abstract}
Single-cell perturbation modeling is fundamental for understanding and predicting cellular responses to genetic perturbations. However, existing approaches, from causal representation learning to foundation models, often struggle with an overlooked challenge: gene expression is dominated by perturbation-invariant information, while perturbation-specific signals are intrinsically sparse. As a result, learned representations either entangle invariant and perturbation-specific information, leading to spurious and non-generalizable predictors, or suppress perturbation-specific signals altogether, rendering them ineffective for prediction. To address this, we propose \emph{PerturbedVAE}, a general framework designed to resolve this signal imbalance. The framework explicitly separates perturbation-specific information from dominant invariant structure and recovers causal representations to effectively utilize such information for prediction. We further provide an identifiability analysis that characterizes the conditions under which sparse perturbation effects can be reliably recovered, thereby clarifying how the framework can be concretely specified under such conditions. Empirically, \emph{PerturbedVAE} achieves state-of-the-art performance on a widely used benchmark across multiple evaluation settings, yielding significant gains on out-of-distribution combinatorial predictions and uncovering interpretable perturbation-response programs.

\end{abstract}


\section{Introduction}
\label{sec:introduction}

The goal of single-cell perturbation modeling is to predict how a cell’s gene expression changes in response to genetic perturbations~\citep{orgogozo2015differential}, such as CRISPR-based perturbations~\citep{jinek2012programmable,gilbert2014genome,dixit2016perturb,replogle2020combinatorial}. Unlike standard predictive tasks, this problem involves perturbations that actively modify the underlying biological system. As a result, it presents two core challenges: generalization and interpretability. The first concerns generalization to unseen perturbations, e.g., combinatorial ones~\citep{dixit2016perturb,replogle2020combinatorial}. The second concerns understanding how perturbations reshape underlying cellular programs~\citep{huang2024predicting,lotfollahi2023predicting}.

Existing work on single-cell perturbation modeling has primarily focused on two directions: (i) learning causal representations, and (ii) learning general
representations using foundation models (FMs) (See Sec.~\ref{app:relatedwork} for additional discussion). The first direction is motivated by leveraging causal mechanisms to support both generalization and interpretability~\citep{lachapelle2022disentanglement,zhang2023identifiability,lopez2022learning,de2025interpretable}. These approaches typically employ latent causal generative models to capture high-level causal variables that govern gene expression, and aim to recover such variables from data, also known as causal representation learning~\citep{scholkopf2021toward,liu2022identifying}. The second direction is inspired by the success of FMs in various domains, and emphasizes data and model scale to learn powerful, transferable representations~\citep{cui2024scgpt,hao2024large,yang2022scbert,rosen2023universal,theodoris2023transfer}. 

\vspace{0.5em}
\textbf{The Perturbation Suppression Hypothesis (Sec.~\ref{sec: psh}).}
Across both directions above,  an empirical property of single-cell perturbation data is often underemphasized: perturbation-invariant information, such as background cellular programs, typically dominates gene expression, while perturbation-specific information is sparse. This gives rise to what we term the \emph{Perturbation Suppression Hypothesis}: when invariant variation dominates the data, approaches that do not explicitly separate invariant and perturbation-specific information tend to fail systematically. In particular, such approaches either entangle invariant information into perturbation-related representations, undermining their perturbation-specific semantics and leading to poor generalization, as in existing causal representation learning methods, or suppress perturbation-induced signals that are critical for accurate perturbation prediction, as in FM-based approaches.



\vspace{0.25em}
\textbf{Perturbation-Aware Representations (Sec.~\ref{sec: par}).}
Motivated by the perturbation suppression hypothesis, we advocate for learning \emph{perturbation-aware representations}, which explicitly aims to: 1) separate perturbation-specific information from dominant, perturbation-invariant one (\emph{i.e., Extraction}), 2) organize the extracted perturbation-specific information in a manner that supports generalization, and crucially, interpretability (\emph{i.e., Utilization}):

\vspace{0.25em}
\textbf{Contributions.}
Beyond introducing the {perturbation suppression hypothesis} and {perturbation-aware representations} (Sec.~\ref{sec: obs}), we make the following contributions. 
First, guided by learning perturbation-aware representations, we propose a general framework, termed Perturbation-aware Variational Autoencoders (PerturbedVAE), which combines an alignment-based component to isolate perturbation-specific information from dominant, unperturbed cellular programs, with a latent causal model that organizes the isolated information, enabling principled generalization to unseen perturbations (Sec.~\ref{sec:framework}). Second, we provide an identifiability analysis that characterizes the conditions under which perturbation-specific information can be reliably recovered. Importantly, this analysis clarifies how the proposed framework admits theoretically grounded specifications (Sec.~\ref{sec:dgp}). Third, we empirically demonstrate substantial improvements over strong baselines across multiple single-cell perturbation benchmarks, with particularly strong gains in generalization to unseen combinatorial perturbations, while yielding interpretable representations of perturbation effects (Sec.~\ref{sec: exp}).

\section{Perturbation Suppression and Beyond}
\label{sec: obs}

We begin by introducing the {perturbation suppression hypothesis}, motivated by a fundamental yet often overlooked property of single-cell perturbation data: perturbation-invariant information typically dominates gene expression. From this perspective, we show that representations that do not explicitly distinguish perturbation-invariant from perturbation-specific information may fail to effectively utilize or preserve perturbation-induced signals. Finally, we introduce perturbation-aware representation learning to address this hypothesis.

\subsection{Perturbation Suppression Hypothesis}
\label{sec: psh}
Largely because of experimental, environmental, and cost constraints, perturbation-specific signals typically occupy only a small portion of the gene expression space, while unperturbed background programs account for the majority, even in large-scale perturbation datasets. For example, in widely used Perturb-seq benchmark datasets, the number of distinct perturbations is relatively limited (e.g., on the order of $\sim$284 in the dataset of \citet{norman2019exploring}), whereas gene expression profiles span tens of thousands of measured genes (e.g., $\sim$19{,}264 genes \citep{ahlmann2025deep}). In this setting, methods that are not explicitly designed to separate perturbation-invariant and perturbation-specific information tend to exhibit the following failure modes: either the learned perturbation-related representations become entangled with invariant information, or perturbation-induced signals are suppressed, as below.

\textbf{Implications for Learning Causal Representations.}
Although conceptually appealing, the effectiveness of learning causal representations is not closely supported by theoretical guarantees that are typically grounded in identifiability results, which ensure that the underlying causal variables can be recovered up to simple transformations. A key assumption underlying most existing identifiability results is access to sufficiently rich interventional data, i.e., that all latent causal variables
are perturbed across environments~\citep{lachapelle2022disentanglement,zhang2023identifiability,
lopez2022learning,de2025interpretable}.

In practice, however, single-cell perturbation data is often characterized by partial intervention: only a small subset of genes is perturbed, while large portions of genes remain invariant, as mentioned above. This mismatch between theoretical assumptions and available perturbation data in practice encourages invariant background information to be absorbed into perturbation-related (i.e., causal) representations. This may constitute perturbation suppression, whereby invariant information may be mixed into perturbation-related representations when such separation is not explicitly enforced. As a result, constructing causal models on such mixed representations may be inaccurate, leading to limited generalization ability.

\textbf{Implications for FMs.}
FMs for single-cell transcriptomics are typically trained to learn generic representations by emphasizing fitting the overall data distribution. Given the dominance of perturbation-invariant information, such objectives naturally prioritize encoding shared, perturbation-invariant structure. As a consequence, perturbation-specific signals, which occupy only a small fraction of the expression space, may become less accessible in the learned representations.

To empirically assess this effect, we conduct a linear probing experiment on frozen
representations learned by several FMs. Specifically, we treat perturbation conditions as surrogate labels (i.e., variable $\mathbf{u}$ defined in Sec.~\ref{sec: lcm}) and evaluate whether they are linearly decodable from the learned representations, a standard diagnostic for assessing what information is encoded in an accessible form. We compare representations from UCE~\citep{rosen2023universal}, scFoundation~\citep{hao2024large}, and
Geneformer~\citep{theodoris2023transfer} against a Principal Component Analysis (PCA) baseline applied directly to gene expression.

\begin{figure}[h]   
  \centering
\includegraphics[width=0.9\linewidth]{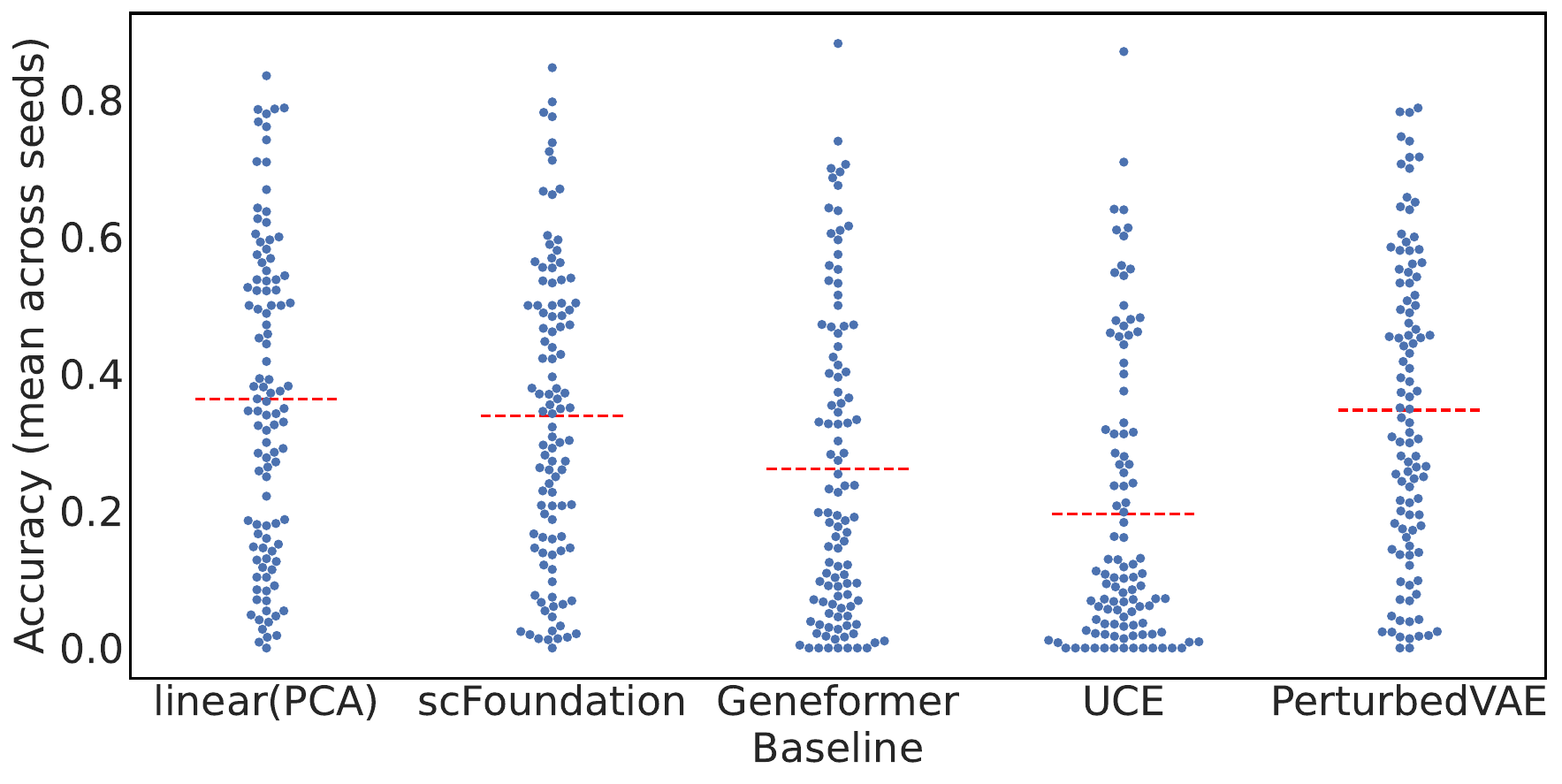}  
  \caption{Linear probe accuracy ($\uparrow$) for predicting perturbation labels. 
FM representations exhibit weaker linear decodability of perturbation labels than a PCA baseline, suggesting limited accessibility of perturbation-related information. In contrast, PerturbedVAE achieves competitive linear probing accuracy, indicating improved accessibility of perturbation-specific signals.}
  \label{fig:collapse}
\end{figure}

As shown in Fig.~\ref{fig:collapse}, representations learned by FMs exhibit weaker linear decodability of perturbation labels than a simple PCA baseline. Since PCA operates directly on the observed gene expression profiles, this result suggests that perturbation-related information is present in the data but becomes less readily accessible after being encoded by generic FM objectives. This observation provides empirical evidence consistent with the perturbation suppression effect in FM representations. In contrast, representations learned by the proposed PerturbedVAE yield competitive linear probing accuracy. See Sec.~\ref{sec:exp_fm} for further results.

\subsection{Learning Perturbation-Aware Representations}
\label{sec: par}
Taken together, the observations above suggest that the central challenge in single-cell perturbation prediction lies in how perturbation-specific information is extracted and utilized. Effective perturbation prediction requires representations that preserve sparse perturbation-induced signals, rather than suppressing them as in FM-based methods, while also preventing invariant structure from becoming entangled with perturbation-related representations, as can occur in learning causal representation methods. Moreover, such representations should organize perturbation-specific information in a form that supports generalization, typically for unseen and combinatorial perturbations.


This motivates us to learn perturbation-aware representations that \emph{explicitly} extract perturbation-specific information from dominant invariant information, i.e., \textit{extraction}, and organize the extracted information in a causally structured representation space for prediction under unseen perturbations, i.e., \textit{utilization}.
\section{PerturbedVAE: A Heuristic Framework for General Perturbation-Aware Modeling}
\label{sec:framework}

Motivated by the notion of \emph{perturbation-aware representations} introduced above, we present PerturbedVAE as a conceptual modeling framework that instantiates the design principles of \textit{Explicit Extraction} and \textit{Effective Utilization}. We emphasize that this section introduces the framework at a high level, motivated by the preceding analysis, rather than a fully specified implementation.

We will study this framework in Sec.~\ref{sec:dgp} from a theoretical perspective and derive conditions under which perturbation-specific information can be reliably identified and utilized. These theoretical insights then guide the concrete instantiation of the framework, ensuring that the resulting model is grounded in principled guarantees.

\subsection{A Latent Causal Generative Model}
\label{sec: lcm}
We begin by introducing the underlying causal generative model, which explicitly formulates perturbation-invariant background programs and perturbation-specific factors, and how they jointly give rise to the observed gene expression data. This model serves as the foundation for designing the proposed framework.

\Cref{fig:lvm-dgp1} illustrates the proposed latent causal generative model for single-cell perturbation data. We assume that the observed gene expression $\mathbf{x}$ is generated from latent variables $\mathbf{z}$ through an unknown nonlinear mapping.
To formulate genetic perturbations, we introduce a surrogate variable $\mathbf{u}$ that indexes the applied perturbation label (e.g., a one-hot encoding). Importantly, we do not assume access to the underlying biochemical intervention mechanism; it suffices to observe which perturbation condition is applied.

To reflect the distinction between unperturbed background cellular programs and perturbation-induced effects, we decompose the latent space into two components:
\begin{itemize}[leftmargin=5pt]
    \item \(\mathbf{z}_\iota\) (\emph{perturbation-invariant variables}), supported on \(\mathcal{Z}_\iota \subseteq \mathbb{R}^{d_\iota}\), represents latent factors that remain stable across perturbation conditions and capture invariant background programs shared by all cells.
    \item \(\mathbf{z}_\nu\) (\emph{perturbation-responsive variables}), supported on \(\mathcal{Z}_\nu \subseteq \mathbb{R}^{d_\nu}\), represents latent factors whose values change in response to genetic perturbations and encode perturbation-induced effects. 
    The components of \(\mathbf{z}_\nu\) are assumed to obey an unknown causal structure, constrained to be a directed acyclic graph (DAG).
\end{itemize}

Following standard causal modeling assumptions, we associate each latent causal variable $\mathbf{z}_i$ with an independent exogenous variable: $\mathbf{n}_\iota$ for $\mathbf{z}_\iota$ and $\mathbf{n}_{\nu,i}$ for each component of $\mathbf{z}_{\nu,i}$, capturing sources of information.

Under the above generative assumptions, the joint prior distribution over the latent variables factorizes as
\begin{equation}
p(\mathbf{z}_\nu, \mathbf{z}_\iota \mid \mathbf{u})
= p(\mathbf{z}_\nu \mid \mathbf{u}, \mathbf{z}_\iota)\, p(\mathbf{z}_\iota),
\label{eq: prior}
\end{equation}
where the perturbation-invariant variables $\mathbf{z}_\iota$ are independent of the perturbation condition $\mathbf{u}$, while the perturbation-responsive variables $\mathbf{z}_\nu$ depend on both the applied perturbation and the background cellular state.

\begin{figure}[t]
  \centering
\    \includegraphics[width=0.65\linewidth]{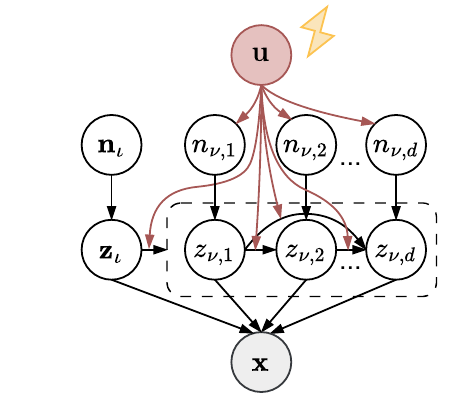}
    \caption{A latent generative model of single-cell perturbations. $\mathbf{z}_\iota$ denotes perturbation-invariant variable and $\mathbf{z}_\nu$ denotes perturbation-responsive variables. }
    \label{fig:lvm-dgp1}
\label{fig:lvms}
\end{figure}

\subsection{Variational Inference Model}
\label{sec: likelihood}
The posterior distribution over the latent variables,
$p(\mathbf{z}_\nu,\mathbf{z}_\iota \mid \mathbf{x},\mathbf{u})$,
induced by the proposed generative model is generally intractable.
We therefore adopt a variational inference to approximate the posterior. Specifically, we introduce the following structured variational distribution:
\begin{equation}
q(\mathbf{z}_\nu,\mathbf{z}_\iota \mid \mathbf{x},\mathbf{u})
= q(\mathbf{z}_\nu \mid \mathbf{x},\mathbf{u})\, q(\mathbf{z}_\iota \mid \mathbf{x}),
\label{eq: poster}
\end{equation}
which reflects our modeling assumptions: $\mathbf{z}_\iota$ captures perturbation-invariant background information inferred from the observed expression $\mathbf{x}$, while $\mathbf{z}_\nu$ captures perturbation-responsive effects conditioned on both $\mathbf{x}$ and the perturbation condition $\mathbf{u}$.

Recall that the perturbation-responsive latent variables $\mathbf{z}_\nu$ are assumed to follow an underlying DAG, which captures the structured dependencies among perturbation effects.
Accordingly, the inference model $q_\theta(\mathbf{z}_\nu \mid \mathbf{x},\mathbf{u})$ in Eq.~\ref{eq: poster} is designed to recover both the latent variables and their dependency structure, so that the learned representations encode not only perturbation effects but also their organization. This structured representation is essential for generalization to unseen perturbations.

\textbf{Evidence Lower Bound.} Given the prior factorization in Eq.~\ref{eq: prior} and the variational posterior in Eq.~\ref{eq: poster}, we optimize the conditional likelihood $\log p(\mathbf{x}\mid \mathbf{u})$ through the following evidence lower bound (ELBO):
\begin{align}
\mathcal{L}_{\text{ELBO}} \; =\; &
\mathbb{E}_{q_{\theta}(\mathbf{z}_\nu, \mathbf{z}_\iota \mid \mathbf{x}, \mathbf{u})}
\big[\log p(\mathbf{x} \mid \mathbf{z}_\nu, \mathbf{z}_\iota,\mathbf{u})\big] \nonumber\\
&- D_{\mathrm{KL}}\!\left(q(\mathbf{z}_\nu, \mathbf{z}_\iota \mid \mathbf{x}, \mathbf{u})\,\|\,p(\mathbf{z}_\nu, \mathbf{z}_\iota\mid \mathbf{u})\right).
\label{eq:elbo}
\end{align}

\textbf{Contrastive Alignment.}
Optimizing the ELBO alone may not resolve the \emph{Perturbation Suppression Hypothesis}: when invariant background programs dominate gene expression data, reconstruction can be achieved primarily by modeling this dominant component.
In this regime, the perturbation-responsive variables $\mathbf{z}_\nu$ may be under-recovered, as their contribution is overwhelmed by invariant background information, leading the learned representations to discard perturbation-induced signals.

To counteract this effect, we introduce a contrastive alignment objective that uses unperturbed controls as an explicit reference for background information.
Specifically, for each perturbed sample $(\mathbf{x}, \mathbf{u})$, we sample a control expression profile $\mathbf{x}^{(\mathbf{u}_0)}$ from the unperturbed condition and encourage their \emph{perturbation-invariant} representations to agree. Concretely, we align the invariant latents inferred from the two samples by minimizing
\begin{equation}
\mathcal{L}_{\text{contrast}}(\mathbf{x}, \mathbf{x}^{(\boldsymbol{u}_0)}) 
= \| \mathbf{z}_\iota - \mathbf{z}_\iota^{(\boldsymbol{u}_0)}\|_2^2 ,
\label{eq: align}
\end{equation}
where $\mathbf{z}_\iota \sim q_\theta(\mathbf{z}_\iota \mid \mathbf{x})$ and $\mathbf{z}_\iota^{(\mathbf{u}_0)} \sim q_\theta(\mathbf{z}_\iota \mid \mathbf{x}^{(\mathbf{u}_0)})$.

Intuitively, by enforcing consistency of $\mathbf{z}_\iota$ across perturbed and unperturbed samples, the alignment enforces dominant, perturbation-invariant variation to be explained through $\mathbf{z}_\iota$. Once this dominant background variation is accounted for in $\mathbf{z}_\iota$, it no longer needs to be explained by other latent variables during reconstruction. Consequently, the perturbation-responsive variables $\mathbf{z}_\nu$ are freed from modeling background structure and are instead driven to capture the residual, perturbation-induced changes.

\begin{figure*}[!t]   
  \centering
  \includegraphics[width=0.9\linewidth]{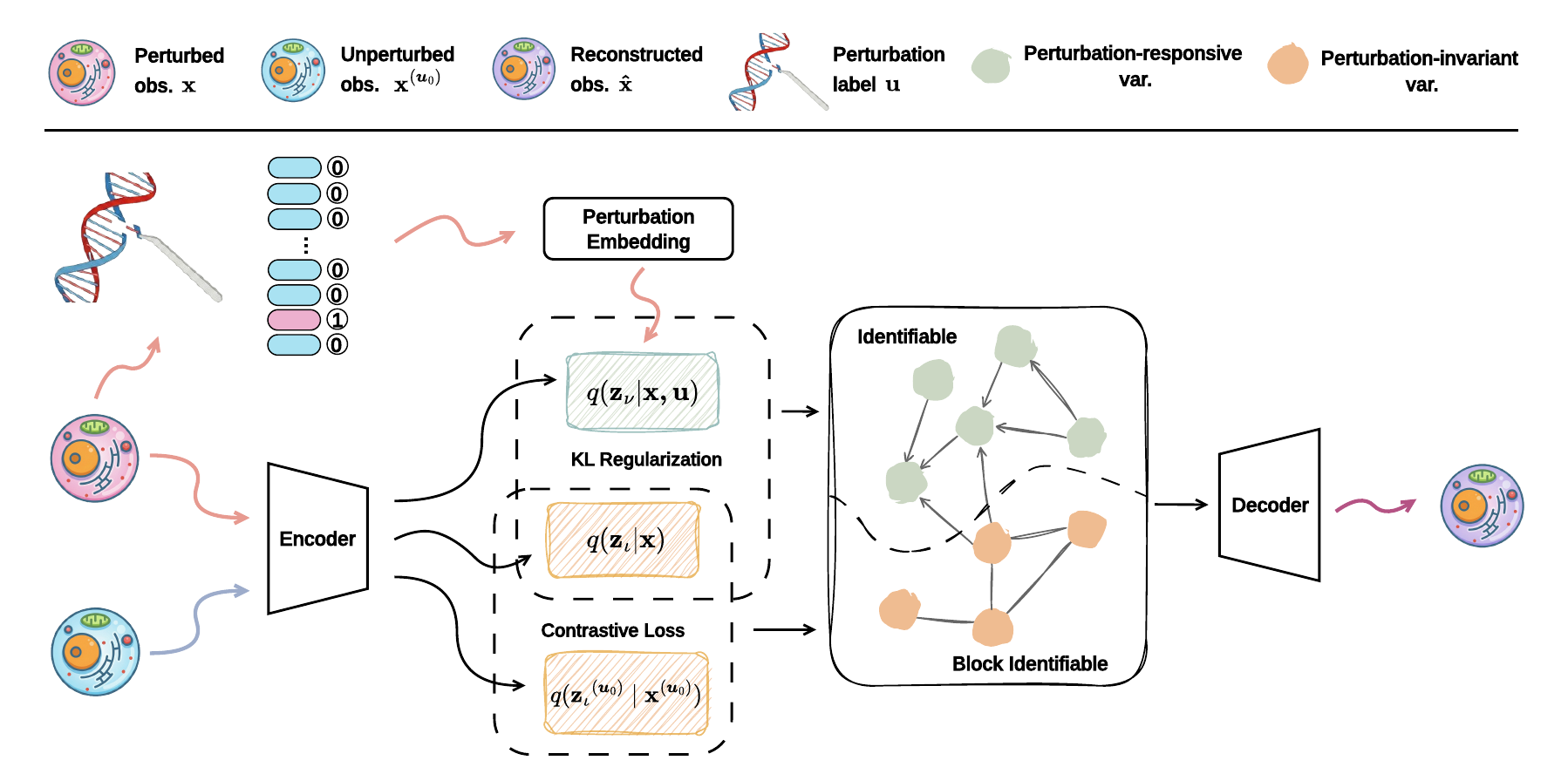}  
  \caption{Framework of the proposed PerturbedVAE. Perturbed $\mathbf{x}$ are used to learn the perturbation-responsive block $\mathbf{z}_\nu$, which captures the effects of perturbations indexed by $\mathbf{u}$. In parallel, unperturbed $\mathbf{x}^{(\boldsymbol{u}_0)}$ are leveraged for contrastive alignment of the perturbation-invariant block $\mathbf{z}_\iota$, encouraging invariant background programs to be separated from perturbation-induced variation.
}
  \label{fig:cdagvae}
\end{figure*}

\subsection{Learning Objective}

The final training objective combines the variational objective with the contrastive alignment term:
\begin{equation}
\mathcal{L}
= - \mathcal{L}_{\mathrm{ELBO}}
+ \alpha\, \mathcal{L}_{\mathrm{contrast}},
\label{eq:final_obj}
\end{equation}
where $\alpha$ controls the strength of the alignment regularization. We refer to the resulting model as \emph{Perturbation-aware Variational Autoencoders} (PerturbedVAE). Figure~\ref{fig:cdagvae} provides an overview of the proposed framework.

\textbf{Objective Interpretation.} The objective in Eq.~\ref{eq:final_obj} balances several complementary goals. The reconstruction term in the ELBO ensures that the latent variables retain sufficient information to explain the observed gene expression. The KL regularization on $\mathbf{z}_\iota$ constrains the perturbation-invariant component, while the KL term on $\mathbf{z}_\nu$ encourages a compact representation of perturbation-responsive variation. The contrastive alignment term explicitly enforces the separation between perturbation-invariant and perturbation-responsive factors by anchoring background variation in $\mathbf{z}_\iota$. Together, these components enable PerturbedVAE to both extract perturbation-induced signals and organize them in a structured latent space that supports generalization to unseen perturbations.

\textbf{Test-time Prediction.}
At test time, PerturbedVAE predicts a target perturbation without using post-perturbation expression as input. Given an unperturbed/control cell $\mathbf{x}^{(\mathbf{u}_0)}$ and a perturbation-condition vector $\mathbf{u}$, we infer the invariant latent variable $\mathbf{z}_\iota \sim q_\phi(\mathbf{z}_\iota \mid \mathbf{x}^{(\mathbf{u}_0)})$, generate the responsive latent variable $\mathbf{z}_\nu$ from the learned perturbation-conditioned mechanism $p_\theta(\mathbf{z}_\nu \mid \mathbf{z}_\iota, \mathbf{u})$, and decode $(\mathbf{z}_\iota,\mathbf{z}_\nu)$ to the predicted expression profile. For double-gene OOD prediction, we directly feed unseen two-hot perturbation vectors into the same learned $\mathbf{u}$-conditioned mechanism trained from single-gene interventions.

\section{A Theoretically-Grounded Construction of the PerturbedVAE Framework}
\label{sec:dgp}

The framework introduced in Sec.~\ref{sec:framework} is deliberately kept implementation-agnostic and does not commit to specific parametric choices (e.g., the concrete form of the prior in Eq.~\ref{eq: prior}). In this section, we provide a formal identifiability analysis that places the framework on a rigorous theoretical footing. By introducing explicit assumptions and parameterizations of the data-generating process, we establish theoretical guarantees for the proposed framework and, in turn, clarify how it can be realized in practice while preserving identifiability guarantees.

Without additional assumptions, exactly recovering latent variables is in general impossible from the observed variables $\mathbf{x}$ and $\mathbf{u}$ alone, even in the simple nonlinear ICA setting~\cite{hyvarinen1999nonlinear,hyvarinen2016unsupervised,hyvarinen2017nonlinear}. To enable the theoretical analysis that follows, we therefore impose additional structure and parameterize the proposed causal generative model as follows.
\begin{align}
    &\mathbf{z}_\iota := \boldsymbol{\lambda}_{\iota\iota}\,\mathbf{z}_\iota + \mathbf{n}_\iota, 
    \qquad \mathbf{n}_\iota \sim \mathcal{N}\!\big(\boldsymbol{\mu}_\iota, \operatorname{diag}\,\boldsymbol{\beta}_\iota\big), 
    \label{eq:lcm_inv}\\
    &\mathbf{z}_\nu := \boldsymbol{\lambda}_{\nu\iota}(\mathbf{u})\,\mathbf{z}_\iota + \boldsymbol{\lambda}_{\nu\nu}(\mathbf{u})\,\mathbf{z}_\nu + \mathbf{n}_\nu, 
    \\
    &\mathbf{n}_\nu \sim \mathcal{N}\!\big(\boldsymbol{\mu}_\nu(\mathbf{u}), \operatorname{diag}\,\boldsymbol{\beta}_\nu(\mathbf{u})\big),
    \label{eq:lcm_nu}\\
    &\mathbf{x} := \mathrm{g}(\mathbf{z}), \label{eq:dgp}
\end{align}
where, 
\begin{itemize}[leftmargin=2em]
    \item $\mathbf{n}_\iota\in\mathbb{R}^{d_\iota}$ and $\mathbf{n}_\nu\in\mathbb{R}^{d_\nu}$ are latent noise variables, sampled from Gaussian, i.e., $\mathcal{N}\!\big(\boldsymbol{\mu}_\iota, \operatorname{diag}\,\boldsymbol{\beta}_\iota\big)$ and $\mathcal{N}\!\big(\boldsymbol{\mu}_\nu(\mathbf{u}), \operatorname{diag}\,\boldsymbol{\beta}_\nu(\mathbf{u})\big)$, respectively.
    \item The matrices $\boldsymbol{\lambda}_{\iota\iota}$, $\boldsymbol{\lambda}_{\nu\nu}(\mathbf{u})$, and $\boldsymbol{\lambda}_{\nu\iota}(\mathbf{u})$ are weight matrices and are assumed to be strictly lower triangular to satisfy the DAG constraint.\footnote{Without loss of generality, we fix a common acyclic ordering across environments, due to permutation indeterminacy in the latent space~\citep{squires2023linear,liu2022identifying}.} Here we assume linear causal relationships among the latent variables, primarily for simplicity and practical applicability, following common practice in prior work~\citep{zhang2023identifiability,de2025interpretable,lopez2022learning}.
    \item $\mathbf{z} = (\mathbf{z}_\iota, \mathbf{z}_\nu)$ and $\mathrm{g}$ denotes an unknown nonlinear mapping from $\mathbf{z}$ to $\mathbf{x}$.
\end{itemize}


Given the parameterization of the latent causal generative model above, we now introduce the following result.

\begin{theorem}[Identifiability Results]
\label{thm: identifiability11}
Suppose the observed variable $\mathbf{x}$ and latent causal variables $\mathbf{z} = (\mathbf{z}_\iota, \mathbf{z}_\nu)$ follow the generative model defined in Eqs.~\ref{eq:lcm_inv}--\ref{eq:dgp}, parameterized by $\boldsymbol{\theta} = (\mathrm{g}, \boldsymbol{\lambda}, \boldsymbol{\mu}, \boldsymbol{\beta})$. Let $\boldsymbol{\hat{\theta}} = (\mathrm{\hat{g}}, \hat{\boldsymbol{\lambda}}, \hat{\boldsymbol{\mu}}, \hat{\boldsymbol{\beta}})$ be the estimated parameters obtained by matching the conditional data distribution $p_\theta(\mathbf{x}|\mathbf{u}) = p_{\hat{\theta}}(\mathbf{x}|\mathbf{u})$ and minimizing the alignment loss. Assume the following conditions hold:

\begin{itemize}[leftmargin=2em]
    \item[\namedlabel{itm1:smooth}{(i)}] \textbf{Invertibility and Smoothness:} The unknown nonlinear mapping $\mathrm{g}$ is smooth and invertible.
    
    \item[\namedlabel{itm1:rank}{(ii)}] \textbf{Environmental Sufficiency:} There exist $2d_\nu$ distinct environments $\{\mathbf{u}_1, \dots, \mathbf{u}_m\}$ relative to a reference $\mathbf{u}_0$ such that the matrix 
    \begin{equation}
        \mathbf{L}^\top = [\Delta\boldsymbol{\eta}(\mathbf{u}_1), \dots, \Delta\boldsymbol{\eta}(\mathbf{u}_{2d_\nu})]^\top \in \mathbb{R}^{2d_\nu \times 2d_\nu}
    \end{equation}
    has full column rank $2d_\nu$, where (elementwise divisions)
\begin{equation}
\Delta\eta(\mathbf{u})
:=
\begin{pmatrix}
\frac{\boldsymbol{\mu}_\nu(\mathbf{u})}{\boldsymbol{\beta}_\nu(\mathbf{u})}
-\frac{\boldsymbol{\mu}_\nu(\mathbf{u}_0)}{\boldsymbol{\beta}_\nu(\mathbf{u}_0)}\\[2mm]
-\frac12\Big(\frac{1}{\boldsymbol{\beta}_\nu(\mathbf{u})}-\frac{1}{\boldsymbol{\beta}_\nu(\mathbf{u}_0)}\Big)
\end{pmatrix}\in\mathbb{R}^{2d_\nu}.
\end{equation}
    \item[\namedlabel{itm1:align}{(iii)}] \textbf{Optimal Alignment:} The alignment loss, e.g., Eq.~\ref{eq: align} attains its global minimum such that $\mathbf{f}_\iota(\mathbf{x}^{(\mathbf{u})}) = \mathbf{f}_\iota(\mathbf{x}^{(\mathbf{u}_0)})$ almost surely for any $\mathbf{u}, \mathbf{u}_0$, where $\mathbf{f}_\iota = \mathrm{\hat{g}}^{-1}_\iota$.
    \item[\namedlabel{itm1:lambda} {(iv)}] \textbf{Intervention Sufficiency:} The function class of $\boldsymbol{\lambda}$ satisfies the following condition: there exists $\mathbf{u}_{i}$, such that, for all parent nodes $z_j \in\mathrm{pa}_i $ of $z_i$, $\boldsymbol{\lambda}_{j,i} =0$.
\end{itemize}
Then, the true latent causal variables $\mathbf{z}$ are related to the variables $\hat{\mathbf{z}}$ estimated by matching likelihood (via the ELBO objective in Eq.~\ref{eq:elbo}) as follows:
\begin{enumerate}[leftmargin=2em]
    \item $\mathbf{z}_\nu$ is identified up to permutation and scaling, i.e., $\mathbf{z}_\nu = \mathbf{P}_\nu \hat{\mathbf{z}}_\nu + \mathbf{c}_\nu$, here $\mathbf{P}_\nu$ is a permutation matrix with scaling.
    \item $\mathbf{z}_\iota$ is identified up to a linear block transformation, i.e., $\mathbf{z}_\iota = \mathbf{A}_\iota \hat{\mathbf{z}}_\iota + \mathbf{c}_\iota$, where $\mathbf{A}_\iota$ is a non-singular matrix.
\end{enumerate}
\end{theorem}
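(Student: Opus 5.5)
We follow the standard route of identifiable-VAE / latent-causal-model arguments (in the style of \citet{liu2022identifying} and the nonlinear ICA literature). The plan has three steps: (a) identify the responsive block $\mathbf{z}_\nu$ up to an affine map from density matching; (b) sharpen that map to a scaled permutation using the triangular structure together with condition~\ref{itm1:lambda}; (c) identify the invariant block using the alignment condition~\ref{itm1:align}.

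\textbf{Step 1: exponential-family reduction.} Since $\mathrm{g}$ is smooth and invertible (condition~\ref{itm1:smooth}), matching $p_\theta(\mathbf{x}\mid\mathbf{u})=p_{\hat\theta}(\mathbf{x}\mid\mathbf{u})$ gives a diffeomorphism $\mathbf{h}=\mathrm{g}^{-1}\circ\hat{\mathrm{g}}$ with $p_{\mathbf{z}\mid\mathbf{u}}(\mathbf{h}(\hat{\mathbf{z}}))\,|\det J_{\mathbf{h}}| = p_{\hat{\mathbf{z}}\mid\mathbf{u}}(\hat{\mathbf{z}})$.
\begin{itemize}[leftmargin=2em]
\item Because $\mathbf{z}_\iota$ does not depend on $\mathbf{u}$, factor $p(\mathbf{z}\mid\mathbf{u})=p(\mathbf{z}_\iota)\,p(\mathbf{z}_\nu\mid\mathbf{z}_\iota,\mathbf{u})$.
\item Reparameterize the responsive block by its exogenous noise $\mathbf{n}_\nu=(\mathbf{I}-\boldsymbol{\lambda}_{\nu\nu}(\mathbf{u}))\mathbf{z}_\nu-\boldsymbol{\lambda}_{\nu\iota}(\mathbf{u})\mathbf{z}_\iota$. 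This map has unit Jacobian by strict lower-triangularity.
\item Write the Gaussian log-density in natural parameters $\boldsymbol{\eta}(\mathbf{u})=(\boldsymbol{\mu}_\nu/\boldsymbol{\beta}_\nu,\,-\tfrac12\boldsymbol{\beta}_\nu^{-1})$ with sufficient statistics $\mathbf{T}(\mathbf{n}_\nu)=(\mathbf{n}_\nu,\mathbf{n}_\nu^2)$.
\item Take logs and subtract the equation at $\mathbf{u}_0$ from the equation at each $\mathbf{u}_k$. The Jacobian determinant, $\log p(\mathbf{z}_\iota)$, and all $\mathbf{u}$-independent terms cancel, leaving
\[
\mathbf{L}^\top \mathbf{T}(\mathbf{n}_\nu)=\hat{\mathbf{L}}^\top\mathbf{T}(\hat{\mathbf{n}}_\nu)+\mathbf{b}.
\]
\item Condition~\ref{itm1:rank} makes $\mathbf{L}$ invertible, so $\mathbf{T}(\mathbf{n}_\nu)$ is an affine function of $\mathbf{T}(\hat{\mathbf{n}}_\nu)$.
\item Because $\mathbf{T}$ contains both linear and quadratic coordinates, the usual argument applies: differentiate twice and use that the Gaussian statistics are minimal. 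This forces $\mathbf{n}_\nu=\mathbf{P}\hat{\mathbf{n}}_\nu+\mathbf{c}$ with $\mathbf{P}$ a scaled permutation, and forces $\mathbf{n}_\nu$ to be a function of $\hat{\mathbf{n}}_\nu$ only, not of $\hat{\mathbf{z}}_\iota$.
\end{itemize}
The point where I must be careful is the complication that the natural parameters of $\mathbf{z}_\nu$ (as opposed to $\mathbf{n}_\nu$) depend on $\mathbf{z}_\iota$. I would handle it by performing the whole analysis in noise coordinates, which is legitimate since $(\mathbf{z}_\iota,\mathbf{n}_\nu)\mapsto(\mathbf{z}_\iota,\mathbf{z}_\nu)$ is a volume-preserving bijection for each $\mathbf{u}$.

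\textbf{Step 2: from noises to causal variables.} Having $\mathbf{n}_\nu\simeq\hat{\mathbf{n}}_\nu$ up to scaled permutation, I transfer the result to $\mathbf{z}_\nu$ using condition~\ref{itm1:lambda}.
\begin{itemize}[leftmargin=2em]
\item For each node $i$ there is an environment $\mathbf{u}_i$ in which all incoming weights vanish, so $z_{\nu,i}=n_{\nu,i}$ in that environment. The same holds for the estimated model, since it belongs to the same function class.
\item The relation $\mathbf{z}_\nu=(\mathbf{I}-\boldsymbol{\lambda}_{\nu\nu}(\mathbf{u}))^{-1}(\boldsymbol{\lambda}_{\nu\iota}(\mathbf{u})\mathbf{z}_\iota+\mathbf{n}_\nu)$ must hold pointwise through the $\mathbf{u}$-independent map $\mathbf{h}$.
\item Comparing it across $\mathbf{u}_i$ and the other environments, and proceeding inductively along the topological order from roots to leaves, shows the mixing coefficients must coincide.
\item Hence $\mathbf{z}_\nu=\mathbf{P}_\nu\hat{\mathbf{z}}_\nu+\mathbf{c}_\nu$. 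The cross term involving $\mathbf{z}_\iota$ is removed in the same induction, because $\mathbf{h}$ cannot depend on $\mathbf{u}$.
\end{itemize}
I expect this step to be the main obstacle. Making rigorous that a single $\mathbf{u}$-independent $\mathbf{h}$ forces the estimated weights to equal the permuted true weights needs a careful node-by-node induction.

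\textbf{Step 3: the invariant block.} By condition~\ref{itm1:align}, $\hat{\mathbf{z}}_\iota=\mathbf{f}_\iota(\mathbf{x})$ is constant across environments for matched cells, so $\hat{\mathbf{z}}_\iota$ carries no $\mathbf{u}$-dependent information.
\begin{itemize}[leftmargin=2em]
\item Combined with Step 2, $\mathbf{h}$ is block-triangular, and invertibility then gives that $\mathbf{z}_\iota$ is a function of $\hat{\mathbf{z}}_\iota$ alone.
\item Both $\mathbf{z}_\iota$ and $\hat{\mathbf{z}}_\iota$ are jointly Gaussian: the model is linear with Gaussian noise, and the marginal density of $\mathbf{z}_\iota$ is matched.
\item Matching the densities after pushforward by a diffeomorphism does not by itself pin the map to be linear. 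I would obtain the affine form $\mathbf{z}_\iota=\mathbf{A}_\iota\hat{\mathbf{z}}_\iota+\mathbf{c}_\iota$ with non-singular $\mathbf{A}_\iota$ from the linear-Gaussian parameterization of $\mathbf{z}_\nu$: $\mathbf{z}_\iota$ enters $\mathbf{z}_\nu$ linearly through $\boldsymbol{\lambda}_{\nu\iota}(\mathbf{u})$, and that dependence is matched across environments.
\end{itemize}
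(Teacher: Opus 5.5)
Your Steps~1--2 follow the paper's route: its noise-level lemma for $\mathbf{n}_\nu$, then Step~III of \citet{liu2022identifying} via (iv). Step~3 departs from the paper, and it cannot be completed under the stated hypotheses.

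\textbf{Step 3.} You are right that a diffeomorphism pushing one nondegenerate Gaussian onto another need not be affine. This is exactly where the paper's own Lemma~\ref{lem:n_iota_linear} is incorrect for $d_\iota\ge 2$. If $\mathbf{z}_\iota\sim\mathcal{N}(\mathbf{0},\mathbf{I})$, take $\phi(\mathbf{z}_\iota)=R(\|\mathbf{z}_\iota\|^2)\,\mathbf{z}_\iota$, where $R(\alpha)$ is a rotation by angle $\alpha$ in a fixed coordinate plane. This $\phi$ is a smooth, norm- and volume-preserving bijection that leaves $\mathcal{N}(\mathbf{0},\mathbf{I})$ invariant and is not affine, so the paper's conclusion $\nabla^2\mathbf{h}_\iota\equiv 0$ does not follow.

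Your replacement reads linearity off the way $\mathbf{z}_\iota$ enters $\mathbf{z}_\nu$ through $\boldsymbol{\lambda}_{\nu\iota}(\mathbf{u})$. That needs the rows of $\{\boldsymbol{\lambda}_{\nu\iota}(\mathbf{u})\}_{\mathbf{u}}$ to span $\mathbb{R}^{d_\iota}$, and nothing in (i)--(iv) provides this. In fact claim~2 is false under the stated hypotheses. Take $\boldsymbol{\lambda}_{\iota\iota}=\mathbf{0}$, $\mathbf{n}_\iota\sim\mathcal{N}(\mathbf{0},\mathbf{I})$, $\boldsymbol{\lambda}_{\nu\iota}\equiv\mathbf{0}$, any $\nu$-block parameters satisfying (ii) and (iv), and $\hat{\mathrm{g}}=\mathrm{g}\circ(\phi\times\mathrm{id})$ with all other parameters unchanged. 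Then $p_{\hat{\theta}}(\mathbf{x}\mid\mathbf{u})=p_{\theta}(\mathbf{x}\mid\mathbf{u})$, and $\mathbf{f}_\iota(\mathbf{x})=\phi^{-1}(\mathbf{z}_\iota)$ is perfectly aligned, yet $\mathbf{z}_\iota=\phi(\hat{\mathbf{z}}_\iota)$. You therefore need one of the following:
\begin{itemize}
\item an extra assumption, such as $d_\iota=1$;
\item the spanning condition above, under which your idea does work once $\mathbf{z}_\nu$ and $\mathbf{n}_\nu$ are identified in each environment;
\item or a weaker conclusion, $\mathbf{z}_\iota=\mathbf{h}_\iota(\hat{\mathbf{z}}_\iota)$ for a diffeomorphism $\mathbf{h}_\iota$, which is all the alignment argument (Lemma~\ref{lem:n_iota_block}) delivers.
\end{itemize}

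\textbf{Step 1.} There is a further gap exactly where you flagged one, and ``volume-preserving for each $\mathbf{u}$'' does not close it. The map $(\mathbf{z}_\iota,\mathbf{n}_\nu)\mapsto(\mathbf{z}_\iota,\mathbf{z}_\nu)$ is a \emph{different} bijection in each environment. The only coordinates shared across environments are $\mathbf{x}$, or equivalently $\mathbf{z}=\mathrm{g}^{-1}(\mathbf{x})$. Subtracting the log-likelihoods at $\mathbf{u}_k$ and $\mathbf{u}_0$ at the same $\mathbf{x}$ therefore compares $\mathbf{T}(\mathbf{n}_\nu^{(k)})$ with $\mathbf{T}(\mathbf{n}_\nu^{(0)})$, where $\mathbf{n}_\nu^{(k)}=(\mathbf{I}-\boldsymbol{\lambda}_{\nu\nu}(\mathbf{u}_k))\mathbf{z}_\nu-\boldsymbol{\lambda}_{\nu\iota}(\mathbf{u}_k)\mathbf{z}_\iota$. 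The difference is a quadratic in $\mathbf{z}$ whose cross terms $z_{\nu,i}z_{\nu,j}$, $z_{\nu,i}z_{\iota,k}$, $z_{\iota,k}z_{\iota,l}$ have $\boldsymbol{\lambda}(\mathbf{u}_k)$-dependent coefficients. It is not of the form $\Delta\boldsymbol{\eta}(\mathbf{u}_k)^\top\mathbf{T}(\mathbf{n}_\nu)$, and the $\mathbf{z}_\iota$-terms do not cancel.

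The paper's Lemma~\ref{lem:n_nu_perm} makes the same move, writing one $\mathbf{n}$ in both numerator and denominator, so you inherit this gap. The move is valid only if $\boldsymbol{\lambda}_{\nu\nu}$ and $\boldsymbol{\lambda}_{\nu\iota}$ coincide at $\mathbf{u}_0,\dots,\mathbf{u}_{2d_\nu}$. If they were invariant altogether, (iv) would force them to vanish. A repair would have to run the exponential-family argument in the common coordinates $\mathbf{z}=\mathbf{h}(\hat{\mathbf{z}})$. That brings in cross-product statistics and requires a rank condition on the full, $\boldsymbol{\lambda}$-dependent natural parameters, which (ii) does not supply.

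\textbf{Step 2.} Condition (iv) only guarantees that each model has \emph{some} environment that kills the incoming weights of a given node. Your use of the \emph{same} $\mathbf{u}_i$ for the estimated node $\pi(i)$ is an additional assumption, not a consequence of the shared function class.
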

\begin{proof}
    Proof can be found in App.~\ref{app: identi}.
\end{proof}
Refer to Sec.~\ref{app: justify} for a justification of the assumptions.

\textbf{Discussion.}
The identifiability result in Theorem~\ref{thm: identifiability11} characterizes the conditions under which perturbation-specific causal variables can be reliably recovered from partial-intervention data. Importantly, these conditions directly inform how the PerturbedVAE framework is instantiated.

\textbf{Contrastive Loss in Eq.~\ref{eq:final_obj}.} Theorem~\ref{thm: identifiability11} highlights the importance of preventing perturbation-induced variation from being absorbed into invariant representations, by assumption~\ref{itm1:align}. However, this requirement is often overlooked in prior causal representation learning approaches~\citep{zhang2023identifiability,de2025interpretable,lopez2022learning}. As a result, these methods may struggle in practice under partial-intervention settings. Further, assumption~\ref{itm1:align} is aligned with the \emph{contrastive alignment} objective, which explicitly enforces Assumption~\ref{itm1:align} by encouraging the inferred invariant representation $\hat{\mathbf{z}}_\iota = \mathbf{f}_\iota(\mathbf{x})$ to remain identical across perturbed and unperturbed conditions. As a result, environment-dependent information cannot be explained away by $\mathbf{z}_\iota$ and is instead forced to be captured by the perturbation-responsive variables $\mathbf{z}_\nu$.

\textbf{ELBO Loss in Eq.~\ref{eq:final_obj}.} Theorem~\ref{thm: identifiability11} not only clarifies the role of environmental diversity in achieving identifiability, but also directly constrains how the latent variable model should be parameterized, thereby guiding the concrete construction of the ELBO objective in Eq.~\ref{eq:final_obj}. Specifically, under the proposed latent causal generative model (Eqs.~\ref{eq:lcm_inv}-\ref{eq:dgp}), both the prior and posterior of the perturbation-responsive variables $\mathbf{z}_\nu$ are parameterized as linear Gaussian structural causal models conditioned on the perturbation label $\mathbf{u}$. This environment-dependent parameterization is required by the identifiability conditions and is explicitly reflected in the ELBO through perturbation-conditioned distributional parameters. In contrast, the invariant variables $\mathbf{z}_\iota$ are identifiable only up to a linear block transformation. Accordingly, we model $\mathbf{z}_\iota$ using an i.i.d.\ Gaussian form and omit additional structural constraints for simplicity. See Sec.~\ref{app:elbo} for a specific implementation of the proposed PerturbedVAE.

\section{Empirical Findings}
\label{sec: exp}

\subsection{Numerical Simulation}
We first conduct simulations to verify our theoretical results under controllable assumptions. To this end, we generate synthetic data according to our latent causal generative model in Eqs.~\ref{eq:lcm_inv}-~\ref{eq:dgp}. More details can be found in App.~\ref{app:simulation}. This setup allows us to systematically assess the recovery of the latent subspace over $\mathbf{z}_\iota$, and causal structure recovery over latent perturbed variables $\mathbf{z}_\nu$. For evaluation, following \citet{sorrenson2020disentanglement,khemakhem2020variational}, we use the mean correlation coefficient (MCC) to quantify component-wise recovery of $\mathbf{z}_\nu$. Specifically, MCC measures the correlation between each learned component of $\mathbf{z}_\nu$ and its corresponding ground-truth component, with a value of 1 indicating perfect recovery. For block identifiability evaluation of $\mathbf{z}_\iota$, we report the regression $R^2$, following \citet{von2021self}, which measures the correlation between the learned block and its ground-truth counterpart. Values closer to 1 are better.

\begin{table} 
\centering
\caption{Results on simulated data. 
MCC evaluates component-wise recovery of $\mathbf{z}_\nu$, and $R^2$ evaluates block-level recovery of $\mathbf{z}_\iota$. 
Contrastive alignment consistently improves identifiability and disentanglement.}
\resizebox{0.85\linewidth}{!}{
\renewcommand{\arraystretch}{1.2} 
\begin{tabular}{@{}c ccc@{}}
\toprule
\multirow{3}{*}{\begin{tabular}[c]{@{}c@{}}\\\textbf{Contrastive}\\ \textbf{Alignment}\end{tabular}}
  & \textbf{MCC} & \multicolumn{2}{c}{$\boldsymbol{R^2}$} \\
\cmidrule(l){2-4}
  & \begin{tabular}[c]{@{}c@{}}\textbf{Var.} $\mathbf{z}_\nu$\\ (identifiable)\end{tabular} &
    \begin{tabular}[c]{@{}c@{}}\textbf{Var.} $\mathbf{z}_\nu$\\ (block-identifiable)\end{tabular} &
    \begin{tabular}[c]{@{}c@{}}\textbf{Inv.} $\mathbf{z}_\iota$\\ (block-identifiable)\end{tabular} \\
\midrule
\xmark & $0.81_{\pm 0.0306}$ & $0.93_{\pm 0.0120}$ & $0.66_{\pm 0.0281}$ \\
\cmark & $\mathbf{0.86}_{\pm 0.0285}$ & $\mathbf{0.95}_{\pm 0.0020}$ & $\mathbf{0.97}_{\pm 0.0077}$ \\
\bottomrule
\end{tabular}}
\label{tab:simu}
\end{table}

Table~\ref{tab:simu} shows that the contrastive alignment term substantially improves identifiability. For the variant block $\mathbf{z}_\nu$, MCC increases from $0.81$ to $0.86$ and block-wise $R^2$ from $0.93$ to $0.95$, indicating more accurate recovery of intervention-specific factors. The effect is even more pronounced for the invariant block $\mathbf{z}_\iota$, whose $R^2$ rises from $0.66$ to $0.97$, highlighting the crucial role of contrastive alignment in disentangling invariant programs from perturbation-induced effects. These results confirm our theoretical claims as stated in Theorem~\ref{thm: identifiability11}.

\subsection{Experiments on Real Data}
\label{sec:exp_fm}
For real-world perturbation data, we consider the large-scale Perturb-seq dataset from~\citep{norman2019exploring} (See App.~\ref{app: realdata} for more details), following previous works \cite{zhang2023identifiability,de2025interpretable,lopez2022learning,bereket2023modelling}. We design two types of comparisons. First, we compare our method with FMs to evaluate whether the proposed method provides advantages over large pretrained models. Second, we compare with existing methods based on causal representation learning, to assess the benefits of our contrastive and causal modeling components beyond causal modeling alone.

\begin{table}[h]

\centering
\caption{Double-gene perturbation prediction results. RMSE ($\downarrow$) and $R^2$ ($\uparrow$) demonstrate that our method achieves improved generalization to combinatorial perturbations compared to others.}
\setlength{\tabcolsep}{8pt}
\renewcommand{\arraystretch}{1.2}
\resizebox{\linewidth}{!}{
\begin{tabular}{ccc}
\hline
\multirow{2}{*}{\textbf{Method}} & \multicolumn{2}{c}{\textbf{Metrics}}            \\ \cline{2-3} 
                                 & \textbf{RMSE}         & {$\mathbf{R^2}$} \\ \hline
\textbf{scFoundation}~\citep{hao2024large} & $0.5714_{\pm 0.0105}$                    & $0.9844_{\pm 0.006}$                     \\
\textbf{UCE}~\citep{rosen2023universal}               & $0.5634_{\pm 0.0039}$                    & $\underline{0.9857_{\pm 0.0006}}$                    \\
\textbf{Geneformer}~\citep{theodoris2023transfer}          & $0.6132_{\pm 0.0322}$                    & $0.9728_{\pm 0.0015}$                   \\ 
\textbf{STATE}~\citep{adduri2025predicting}
& $0.4981_{\pm 0.0046}$ 
& $0.9475_{\pm 0.0021}$ \\ 
\hline
\textbf{RandomForest}~\citep{breiman2001random}            & $0.4931_{\pm 0.0003}$ & $0.9800_{\pm 0.0005}$   \\
{\textbf{ElasticNet}}~\citep{zou2005regularization}        & $0.4929_{\pm 0.0000}$ & $0.9795_{\pm 0.0000}$   \\
\textbf{KNN}~\citep{cover1967nearest}                     &  $\underline{0.4894_{\pm 0.0000}}$ & $0.9843_{\pm 0.000}$  \\ \hline
\textbf{PerturbedVAE (Ours)}                           & $\mathbf{{0.4474}_{\pm 0.0007}}$ & $\mathbf{{0.9865}_{\pm 0.0009}}$ \\ \hline
\end{tabular}}
\label{tab:FM_performance}
\end{table}

\textbf{Compared With FMs.}
We benchmark {PerturbedVAE} against widely used single-cell FMs, including {scFoundation}~\citep{hao2024large}, {UCE}~\citep{rosen2023universal}, and {Geneformer}~\citep{theodoris2023transfer}, {STATE}~\citep{adduri2025predicting}. For each model, we follow the recommended fine-tuning and inference protocol. All methods are evaluated under the same data splits and metrics. Under cell-wise evaluation, Table~\ref{tab:FM_performance} shows that FMs degrade substantially in the setting of double-gene perturbations, whereas {PerturbedVAE} achieves consistently better performance across five random seeds. Together with our diagnostic analysis in Sec.~\ref{sec: obs} (Figure~\ref{fig:collapse}), these results support that FM representations retain limited perturbation-specific information, which in turn constrains their ability to support accurate perturbation prediction. 
In contrast, the proposed method explicitly preserves perturbation-specific information, and more importantly, effectively utilizes it by causal modeling, leading to improved performance.

\textbf{Compared With Simple Baselines.} We also compared against simple baselines, including RandomForest~\citep{breiman2001random}, 
ElasticNet~\citep{zou2005regularization}, and KNN~\citep{cover1967nearest}. 
Recent benchmarking studies have shown that, in certain settings, FMs do not consistently outperform such simple baselines~\citep{csendes2025benchmarking}. In contrast, our proposed method achieves superior performance compared to these baselines, as shown in Table~\ref{tab:FM_performance}, suggesting that incorporating inductive biases and learning perturbation-aware representations may lead to improved generalization.

\textbf{Perspective on the Additive Linear Baseline.}
Recent work by \citet{ahlmann2025deep} reports that a classical additive linear model can achieve surprisingly strong performance in combinatorial perturbation settings. We therefore compare PerturbedVAE with the additive model, with full results reported in App.~\ref{app:perspective}. Consistent with prior observations~\citep{ahlmann2025deep}, the additive model performs competitively on this dataset, suggesting that many combinatorial perturbation effects are approximately linear and exhibit weak interactions.

At the cell level, however, the additive model exhibits a different trade-off. As shown in Table~\ref{tab:additive_cell_main}, it attains a slightly lower RMSE on genome-wide expression profiles, but its cell-level $R^2$ is negative and therefore not informative. In contrast, PerturbedVAE achieves a valid and high cell-level $R^2$, suggesting better preservation of cell-level explained variance under double-gene OOD prediction.

Nevertheless, such linear compositionality is not guaranteed to hold in general, particularly in the presence of nonlinear gene--gene interactions or context-dependent effects. Our analysis therefore clarifies both the strengths and limitations of simple additive models, and positions PerturbedVAE as a structured alternative that remains applicable when linearity assumptions break down. For completeness, we also report supplementary PCA/scVI-based additive controls in App.~\ref{app:pca_scvi_additive}.

\begin{table}[t]
\centering
\caption{
Comparison with the additive linear baseline on genome-wide expression profiles under double-gene perturbation prediction results.
A dash (--) indicates negative $R^2$.}
\label{tab:additive_cell_main}
\small
\setlength{\tabcolsep}{8pt}
\begin{tabular}{lcc}
\toprule
Method & RMSE $\downarrow$ & $R^2$ $\uparrow$ \\
\midrule
Additive Linear
& $\mathbf{0.4424_{\pm 0.0000}}$
& -- \\
PerturbedVAE
& $0.4494_{\pm 0.0008}$
& $\mathbf{0.9840_{\pm 0.0011}}$ \\
\bottomrule
\end{tabular}
\end{table}

\subsection{Compared with Existing Latent Causal Models}
\textbf{Experimental Setup.}
To compare with existing methods that learn causal representations, we follow the experimental protocol of \citet{zhang2023identifiability} (See App.~\ref{app: realdata} for more details). We benchmark {PerturbedVAE} against four representative baselines, Discrepancy-VAE~\citep{zhang2023identifiability}, SENA-discrepancy-VAE (SENA)~\citep{de2025interpretable}, sVAE+~\citep{lopez2022learning}, {SAMS-VAE}~\citep{bereket2023modelling}, reporting results averaged over five random seeds for each model. We also implement a variant of the proposed {PerturbedVAE}, namely PerturbedVAE (w/o Align), which excludes the contrastive term.

\textbf{Single-Gene Perturbation.} 
We first assess whether the compared latent causal representation models can reproduce observed single-gene perturbation responses. On Norman2019, we evaluate generative fidelity using the 14 single-gene conditions with more than 800 cells. For each condition, we generate 96 synthetic cells from the trained model and compare them to 96 held-out real cells not used during training. Performance is evaluated using population-level $R^2$ and RMSE across all genes (Table~\ref{tab:rmse_single_double}). Our model achieves consistently high fidelity, with an average $R^2$ of 0.99 across the 14 conditions (Left in Figure~\ref{fig:gene_r2_pair}), indicating that it reproduces the mean perturbation response. 

We further evaluate the latent causal model comparison on \citet{replogle2022mapping} as an additional cross-dataset single-gene i.i.d. check. PerturbedVAE again shows the strongest representation-learning performance, with full results in App.~\ref{app:replogle_iid}.

\begin{figure}[H]
  \centering
  \includegraphics[width=0.5\linewidth]{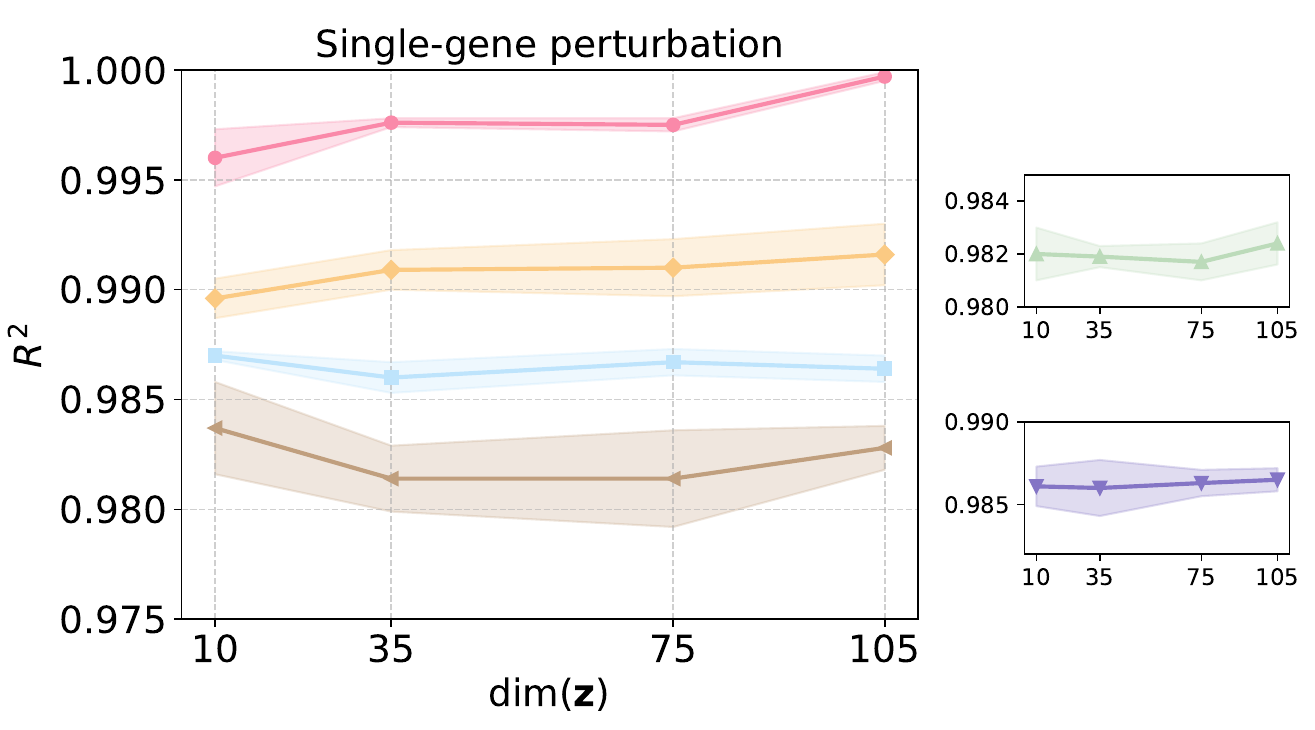}\hfill
  \includegraphics[width=0.5\linewidth]{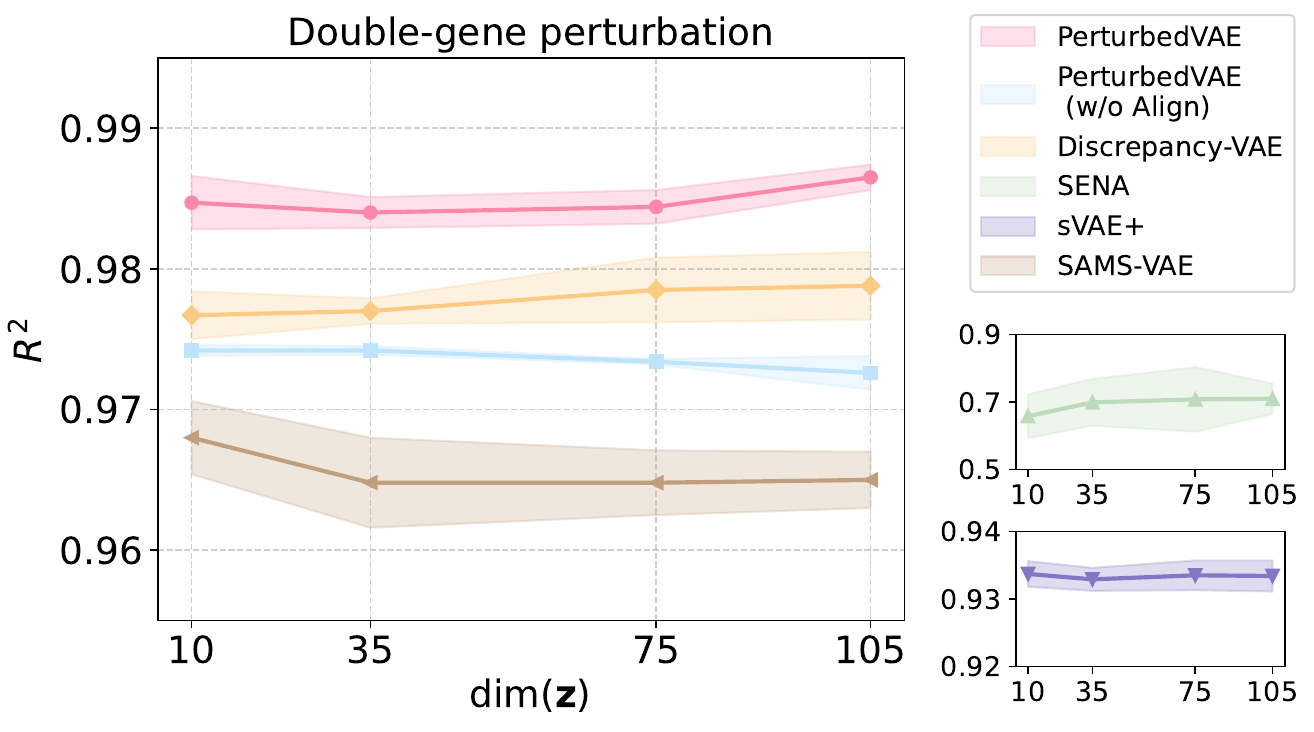}
  \caption{$R^2$ scores for genetic perturbation prediction across different latent dimensions $\mathrm{dim}(\mathbf{z})$.
\textbf{Left:} single-gene perturbations.
\textbf{Right:} double-gene perturbations.
The proposed method consistently achieves higher $R^2$ and remains stable as $\mathrm{dim}(\mathbf{z})$ increases. Shaded areas denote variation across runs.}
  \label{fig:gene_r2_pair}
\end{figure}

\begin{table*}[t]
\centering
\caption{RMSE (lower is better) for single- and double-gene perturbation prediction across different numbers of perturbed genes. Mean ± standard deviation over multiple runs are reported. Compared to existing latent causal representation methods, PerturbedVAE yields consistently lower errors, and the alignment mechanism further improves generalization to unseen double-gene perturbations.}
\label{tab:rmse_single_double}
\footnotesize
\setlength{\tabcolsep}{3pt}
\renewcommand{\arraystretch}{1.15}
\resizebox{\linewidth}{!}{
\begin{tabular}{lcccccccc}
\toprule
\multirow{2}{*}{\textbf{Method}} 
& \multicolumn{4}{c}{\textbf{Single-gene}} 
& \multicolumn{4}{c}{\textbf{Double-gene}} \\
\cmidrule(lr){2-5} \cmidrule(lr){6-9}
& \textbf{10} & \textbf{35} & \textbf{75} & \textbf{105}
& \textbf{10} & \textbf{35} & \textbf{75} & \textbf{105} \\
\midrule
Discrepancy-VAE~\citep{zhang2023identifiability} 
& ${0.5603}_{\pm 0.0030}$ & ${0.5560}_{\pm 0.0027}$ & ${0.5582}_{\pm 0.0038}$ & ${0.5558}_{\pm 0.0022}$
& ${0.6084}_{\pm 0.0045}$ & ${0.6037}_{\pm 0.0025}$ & ${0.6075}_{\pm 0.0072}$ & ${0.6082}_{\pm 0.0045}$ \\

SENA~\citep{de2025interpretable} 
& ${0.5839}_{\pm 0.0021}$ & ${0.5837}_{\pm 0.0086}$ & ${0.5778}_{\pm 0.0109}$ & ${0.5837}_{\pm 0.0074}$
& ${0.8573}_{\pm 0.0205}$ & ${0.8514}_{\pm 0.0248}$ & ${0.8507}_{\pm 0.0396}$ & ${0.8483}_{\pm 0.0248}$ \\

sVAE+~\citep{lopez2022learning} 
& ${0.5012}_{\pm 0.0018}$ & ${0.5005}_{\pm 0.0025}$ & ${0.5003}_{\pm 0.0024}$ & ${0.5002}_{\pm 0.0022}$
& ${0.5663}_{\pm 0.0009}$ & ${0.5667}_{\pm 0.0008}$ & ${0.5665}_{\pm 0.0011}$ & ${0.5664}_{\pm 0.0012}$ \\

SAMS-VAE~\citep{bereket2023modelling} 
& ${0.4114}_{\pm 0.0020}$ & ${0.4136}_{\pm 0.0019}$ & ${0.4140}_{\pm 0.0022}$ & $\underline{{0.4123}_{\pm 0.0029}}$
& ${0.4605}_{\pm 0.0020}$ & ${0.4631}_{\pm 0.0024}$ & ${0.4632}_{\pm 0.0017}$ & ${0.4629}_{\pm 0.0014}$ \\

\midrule
\textbf{PerturbedVAE (w/o Align) (ours)} 
& $\underline{{0.4098}_{\pm 0.0001}}$ & $\underline{{0.4115}_{\pm 0.0008}}$ & $\underline{{0.4115}_{\pm 0.0005}}$ & ${{0.4155}_{\pm 0.0038}}$
& $\underline{{0.4557}_{\pm 0.0005}}$ & $\underline{{0.4563}_{\pm 0.0005}}$ & $\underline{{0.4577}_{\pm 0.0005}}$ & $\underline{{0.4623}_{\pm 0.0041}}$ \\

\textbf{PerturbedVAE (ours)} 
& $\mathbf{{0.4027}_{\pm 0.0028}}$ & $\mathbf{{0.3998}_{\pm 0.0013}}$ & $\mathbf{{0.3997}_{\pm 0.0013}}$ & $\mathbf{{0.3995}_{\pm 0.0013}}$
& $\mathbf{{0.4493}_{\pm 0.0019}}$ & $\mathbf{{0.4494}_{\pm 0.0008}}$ & $\mathbf{{0.4489}_{\pm 0.0009}}$ & $\mathbf{{0.4474}_{\pm 0.0007}}$ \\

\bottomrule
\end{tabular}}
\end{table*}

\textbf{Double-Gene Perturbation.}
We further evaluate 112 double-gene perturbations, which constitute a zero-shot prediction setting as no combinatorial conditions are observed during training. For each perturbation, we compare the population-average expression profile of generated cells with that of the held-out real cells. Despite this challenge, PerturbedVAE achieves strong performance, with an average $R^2$ of 0.98 across all genes (right panel of Figure~\ref{fig:gene_r2_pair}), indicating that it successfully composes knowledge from single-gene interventions to predict unseen combinatorial effects. 
Consistently low RMSE values in Table~\ref{tab:rmse_single_double} further show that the model preserves absolute expression magnitudes under out-of-distribution conditions.

Taken together, these results demonstrate that PerturbedVAE not only outperforms existing methods under observed single-gene perturbations, but also generalizes reliably to unseen combinatorial perturbations, highlighting the benefit of explicitly isolating perturbation-specific signals and aligning with our theoretical analysis.

\subsection{Representation Diagnostics and Interpretability}
\textbf{Unperturbed Latent Subspace.} 
For the invariant block $\mathbf{z}_\iota$, we examine whether its representation remains stable across perturbations, as a qualitative check of the intended block-level invariance property. Specifically, we analyze all perturbation conditions in the test set and visualize the inferred $\mathbf{z}_\iota$ representations, which are provided in App.~\ref{app:latent space}.

\textbf{Perturbed Latent Subspace: Biological Plausibility Check.}
Complementary to the unperturbed latent subspace analysis above, we perform an analysis of perturbed latent space; we first provide the learned causal graph by {PerturbedVAE} to assess whether it captures biologically meaningful regulatory patterns. Following \citet{zhang2023identifiability}, we derive a program-level directed acyclic graph by assigning each latent program to a target gene using a simple heuristic based on maximal intervention effect. The resulting graph (Figure~\ref{fig:structure_gene_a}) recovers several well-established regulatory interactions, including the TGFBR2$\to$SNAI1 axis involved in epithelial–mesenchymal transition (EMT)~\citep{vincent2009snail1, fan2025identification}, the canonical TP73$\to$CDKN1A tumor suppressor pathway governing cell-cycle arrest~\citep{schmidt2021p53}, and the inhibitory regulation of JUN by DUSP9 in MAPK signaling~\citep{emanuelli2008overexpression}. While this analysis is not intended as a comprehensive biological validation, the recovery of these known mechanisms provides supportive evidence that the learned latent structure is not arbitrary and reflects meaningful aspects of underlying regulatory processes, consistent with the goal of learning structured and causally interpretable representations. More details and further analysis can be found in App.~\ref{app:structure learning}.

\textbf{Perturbed Latent Subspace: Information Preservation.}
We further evaluate whether perturbation-specific information is preserved by analyzing performance on the top 20 differentially expressed (DE) genes under perturbation, which form a perturbation-enriched 20-dimensional subspace. This serves as a diagnostic probe of whether perturbation-induced variation is retained rather than suppressed by invariant background programs. Results are reported in App.~\ref{app:de genes}.

\begin{figure}[htbp]
\centering
\includegraphics[width=0.75\linewidth]{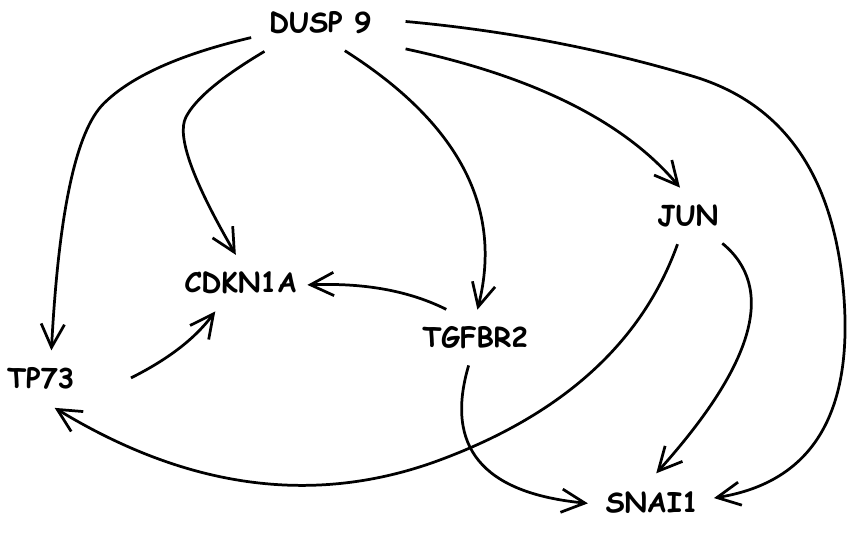}
\caption{Learned causal structure over the perturbation-responsive latent variables $\mathbf{z}_\nu$. Nodes correspond to latent programs assigned to target genes via maximal intervention effect, and directed edges indicate inferred causal dependencies. Several known regulatory relationships are recovered.}
\label{fig:structure_gene_a}
\end{figure}


\section{Conclusion}
We showed that the failure of foundation models in perturbation prediction is not primarily a matter of scale, but stems from a mismatch between generic training objectives and the sparse, intervention-driven structure of perturbation data. Motivated by this, we proposed PerturbedVAE, which introduces explicit inductive biases to disentangle invariant and perturbation-responsive factors and to organize the latter via a latent causal structure. We provided theoretical identifiability results and empirical evidence on both synthetic and real datasets demonstrating accurate prediction and interpretable structure. Our results suggest that a good representation for single-cell perturbation prediction is one that is perturbation-aware and causally structured, enabling reliable generalization under interventions.

\section*{Acknowledgment}
The work is partly supported by the Responsible AI Research Centre and ARC Discovery Project DP240103278. 

\section*{Impact Statement}

This paper presents work whose goal is to advance the field of machine learning. There are many potential societal consequences of our work, none of which we feel must be specifically highlighted here.

\bibliography{reference}

@article{norman2019exploring,
  title={Exploring genetic interaction manifolds constructed from rich single-cell phenotypes},
  author={Norman, Thomas M and Horlbeck, Max A and Replogle, Joseph M and Ge, Alex Y and Xu, Albert and Jost, Marco and Gilbert, Luke A and Weissman, Jonathan S},
  journal={Science},
  volume={365},
  number={6455},
  pages={786--793},
  year={2019},
  publisher={American Association for the Advancement of Science}
}

@article{gilbert2014genome,
  title={Genome-scale CRISPR-mediated control of gene repression and activation},
  author={Gilbert, Luke A and Horlbeck, Max A and Adamson, Britt and Villalta, Jacqueline E and Chen, Yuwen and Whitehead, Evan H and Guimaraes, Carla and Panning, Barbara and Ploegh, Hidde L and Bassik, Michael C and others},
  journal={Cell},
  volume={159},
  number={3},
  pages={647--661},
  year={2014},
  publisher={Elsevier}
}

@article{ahlmann2025deep,
  title={Deep-learning-based gene perturbation effect prediction does not yet outperform simple linear baselines},
  author={Ahlmann-Eltze, Constantin and Huber, Wolfgang and Anders, Simon},
  journal={Nature Methods},
  volume={22},
  number={8},
  pages={1657--1661},
  year={2025},
  publisher={Nature Publishing Group US New York}
}

@article{breiman2001random,
  title={Random forests},
  author={Breiman, Leo},
  journal={Machine learning},
  volume={45},
  number={1},
  pages={5--32},
  year={2001},
  publisher={Springer}
}

@article{cover1967nearest,
  title={Nearest neighbor pattern classification},
  author={Cover, Thomas and Hart, Peter},
  journal={IEEE transactions on information theory},
  volume={13},
  number={1},
  pages={21--27},
  year={1967},
  publisher={IEEE}
}

@article{zou2005regularization,
  title={Regularization and variable selection via the elastic net},
  author={Zou, Hui and Hastie, Trevor},
  journal={Journal of the Royal Statistical Society Series B: Statistical Methodology},
  volume={67},
  number={2},
  pages={301--320},
  year={2005},
  publisher={Oxford University Press}
}

@article{csendes2025benchmarking,
  title={Benchmarking foundation cell models for post-perturbation RNA-seq prediction},
  author={Csendes, Gerold and Sanz, Gema and Szalay, Krist{\'o}f Z and Szalai, Bence},
  journal={BMC genomics},
  volume={26},
  number={1},
  pages={393},
  year={2025},
  publisher={Springer}
}

@article{theodoris2023transfer,
  title={Transfer learning enables predictions in network biology},
  author={Theodoris, Christina V and Xiao, Ling and Chopra, Anant and Chaffin, Mark D and Al Sayed, Zeina R and Hill, Matthew C and Mantineo, Helene and Brydon, Elizabeth M and Zeng, Zexian and Liu, X Shirley and others},
  journal={Nature},
  volume={618},
  number={7965},
  pages={616--624},
  year={2023},
  publisher={Nature Publishing Group UK London}
}

@article{yang2022scbert,
  title={scBERT as a large-scale pretrained deep language model for cell type annotation of single-cell RNA-seq data},
  author={Yang, Fan and Wang, Wenchuan and Wang, Fang and Fang, Yuan and Tang, Duyu and Huang, Junzhou and Lu, Hui and Yao, Jianhua},
  journal={Nature Machine Intelligence},
  volume={4},
  number={10},
  pages={852--866},
  year={2022},
  publisher={Nature Publishing Group UK London}
}

@article{hao2024large,
  title={Large-scale foundation model on single-cell transcriptomics},
  author={Hao, Minsheng and Gong, Jing and Zeng, Xin and Liu, Chiming and Guo, Yucheng and Cheng, Xingyi and Wang, Taifeng and Ma, Jianzhu and Zhang, Xuegong and Song, Le},
  journal={Nature methods},
  volume={21},
  number={8},
  pages={1481--1491},
  year={2024},
  publisher={Nature Publishing Group US New York}
}

@inproceedings{saunshi2019theoretical,
  title={A theoretical analysis of contrastive unsupervised representation learning},
  author={Saunshi, Nikunj and Plevrakis, Orestis and Arora, Sanjeev and Khodak, Mikhail and Khandeparkar, Hrishikesh},
  booktitle={International conference on machine learning},
  pages={5628--5637},
  year={2019},
  organization={PMLR}
}

@inproceedings{wang2020understanding,
  title={Understanding contrastive representation learning through alignment and uniformity on the hypersphere},
  author={Wang, Tongzhou and Isola, Phillip},
  booktitle={International conference on machine learning},
  pages={9929--9939},
  year={2020},
  organization={PMLR}
}

@article{zhang2024causal,
  title={Causal representation learning from multiple distributions: A general setting},
  author={Zhang, Kun and Xie, Shaoan and Ng, Ignavier and Zheng, Yujia},
  journal={arXiv preprint arXiv:2402.05052},
  year={2024}
}

@article{cui2024scgpt,
  title={scGPT: toward building a foundation model for single-cell multi-omics using generative AI},
  author={Cui, Haotian and Wang, Chloe and Maan, Hassaan and Pang, Kuan and Luo, Fengning and Duan, Nan and Wang, Bo},
  journal={Nature methods},
  volume={21},
  number={8},
  pages={1470--1480},
  year={2024},
  publisher={Nature Publishing Group US New York}
}

@article{zhang2023identifiability,
  title={Identifiability guarantees for causal disentanglement from soft interventions},
  author={Zhang, Jiaqi and Greenewald, Kristjan and Squires, Chandler and Srivastava, Akash and Shanmugam, Karthikeyan and Uhler, Caroline},
  journal={Advances in Neural Information Processing Systems},
  volume={36},
  pages={50254--50292},
  year={2023}
}

@article{lippe2022causal,
  title={Causal representation learning for instantaneous and temporal effects in interactive systems},
  author={Lippe, Phillip and Magliacane, Sara and L{\"o}we, Sindy and Asano, Yuki M and Cohen, Taco and Gavves, Efstratios},
  journal={arXiv preprint arXiv:2206.06169},
  year={2022}
}

@inproceedings{lippe2022citris,
  title={Citris: Causal identifiability from temporal intervened sequences},
  author={Lippe, Phillip and Magliacane, Sara and L{\"o}we, Sindy and Asano, Yuki M and Cohen, Taco and Gavves, Stratis},
  booktitle={International Conference on Machine Learning},
  pages={13557--13603},
  year={2022},
  organization={PMLR}
}

@article{brehmer2022weakly,
  title={Weakly supervised causal representation learning},
  author={Brehmer, Johann and De Haan, Pim and Lippe, Phillip and Cohen, Taco S},
  journal={Advances in Neural Information Processing Systems},
  volume={35},
  pages={38319--38331},
  year={2022}
}

@article{von2021self,
  title={Self-supervised learning with data augmentations provably isolates content from style},
  author={Von K{\"u}gelgen, Julius and Sharma, Yash and Gresele, Luigi and Brendel, Wieland and Sch{\"o}lkopf, Bernhard and Besserve, Michel and Locatello, Francesco},
  journal={Advances in neural information processing systems},
  volume={34},
  pages={16451--16467},
  year={2021}
}

@article{hyvarinen1999nonlinear,
  title={Nonlinear independent component analysis: Existence and uniqueness results},
  author={Hyv{\"a}rinen, Aapo and Pajunen, Petteri},
  journal={Neural networks},
  volume={12},
  number={3},
  pages={429--439},
  year={1999},
  publisher={Elsevier}
}

@incollection{hyvarinen2001independent,
  title={Independent component analysis},
  author={Hyv{\"a}rinen, Aapo and Hurri, Jarmo and Hoyer, Patrik O},
  booktitle={Natural Image Statistics: A Probabilistic Approach to Early Computational Vision},
  pages={151--175},
  year={2001},
  publisher={Springer}
}

@article{liu2022identifying,
  title={Identifying weight-variant latent causal models},
  author={Liu, Yuhang and Zhang, Zhen and Gong, Dong and Gong, Mingming and Huang, Biwei and van den Hengel, Anton and Zhang, Kun and Shi, Javen Qinfeng},
  journal={Journal of Machine Learning Research},
  volume={27},
  number={4},
  pages={1--49},
  year={2026}
}

@inproceedings{
liu2026i,
title={I Predict Therefore I Am: Is Next Token Prediction Enough to Learn Human-Interpretable Concepts from Data?},
author={Yuhang Liu and Dong Gong and Yichao Cai and Erdun Gao and Zhen Zhang and Biwei Huang and Mingming Gong and Anton van den Hengel and Javen Qinfeng Shi},
booktitle={The Fourteenth International Conference on Learning Representations},
year={2026}
}

@inproceedings{
liu2026beyond,
title={Beyond {DAG}s: A Latent Partial Causal Model for Multimodal Learning},
author={Yuhang Liu and Zhen Zhang and Dong Gong and Erdun Gao and Biwei Huang and Mingming Gong and Anton van den Hengel and Kun Zhang and Javen Qinfeng Shi},
booktitle={The Fourteenth International Conference on Learning Representations},
year={2026},
}

@article{
liu2025latent,
title={Latent Covariate Shift: Unlocking Partial Identifiability for Multi-Source Domain Adaptation},
author={Yuhang Liu and Zhen Zhang and Dong Gong and Mingming Gong and Biwei Huang and Anton van den Hengel and Kun Zhang and Javen Qinfeng Shi},
journal={Transactions on Machine Learning Research},
issn={2835-8856},
year={2025},
}

@article{liu2024towards,
  title={Towards Identifiable Latent Additive Noise Models},
  author={Liu, Yuhang and Zhang, Zhen and Gong, Dong and Gao, Erdun and Huang, Biwei and Gong, Mingming and Hengel, Anton van den and Zhang, Kun and Shi, Javen Qinfeng},
  journal={arXiv preprint arXiv:2403.15711},
  year={2024}
}

@article{hyvarinen2016unsupervised,
  title={Unsupervised feature extraction by time-contrastive learning and nonlinear ica},
  author={Hyvarinen, Aapo and Morioka, Hiroshi},
  journal={Advances in neural information processing systems},
  volume={29},
  year={2016}
}

@article{huang2024predicting,
  title={Predicting single-cell cellular responses to perturbations using cycle consistency learning},
  author={Huang, Wei and Liu, Hui},
  journal={Bioinformatics},
  volume={40},
  number={Supplement\_1},
  pages={i462--i470},
  year={2024},
  publisher={Oxford University Press}
}

@inproceedings{squires2023linear,
  title={Linear causal disentanglement via interventions},
  author={Squires, Chandler and Seigal, Anna and Bhate, Salil S and Uhler, Caroline},
  booktitle={International conference on machine learning},
  pages={32540--32560},
  year={2023},
  organization={PMLR}
}

@inproceedings{hyvarinen2017nonlinear,
  title={Nonlinear ICA of temporally dependent stationary sources},
  author={Hyvarinen, Aapo and Morioka, Hiroshi},
  booktitle={Artificial intelligence and statistics},
  pages={460--469},
  year={2017},
  organization={PMLR}
}

@inproceedings{
liu2023identifiable,
title={Identifiable Latent Polynomial Causal Models through the Lens of Change},
author={Yuhang Liu and Zhen Zhang and Dong Gong and Mingming Gong and Biwei Huang and Anton van den Hengel and Kun Zhang and Javen Qinfeng Shi},
booktitle={The Twelfth International Conference on Learning Representations},
year={2024}
}

@article{sorrenson2020disentanglement,
  title={Disentanglement by nonlinear ica with general incompressible-flow networks (gin)},
  author={Sorrenson, Peter and Rother, Carsten and K{\"o}the, Ullrich},
  journal={arXiv preprint arXiv:2001.04872},
  year={2020}
}

@inproceedings{khemakhem2020variational,
  title={Variational autoencoders and nonlinear ica: A unifying framework},
  author={Khemakhem, Ilyes and Kingma, Diederik and Monti, Ricardo and Hyvarinen, Aapo},
  booktitle={International conference on artificial intelligence and statistics},
  pages={2207--2217},
  year={2020},
  organization={PMLR}
}

@misc{kingma2013auto,
  title={Auto-encoding variational bayes},
  author={Kingma, Diederik P and Welling, Max and others},
  year={2013},
  publisher={Banff, Canada}
}

@inproceedings{rezende2014stochastic,
  title={Stochastic backpropagation and approximate inference in deep generative models},
  author={Rezende, Danilo Jimenez and Mohamed, Shakir and Wierstra, Daan},
  booktitle={International conference on machine learning},
  pages={1278--1286},
  year={2014},
  organization={PMLR}
}

@article{mao2024learning,
  title={Learning identifiable factorized causal representations of cellular responses},
  author={Mao, Haiyi and Lopez, Romain and Liu, Kai and Huetter, Jan-Christian and Richmond, David and Benos, Panayiotis and Qiu, Lin},
  journal={Advances in Neural Information Processing Systems},
  volume={37},
  pages={121630--121669},
  year={2024}
}

@article{von2024nonparametric,
  title={Nonparametric identifiability of causal representations from unknown interventions},
  author={von K{\"u}gelgen, Julius and Besserve, Michel and Wendong, Liang and Gresele, Luigi and Keki{\'c}, Armin and Bareinboim, Elias and Blei, David and Sch{\"o}lkopf, Bernhard},
  journal={Advances in Neural Information Processing Systems},
  volume={36},
  year={2023}
}

@article{lotfollahi2019scgen,
  title={scGen predicts single-cell perturbation responses},
  author={Lotfollahi, Mohammad and Wolf, F Alexander and Theis, Fabian J},
  journal={Nature methods},
  volume={16},
  number={8},
  pages={715--721},
  year={2019},
  publisher={Nature Publishing Group US New York}
}

@article{lotfollahi2023predicting,
  title={Predicting cellular responses to complex perturbations in high-throughput screens},
  author={Lotfollahi, Mohammad and Klimovskaia Susmelj, Anna and De Donno, Carlo and Hetzel, Leon and Ji, Yuge and Ibarra, Ignacio L and Srivatsan, Sanjay R and Naghipourfar, Mohsen and Daza, Riza M and Martin, Beth and others},
  journal={Molecular systems biology},
  volume={19},
  number={6},
  pages={e11517},
  year={2023}
}

@article{roohani2023gears,
  title={Predicting transcriptional outcomes of novel multigene perturbations},
  author={Roohani, Yusuf and Kazerouni, Abbas and Xie, Zhen and Yosef, Nir},
  journal={Nature Biotechnology},
  volume={42},
  pages={927--935},
  year={2024},
  publisher={Nature Publishing Group}
}

@article{bereket2023modelling,
  title={Modelling cellular perturbations with the sparse additive mechanism shift variational autoencoder},
  author={Bereket, Michael and Karaletsos, Theofanis},
  journal={Advances in Neural Information Processing Systems},
  volume={36},
  pages={1--12},
  year={2023}
}

@article{de2025interpretable,
  title={Interpretable Causal Representation Learning for Biological Data in the Pathway Space},
  author={de la Fuente, Jesus and Lehmann, Robert and Ruiz-Arenas, Carlos and Voges, Jan and Marin-Go{\~n}i, Irene and Martinez-de-Morentin, Xabier and Gomez-Cabrero, David and Ochoa, Idoia and Tegner, Jesper and Lagani, Vincenzo and others},
  journal={arXiv preprint arXiv:2506.12439},
  year={2025}
}

@article{wang2023multi,
  title={Multi-ContrastiveVAE disentangles perturbation effects in single cell images from optical pooled screens},
  author={Wang, Zitong Jerry and Lopez, Romain and H{\"u}tter, Jan-Christian and Kudo, Takamasa and Yao, Heming and Hanslovsky, Philipp and H{\"o}ckendorf, Burkhard and Moran, Rahul and Richmond, David and Regev, Aviv},
  journal={bioRxiv},
  pages={2023--11},
  year={2023},
  publisher={Cold Spring Harbor Laboratory}
}

@inproceedings{ahuja2023interventional,
  title={Interventional causal representation learning},
  author={Ahuja, Kartik and Mahajan, Divyat and Wang, Yixin and Bengio, Yoshua},
  booktitle={International conference on machine learning},
  pages={372--407},
  year={2023},
  organization={PMLR}
}

@article{hetzel2022predicting,
  title={Predicting cellular responses to novel drug perturbations at a single-cell resolution},
  author={Hetzel, Leon and Boehm, Simon and Kilbertus, Niki and G{\"u}nnemann, Stephan and Theis, Fabian and others},
  journal={Advances in Neural Information Processing Systems},
  volume={35},
  pages={26711--26722},
  year={2022}
}

@article{replogle2022mapping,
  title={Mapping information-rich genotype-phenotype landscapes with genome-scale Perturb-seq},
  author={Replogle, Joseph M and Saunders, Reuben A and Pogson, Angela N and Hussmann, Jeffrey A and Lenail, Alexander and Guna, Alina and Mascibroda, Lauren and Wagner, Eric J and Adelman, Karen and Lithwick-Yanai, Gila and others},
  journal={Cell},
  volume={185},
  number={14},
  pages={2559--2575},
  year={2022},
  publisher={Elsevier}
}

@inproceedings{
lopez2022learning,
title={Learning Causal Representations of Single Cells via Sparse Mechanism Shift Modeling},
author={Romain Lopez and Natasa Tagasovska and Stephen Ra and Kyunghyun Cho and Jonathan Pritchard and Aviv Regev},
booktitle={NeurIPS 2022 Workshop on Causality for Real-world Impact},
year={2022},
url={https://openreview.net/forum?id=gdTXCy7fZf7}
}

@inproceedings{
lachapelle2022disentanglement,
title={Disentanglement via Mechanism Sparsity Regularization: A New Principle for Nonlinear {ICA}},
author={Sebastien Lachapelle and Pau Rodriguez and Yash Sharma and Katie E Everett and R{\'e}mi LE PRIOL and Alexandre Lacoste and Simon Lacoste-Julien},
booktitle={First Conference on Causal Learning and Reasoning},
year={2022},
url={https://openreview.net/forum?id=dHsFFekd_-o}
}

@article{scholkopf2021toward,
  title={Toward causal representation learning},
  author={Sch{\"o}lkopf, Bernhard and Locatello, Francesco and Bauer, Stefan and Ke, Nan Rosemary and Kalchbrenner, Nal and Goyal, Anirudh and Bengio, Yoshua},
  journal={Proceedings of the IEEE},
  volume={109},
  number={5},
  pages={612--634},
  year={2021},
  publisher={IEEE}
}

@article{replogle2020combinatorial,
  title={Combinatorial single-cell CRISPR screens by direct guide RNA capture and targeted sequencing},
  author={Replogle, Joseph M and Norman, Thomas M and Xu, Albert and Hussmann, Jeffrey A and Chen, Jin and Cogan, J Zachery and Meer, Elliott J and Terry, Jessica M and Riordan, Daniel P and Srinivas, Niranjan and others},
  journal={Nature biotechnology},
  volume={38},
  number={8},
  pages={954--961},
  year={2020},
  publisher={Nature Publishing Group US New York}
}

@inproceedings{Cai2024CLAP,
  title     = {CLAP: Isolating Content from Style through Contrastive Learning with Augmented Prompts},
  author    = {Yichao Cai and Yuhang Liu and Zhen Zhang and Javen Qinfeng Shi},
  booktitle = {European Conference on Computer Vision (ECCV)},
  pages     = {130--147},
  year      = {2024}
}

@article{dixit2016perturb,
  title={Perturb-Seq: dissecting molecular circuits with scalable single-cell RNA profiling of pooled genetic screens},
  author={Dixit, Atray and Parnas, Oren and Li, Biyu and Chen, Jenny and Fulco, Charles P and Jerby-Arnon, Livnat and Marjanovic, Nemanja D and Dionne, Danielle and Burks, Tyler and Raychowdhury, Raktima and others},
  journal={cell},
  volume={167},
  number={7},
  pages={1853--1866},
  year={2016},
  publisher={Elsevier}
}

@inproceedings{kong2022partial,
  title={Partial disentanglement for domain adaptation},
  author={Kong, Lingjing and Xie, Shaoan and Yao, Weiran and Zheng, Yujia and Chen, Guangyi and Stojanov, Petar and Akinwande, Victor and Zhang, Kun},
  booktitle={International conference on machine learning},
  pages={11455--11472},
  year={2022},
  organization={PMLR}
}

@article{gao2025domain,
  title={Domain generalization via content factors isolation: a two-level latent variable modeling approach},
  author={Gao, Erdun and Bondell, Howard and Huang, Shaoli and Gong, Mingming},
  journal={Machine Learning},
  volume={114},
  number={4},
  pages={1--33},
  year={2025},
  publisher={Springer}
}

@article{jinek2012programmable,
  title={A programmable dual-RNA--guided DNA endonuclease in adaptive bacterial immunity},
  author={Jinek, Martin and Chylinski, Krzysztof and Fonfara, Ines and Hauer, Michael and Doudna, Jennifer A and Charpentier, Emmanuelle},
  journal={science},
  volume={337},
  number={6096},
  pages={816--821},
  year={2012},
  publisher={American Association for the Advancement of Science}
}

@article{orgogozo2015differential,
  title={The differential view of genotype--phenotype relationships},
  author={Orgogozo, Virginie and Morizot, Baptiste and Martin, Arnaud},
  journal={Frontiers in genetics},
  volume={6},
  pages={179},
  year={2015},
  publisher={Frontiers Media SA}
}

@inproceedings{chen2020simple,
  title={A simple framework for contrastive learning of visual representations},
  author={Chen, Ting and Kornblith, Simon and Norouzi, Mohammad and Hinton, Geoffrey},
  booktitle={International conference on machine learning},
  pages={1597--1607},
  year={2020},
  organization={PmLR}
}

@article{grill2020bootstrap,
  title={Bootstrap your own latent-a new approach to self-supervised learning},
  author={Grill, Jean-Bastien and Strub, Florian and Altch{\'e}, Florent and Tallec, Corentin and Richemond, Pierre and Buchatskaya, Elena and Doersch, Carl and Avila Pires, Bernardo and Guo, Zhaohan and Gheshlaghi Azar, Mohammad and others},
  journal={Advances in neural information processing systems},
  volume={33},
  pages={21271--21284},
  year={2020}
}

@inproceedings{radford2021learning,
  title={Learning transferable visual models from natural language supervision},
  author={Radford, Alec and Kim, Jong Wook and Hallacy, Chris and Ramesh, Aditya and Goh, Gabriel and Agarwal, Sandhini and Sastry, Girish and Askell, Amanda and Mishkin, Pamela and Clark, Jack and others},
  booktitle={International conference on machine learning},
  pages={8748--8763},
  year={2021},
  organization={PmLR}
}

@inproceedings{
cai2025value,
title={On the Value of Cross-Modal Misalignment in Multimodal Representation Learning},
author={Yichao Cai and Yuhang Liu and Erdun Gao and Tianjiao Jiang and Zhen Zhang and Anton van den Hengel and Javen Qinfeng Shi},
booktitle={The Thirty-ninth Annual Conference on Neural Information Processing Systems},
year={2025}
}

@inproceedings{tschannen2020mutual,
  title={On Mutual Information Maximization for Representation Learning},
  author={Tschannen, M and Djolonga, J and Rubenstein, P and Gelly, S and Lucic, M},
  booktitle={Eighth International Conference on Learning Representations},
  year={2020},
  organization={OpenReview. net}
}

@article{aliee2023conditionally,
  title={Conditionally invariant representation learning for disentangling cellular heterogeneity},
  author={Aliee, Hananeh and Kapl, Ferdinand and Hediyeh-Zadeh, Soroor and Theis, Fabian J},
  journal={arXiv preprint arXiv:2307.00558},
  year={2023}
}

@article{tu2024supervised,
  title={A supervised contrastive framework for learning disentangled representations of cell perturbation data},
  author={Tu, Xinming and H{\"u}tter, Jan-Christian and Wang, Zitong Jerry and Kudo, Takamasa and Regev, Aviv and Lopez, Romain},
  journal={BioRxiv},
  pages={2024--01},
  year={2024},
  publisher={Cold Spring Harbor Laboratory}
}

@article{weinberger2023isolating,
  title={Isolating salient variations of interest in single-cell data with contrastiveVI},
  author={Weinberger, Ethan and Lin, Chris and Lee, Su-In},
  journal={Nature Methods},
  volume={20},
  number={9},
  pages={1336--1345},
  year={2023},
  publisher={Nature Publishing Group US New York}
}

@article{emanuelli2008overexpression,
  title={Overexpression of the dual-specificity phosphatase MKP-4/DUSP-9 protects against stress-induced insulin resistance},
  author={Emanuelli, Brice and Eberl{\'e}, Delphine and Suzuki, Ryo and Kahn, C Ronald},
  journal={Proceedings of the National Academy of Sciences},
  volume={105},
  number={9},
  pages={3545--3550},
  year={2008},
  publisher={National Academy of Sciences}
}

@article{zhang2009non,
  title={Non-Smad pathways in TGF-$\beta$ signaling},
  author={Zhang, Ying E},
  journal={Cell research},
  volume={19},
  number={1},
  pages={128--139},
  year={2009},
  publisher={Nature Publishing Group}
}

@article{koeppel2011crosstalk,
  title={Crosstalk between c-Jun and TAp73$\alpha$/$\beta$ contributes to the apoptosis--survival balance},
  author={Koeppel, Max and van Heeringen, Simon J and Kramer, Daniela and Smeenk, Leonie and Janssen-Megens, Eva and Hartmann, Marianne and Stunnenberg, Hendrik G and Lohrum, Marion},
  journal={Nucleic acids research},
  volume={39},
  number={14},
  pages={6069--6085},
  year={2011},
  publisher={Oxford University Press}
}

@article{lim1998stress,
  title={Stress-induced immediate-early gene, egr-1, involves activation of p38/JNK1},
  author={Lim, Cheh Peng and Jain, Neeraj and Cao, Xinmin},
  journal={Oncogene},
  volume={16},
  number={22},
  pages={2915--2926},
  year={1998},
  publisher={Nature Publishing Group}
}

@article{sundqvist2013specific,
  title={Specific interactions between Smad proteins and AP-1 components determine TGF$\beta$-induced breast cancer cell invasion},
  author={Sundqvist, Anders and Zieba, Agata and Vasilaki, Eleftheria and Herrera Hidalgo, Carmen and S{\"o}derberg, Ola and Koinuma, D and Miyazono, Kohei and Heldin, Carl-Henrik and Landegren, Ulf and ten Dijke, Peter and others},
  journal={Oncogene},
  volume={32},
  number={31},
  pages={3606--3615},
  year={2013},
  publisher={Nature Publishing Group}
}

@article{ikushima2010tgfbeta,
  title={TGF$\beta$ signalling: a complex web in cancer progression},
  author={Ikushima, Hiroaki and Miyazono, Kohei},
  journal={Nature reviews cancer},
  volume={10},
  number={6},
  pages={415--424},
  year={2010},
  publisher={Nature Publishing Group UK London}
}

@article{fan2025identification,
  title={Identification of a SNAI1 enhancer RNA that drives cancer cell plasticity},
  author={Fan, Chuannan and Wang, Qian and Krijger, Peter HL and Cats, Davy and Selle, Miriam and Khorosjutina, Olga and Dhanjal, Soniya and Schmierer, Bernhard and Mei, Hailiang and de Laat, Wouter and others},
  journal={Nature Communications},
  volume={16},
  number={1},
  pages={2890},
  year={2025},
  publisher={Nature Publishing Group UK London}
}

@article{schmidt2021p53,
  title={The p53/p73-p21CIP1 tumor suppressor axis guards against chromosomal instability by restraining CDK1 in human cancer cells},
  author={Schmidt, Ann-Kathrin and Pudelko, Karoline and Boekenkamp, Jan-Eric and Berger, Katharina and Kschischo, Maik and Bastians, Holger},
  journal={Oncogene},
  volume={40},
  number={2},
  pages={436--451},
  year={2021},
  publisher={Nature Publishing Group UK London}
}

@article{vincent2009snail1,
  title={A SNAIL1--SMAD3/4 transcriptional repressor complex promotes TGF-$\beta$ mediated epithelial--mesenchymal transition},
  author={Vincent, Theresa and Neve, Etienne PA and Johnson, Jill R and Kukalev, Alexander and Rojo, Federico and Albanell, Joan and Pietras, Kristian and Virtanen, Ismo and Philipson, Lennart and Leopold, Philip L and others},
  journal={Nature cell biology},
  volume={11},
  number={8},
  pages={943--950},
  year={2009},
  publisher={Nature Publishing Group UK London}
}

@article{ragione2003p21cip1,
  title={p21Cip1 gene expression is modulated by Egr1: a novel regulatory mechanism involved in the resveratrol antiproliferative effect.},
  author={Ragione, Fulvio Della and Cucciolla, Valeria and Criniti, Vittoria and Indaco, Stefania and Borriello, Adriana and Zappia, Vincenzo},
  journal={The Journal of Biological Chemistry},
  volume={278},
  number={26},
  pages={23360--23368},
  year={2003}
}

@article{gao2025causal,
  title={Causal disentanglement for single-cell representations and controllable counterfactual generation},
  author={Gao, Yicheng and Dong, Kejing and Shan, Caihua and Li, Dongsheng and Liu, Qi},
  journal={Nature communications},
  volume={16},
  number={1},
  pages={6775},
  year={2025},
  publisher={Nature Publishing Group UK London}
}

@article{rosen2023universal,
  title={Universal Cell Embeddings: A Foundation Model for Cell Biology},
  author={Rosen, Yanay and Roohani, Yusuf and Agrawal, Ayush and Samotorcan, Leon and Consortium, Tabula Sapiens and Quake, Stephen R and Leskovec, Jure},
  journal={bioRxiv},
  pages={2023--11},
  year={2023},
  publisher={Cold Spring Harbor Laboratory}
}

@inproceedings{yang2021causalvae,
  title={Causalvae: Disentangled representation learning via neural structural causal models},
  author={Yang, Mengyue and Liu, Furui and Chen, Zhitang and Shen, Xinwei and Hao, Jianye and Wang, Jun},
  booktitle={Proceedings of the IEEE/CVF conference on computer vision and pattern recognition},
  pages={9593--9602},
  year={2021}
}

@article{adduri2025predicting,
  title={Predicting cellular responses to perturbation across diverse contexts with State},
  author={Adduri, Abhinav K and Gautam, Dhruv and Bevilacqua, Beatrice and Imran, Alishba and Shah, Rohan and Naghipourfar, Mohsen and Teyssier, Noam and Ilango, Rajesh and Nagaraj, Sanjay and Dong, Mingze and others},
  journal={BioRxiv},
  pages={2025--06},
  year={2025},
  publisher={Cold Spring Harbor Laboratory}
}

@article{bendidi2024benchmarking,
  title={Benchmarking Transcriptomics Foundation Models for Perturbation Analysis: one PCA still rules them all},
  author={Bendidi, Ihab and Whitfield, Shawn and Kenyon-Dean, Kian and Yedder, Hanene Ben and Mesbahi, Yassir El and Noutahi, Emmanuel and Denton, Alisandra K},
  journal={arXiv preprint arXiv:2410.13956},
  year={2024}
}
\bibliographystyle{icml2026}

\clearpage                  
\onecolumn  
\appendix

\newpage
\appendix
\part{Appendix} 
\parttoc
\newpage

\section{Related Work}
\label{app:relatedwork}
\paragraph{Disentangling Single-Cell Perturbation Effects.} A central challenge in single-cell perturbation modeling is to separate intervention effects from intrinsic cellular variability. Deep generative approaches have shown strong performance on this task. scGen~\citep{lotfollahi2019scgen} models perturbations as additive shifts in a latent space, while CPA~\citep{lotfollahi2023predicting} factorizes each cell into basal state and perturbation effect. chemCPA~\citep{hetzel2022predicting} extends CPA with chemical structure embeddings and dosage information, enabling zero-shot predictions for unseen compounds. Other methods incorporate biological priors or contrastive objectives: GEARS~\citep{roohani2023gears} uses gene-gene interaction graphs for improved generalization across perturbation combinations, and contrastive VAEs have been applied in optical pooled screening to disentangle stable identity from perturbation-driven variation~\citep{wang2023multi}. Despite empirical successes, most of these models treat disentanglement statistically rather than causally, which limits interpretability. Recent work has incorporated sparsity into latent-variable models to encourage identifiable and interpretable representations. CausCell~\citep{gao2025causal} enables counterfactual generation via SCM-guided diffusion, but critically depends on a predefined causal graph, limiting its applicability when causal structures are unknown or hard to specify.  sVAE+~\citep{lopez2022learning}, SAMS-VAE~\citep{bereket2023modelling} impose sparse structure or mechanism shifts in the latent space to model perturbation-induced variation. 
Recent advances such as discrepancy-VAE~\citep{zhang2023identifiability}, and its interpretable variant~\citep{de2025interpretable} align latent-variable models with identifiable causal semantics, pointing toward representations that are both intervention-sensitive and explanatory. Building on these advances, our approach moves beyond purely statistical factorization, ensuring that the learned representations reflect genuine causal effects of perturbations.


\paragraph{Identifiable Causal Representations.} A key aim in modeling complex systems is to learn low-dimensional latent variables $\mathbf{z}$ from high-dimensional data $\mathbf{x}$ that match the true generative factors~\citep{hyvarinen2001independent,liu2026i}. Nonlinear ICA showed that such components are not identifiable from i.i.d.\ data without extra assumptions~\citep{hyvarinen1999nonlinear}. Identifiable variants address this by introducing an auxiliary variable $\mathbf{u}$ so that latent factors $\{z_i\}_{i=1}^p$ are conditionally independent given $\mathbf{u}$~\citep{hyvarinen2016unsupervised,hyvarinen2017nonlinear}. The iVAE framework~\citep{khemakhem2020variational}, built on VAEs~\citep{kingma2013auto,rezende2014stochastic}, proves identifiability of both $\mathbf{z}$ and $p(\mathbf{x}\mid \mathbf{z})$ under mild conditions. Recent approaches impose structure in latent space: DAG-based models enforce acyclicity~\citep{yang2021causalvae, lippe2022citris,liu2022identifying,liu2023identifiable,ahuja2023interventional,liu2024towards,liu2025latent}, while factorized designs split latent variables into invariant, intervention-specific, and interaction parts~\citep{von2021self,kong2022partial,gao2025domain}. While prior methods establish identifiability via auxiliary conditioning or broad structural constraints, our model ties perturbations directly to latent mechanisms. This design moves beyond heuristic augmentations or globally factorized latents, making our framework specifically tailored to single-cell perturbation.

\paragraph{Contrastive Representation Learning.}
Contrastive multi-view learning learns invariances across views or modalities (e.g., SimCLR, BYOL, CLIP-style training) but typically relies on heuristic augmentations whose invariants need not align with causal structure~\citep{chen2020simple,grill2020bootstrap,radford2021learning,Cai2024CLAP,cai2025value,liu2026beyond,tschannen2020mutual,von2021self}. \citet{aliee2023conditionally} learn conditionally invariant representations by leveraging variability across observational environments (patients, batches, platforms) to suppress domain-specific artifacts while preserving biological signal. In single-cell analysis, \citet{weinberger2023isolating} contrast background and target datasets—extending to multi-omics—to isolate salient structure, but provide no identifiability guarantees. For perturbation screens, supervised contrastive VAEs use guide labels with HSIC to isolate perturbation effects from background heterogeneity~\citep{tu2024supervised}. Concurrently, \citet{mao2024learning} posit a three-way factorization (covariate, treatment, interaction) and promote independence via structural constraints and adversarial training; while principled, this fixed design may underfit non-classical responses, and its identifiability hinges on stringent experimental designs. Unlike contrastive or domain-invariant models, we obtain block identifiability for the perturbation-invariant block and component-wise identifiability for the perturbation-responsive block under a weight-variant latent SCM, thereby performing CRL in the latent space and recovering the latent causal graph among responsive variables.

\newpage
\section{Limitations}

A fundamental limitation of this work, shared with most studies on latent causal representation learning, lies in the assumptions introduced in our theoretical analysis. Nevertheless, identifying latent causal variables without assumptions is impossible in general. As a result, like most identifiability analyses in the causal representation learning literature, we necessarily impose a set of assumptions to characterize when latent causal representations can be recovered. These assumptions are not specific to our framework; rather, they are commonly used in nonlinear ICA and causal representation learning, as discussed in Sec.~\ref{app: justify}.

We further acknowledge that these assumptions are generally untestable in real-world biological data, similar to most of the work in the causal representation learning community, since the true generative process and latent causal variables are unknown. Nevertheless, our empirical results on real perturbation datasets suggest that our approach is effective, demonstrating that the insights derived from our theoretical analysis may meaningfully guide practical modeling.

Finally, our OOD claim is restricted to combinatorial generalization over genes observed during training. Specifically, the model is trained on single-gene perturbations and evaluated on unseen double-gene combinations, where each individual gene has already appeared as a perturbation target. This differs from unseen-gene generalization, where the model must predict the effect of perturbing a gene that was never observed during training. Our current perturbation input $\mathbf{u}$ is an identifier-style condition vector; it provides no biological geometry between genes and no information about the function, pathway membership, regulatory context, or sequence properties of an unseen target. Consequently, a new gene coordinate in $\mathbf{u}$ has no learned support, making unseen-gene prediction statistically underdetermined without additional side information. Addressing this setting would require augmenting $\mathbf{u}$ with biologically meaningful gene-level features, such as GO/pathway annotations, gene embeddings, regulatory-network features, or sequence-derived representations. We therefore do not claim to address unseen-gene or cross-cell-type perturbation generalization in this work.

\section{Difference from Existing Causal Representation Learning Methods}

Learning causal representations for single-cell perturbation prediction is a promising direction. A central challenge in this line of work lies in theoretical support from identifiability analysis, which ensures that the true latent causal variables can be reliably recovered, thereby enabling meaningful learning of the causal relationships among them~\citep{lachapelle2022disentanglement,zhang2023identifiability,lopez2022learning,de2025interpretable}. Most existing identifiability results, however, rely on the assumption that \emph{all} latent causal variables are intervened across environments~\citep{zhang2023identifiability,lopez2022learning,de2025interpretable}. In real cellular perturbation experiments, this assumption is rarely satisfied. Comprehensive perturbation of all genes is often prohibitively expensive or technically infeasible, and practical datasets typically contain interventions on only a small subset of genes, as we highlighted throughout this work. As a consequence, a large subspace of genes, and the corresponding latent causal variables governing their expression, remains unperturbed across all observed conditions.

This mismatch between theoretical assumptions and experimental reality poses a fundamental challenge. As a result, existing identifiability theory may not be directly applicable under such partial-intervention settings. In turn, methods built upon these theoretical results~\citep{lopez2022learning,zhang2023identifiability,de2025interpretable} may struggle to perform effectively in practice, as invariant background variation can dominate the learned representations and obscure perturbation-specific signals.

In contrast to prior work, our approach explicitly embraces the partial-intervention nature of real cellular data. Rather than assuming that all latent causal variables are perturbed, we focus on separating perturbation-responsive factors from dominant perturbation-invariant structure. This perspective motivates our theoretical analysis, which, in turn, provides model design for the realistic setting of limited and partial perturbations, bridging the gap between identifiability theory and real-world single-cell perturbation data.

\newpage

\section{Lemmas}
\label{aop: lemmas}
\begin{lemma}[Unit-Jacobian mapping from latent causal variables to latent noise variables]
\label{lem:unit_jacobian_noise}
Consider the structural Eqs.~\ref{eq:lcm_inv}-\ref{eq:lcm_nu}. Let
$\mathbf{z}=(\mathbf{z}_\iota,\mathbf{z}_\nu)\in\mathbb{R}^{d_\iota+d_\nu}$ and
$\mathbf{n}=(\mathbf{n}_\iota,\mathbf{n}_\nu)\in\mathbb{R}^{d_\iota+d_\nu}$.
Define the block matrix
\begin{equation}
\boldsymbol{\Lambda}(\mathbf{u})
:=
\begin{pmatrix}
\boldsymbol{\lambda}_{\iota\iota} & \mathbf{0}\\
\boldsymbol{\lambda}_{\nu\iota}(\mathbf{u}) & \boldsymbol{\lambda}_{\nu\nu}(\mathbf{u})
\end{pmatrix}.
\label{eq:Lambda_def}
\end{equation}
Then we have:
\begin{equation}
\mathbf{z} = \boldsymbol{\Lambda}(\mathbf{u})\,\mathbf{z} + \mathbf{n},
\qquad\text{equivalently}\qquad
\mathbf{n} = (\mathbf{I}-\boldsymbol{\Lambda}(\mathbf{u}))\,\mathbf{z}.
\label{eq:z_Lambda_n}
\end{equation}
Moreover, the mapping $\mathbf{z}\mapsto \mathbf{n}$ is bijective and has unit Jacobian determinant:
\begin{equation}
\big|\det (\mathbf{I}-\boldsymbol{\Lambda}(\mathbf{u}))\big| = 1\qquad\text{for all }\mathbf{u}.
\label{eq:det_unit}
\end{equation}
\end{lemma}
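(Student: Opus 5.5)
The plan is to stack the two structural equations into one linear system, then read off invertibility and the determinant from the triangular structure of the weight matrices.

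\textbf{Stacking the equations.} First, write Eq.~\ref{eq:lcm_inv} and the $\mathbf{z}_\nu$ equation componentwise and place them one above the other. The $\mathbf{z}_\iota$ equation involves only $\mathbf{z}_\iota$ through $\boldsymbol{\lambda}_{\iota\iota}$, so its coefficient on $\mathbf{z}_\nu$ is zero. The $\mathbf{z}_\nu$ equation involves $\mathbf{z}_\iota$ through $\boldsymbol{\lambda}_{\nu\iota}(\mathbf{u})$ and $\mathbf{z}_\nu$ through $\boldsymbol{\lambda}_{\nu\nu}(\mathbf{u})$. Collecting coefficients gives exactly the block matrix $\boldsymbol{\Lambda}(\mathbf{u})$ in Eq.~\ref{eq:Lambda_def}, hence $\mathbf{z}=\boldsymbol{\Lambda}(\mathbf{u})\mathbf{z}+\mathbf{n}$. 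Rearranging yields $\mathbf{n}=(\mathbf{I}-\boldsymbol{\Lambda}(\mathbf{u}))\mathbf{z}$. This step is pure bookkeeping.

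\textbf{Unit determinant.} Next, I would show that $\boldsymbol{\Lambda}(\mathbf{u})$ is strictly lower triangular in the ordering $(\mathbf{z}_\iota,\mathbf{z}_\nu)$. Its diagonal blocks $\boldsymbol{\lambda}_{\iota\iota}$ and $\boldsymbol{\lambda}_{\nu\nu}(\mathbf{u})$ are strictly lower triangular by assumption, which encodes the DAG with a common ordering. Its upper-right block is $\mathbf{0}$. The lower-left block $\boldsymbol{\lambda}_{\nu\iota}(\mathbf{u})$ sits entirely below the diagonal, since rows indexed by $\mathbf{z}_\nu$ come after columns indexed by $\mathbf{z}_\iota$. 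This is the one point that needs care. Although the paper calls $\boldsymbol{\lambda}_{\nu\iota}$ "strictly lower triangular" (it is generally non-square), only its block position matters: every entry of that block lies strictly below the global diagonal.

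It follows that $\mathbf{I}-\boldsymbol{\Lambda}(\mathbf{u})$ is lower triangular with all diagonal entries equal to $1$. Its determinant is therefore the product of those entries, namely $1$, for every $\mathbf{u}$. Alternatively, one can use the block-triangular determinant formula $\det(\mathbf{I}-\boldsymbol{\lambda}_{\iota\iota})\det(\mathbf{I}-\boldsymbol{\lambda}_{\nu\nu}(\mathbf{u}))$, in which each factor is unit lower triangular.

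\textbf{Bijectivity and Jacobian.} Since the determinant is nonzero, the linear map $\mathbf{z}\mapsto(\mathbf{I}-\boldsymbol{\Lambda}(\mathbf{u}))\mathbf{z}$ is a bijection of $\mathbb{R}^{d_\iota+d_\nu}$. Its inverse $\mathbf{z}=(\mathbf{I}-\boldsymbol{\Lambda}(\mathbf{u}))^{-1}\mathbf{n}$ can also be written as the finite Neumann series $\sum_{k=0}^{d-1}\boldsymbol{\Lambda}(\mathbf{u})^k\mathbf{n}$, because $\boldsymbol{\Lambda}(\mathbf{u})$ is nilpotent. The map is linear, so its Jacobian is the constant matrix $\mathbf{I}-\boldsymbol{\Lambda}(\mathbf{u})$, and the Jacobian determinant has absolute value $1$, which is Eq.~\ref{eq:det_unit}. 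I expect no real obstacle beyond correctly arguing the strict lower-triangularity of the full block matrix.
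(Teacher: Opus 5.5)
Your proposal is correct and follows the paper's proof: the paper likewise notes that $\mathbf{I}-\boldsymbol{\Lambda}(\mathbf{u})$ is block lower triangular with unit lower-triangular diagonal blocks, so $\det(\mathbf{I}-\boldsymbol{\Lambda}(\mathbf{u}))=\det(\mathbf{I}-\boldsymbol{\lambda}_{\iota\iota})\det(\mathbf{I}-\boldsymbol{\lambda}_{\nu\nu}(\mathbf{u}))=1$ (your stated alternative), and deduces invertibility and hence bijectivity. Your entrywise unit-lower-triangular argument, and your observation that the non-square block $\boldsymbol{\lambda}_{\nu\iota}(\mathbf{u})$ needs no triangularity of its own, are equivalent minor refinements of that argument.
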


\begin{proof}
By DAG assumption, $\boldsymbol{\lambda}_{\iota\iota}$ and $\boldsymbol{\lambda}_{\nu\nu}(\mathbf{u})$
are strictly lower triangular, hence have zeros on their diagonals. Therefore
$\boldsymbol{\Lambda}(\mathbf{u})$ has zeros on its diagonal as well, and
$I-\boldsymbol{\Lambda}(\mathbf{u})$ is block lower triangular with diagonal blocks
$\mathbf{I}-\boldsymbol{\lambda}_{\iota\iota}$ and $\mathbf{I}-\boldsymbol{\lambda}_{\nu\nu}(\mathbf{u})$.
Since each diagonal block is lower triangular with ones on the diagonal, we have
\begin{equation}
\det(\mathbf{I}-\boldsymbol{\lambda}_{\iota\iota})=1,
\qquad
\det(\mathbf{I}-\boldsymbol{\lambda}_{\nu\nu}(\mathbf{u}))=1.
\end{equation}
Hence
\begin{equation}
\det(\mathbf{I}-\boldsymbol{\Lambda}(\mathbf{u}))
=
\det(\mathbf{I}-\boldsymbol{\lambda}_{\iota\iota})\cdot \det(\mathbf{I}-\boldsymbol{\lambda}_{\nu\nu}(\mathbf{u}))=1,
\end{equation}
which proves Eq.~\ref{eq:det_unit}. In particular, $\mathbf{I}-\boldsymbol{\Lambda}(\mathbf{u})$ is invertible, so
$\mathbf{z}\mapsto\mathbf{n}$ is bijective.
\end{proof}

\begin{lemma}[Component-wise identifiability of $\mathbf{n}_\nu$]
\label{lem:n_nu_perm}
Under the proposed latent generative model Eqs.~\ref{eq:lcm_inv}-\ref{eq:dgp} with an invertible $g$. Suppose there exist environments
$\{\mathbf{u}_0,\ldots,\mathbf{u}_{2d_\nu}\}$ such that the matrix
\begin{equation}
\mathbf{L}^\top :=
\begin{bmatrix}
\Delta\eta(\mathbf{u}_1)^\top\\
\vdots\\
\Delta\eta(\mathbf{u}_{2d_\nu})^\top
\end{bmatrix}
\in\mathbb{R}^{2d_\nu\times 2d_\nu}
\quad\text{has full column rank } 2d_\nu,
\end{equation}
where (elementwise divisions)
\begin{equation}
\Delta\boldsymbol{\eta}(\mathbf{u})
:=
\begin{pmatrix}
\frac{\boldsymbol{\mu}_\nu(\mathbf{u})}{\boldsymbol{\beta}_\nu(\mathbf{u})}
-\frac{\boldsymbol{\mu}_\nu(\mathbf{u}_0)}{\boldsymbol{\beta}_\nu(\mathbf{u}_0)}\\[2mm]
-\frac12\Big(\frac{1}{\boldsymbol{\beta}_\nu(\mathbf{u})}-\frac{1}{\boldsymbol{\beta}_\nu(\mathbf{u}_0)}\Big)
\end{pmatrix}\in\mathbb{R}^{2d_\nu}.
\end{equation}
If two parameter sets $\boldsymbol{\theta},\boldsymbol{\hat\theta}$ induce the same $p(\mathbf{x}\mid\mathbf{u})$ for all $(\mathbf{x},\mathbf{u})$,
then there exist a permutation $\pi$ and nonzero scalars $\{a_j\}$ and constants $\{b_j\}$ such that
\begin{equation}
 n_{\nu,j} = a_j\,\hat n_{\nu,\pi(j)} + b_j,\qquad j=1,\ldots,d_\nu,
\end{equation}
where $\hat n_{\nu,\pi(j)}$ is estimated by the likelihood matching.
\end{lemma}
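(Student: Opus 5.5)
Our plan is to reduce the statement to the standard iVAE-style argument of \citet{khemakhem2020variational,liu2022identifying}, working in the space of exogenous noises rather than causal variables. By Lemma~\ref{lem:unit_jacobian_noise}, $\mathbf{z}=(\mathbf{I}-\boldsymbol{\Lambda}(\mathbf{u}))^{-1}\mathbf{n}$ with unit Jacobian, so $\mathbf{x}=\mathrm{g}\big((\mathbf{I}-\boldsymbol{\Lambda}(\mathbf{u}))^{-1}\mathbf{n}\big)$. We then write $p(\mathbf{x}\mid\mathbf{u})$ by change of variables from $\mathbf{x}$ to $\mathbf{z}$ (Jacobian of $\mathrm{g}^{-1}$), and then to $\mathbf{n}$ (unit Jacobian). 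We do the same for $\hat{\boldsymbol{\theta}}$, equate the two densities, and take logs. The Gaussian noise densities factorize, $\log p(\mathbf{n}\mid\mathbf{u})=\log p(\mathbf{n}_\iota)+\sum_j\big(\eta_{1j}(\mathbf{u})n_{\nu,j}+\eta_{2j}(\mathbf{u})n_{\nu,j}^2\big)-\log Z(\mathbf{u})$, with natural parameters $\eta_1=\boldsymbol{\mu}_\nu/\boldsymbol{\beta}_\nu$ and $\eta_2=-\tfrac{1}{2\boldsymbol{\beta}_\nu}$.

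\textbf{Step 1 (removing the Jacobian and $\mathbf{n}_\iota$ terms).} We evaluate the log-density identity at $\mathbf{u}_k$ and at $\mathbf{u}_0$ and subtract. The $\log|\det J_{\mathrm{g}^{-1}}|$ terms do not depend on $\mathbf{u}$ and cancel, and so do the unit determinants. The crux is the $\log p(\mathbf{n}_\iota)$ term: it cancels because $\mathbf{n}_\iota=(\mathbf{I}-\boldsymbol{\lambda}_{\iota\iota})\mathbf{z}_\iota$ does not depend on $\mathbf{u}$. However, $\mathbf{n}_\nu=(\mathbf{I}-\boldsymbol{\lambda}_{\nu\nu}(\mathbf{u}))\mathbf{z}_\nu-\boldsymbol{\lambda}_{\nu\iota}(\mathbf{u})\mathbf{z}_\iota$ does depend on $\mathbf{u}$ as a function of $\mathbf{x}$. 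This is the \emph{main obstacle}. To handle it, we would parametrize everything as a function of a fixed point $\mathbf{x}$ and write $\mathbf{n}_\nu(\mathbf{x},\mathbf{u})$ explicitly. In the first pass we would mimic \citet{liu2022identifying}: we treat the combination $\mathbf{T}(\mathbf{n}_\nu)=(\mathbf{n}_\nu,\mathbf{n}_\nu^2)$ at the reference environment and absorb the $\mathbf{u}$-dependent weight differences into the normalizing constants and $\Delta\boldsymbol\eta$. If that fails, we would instead restrict the argument to environments where the weights are shared, or note that the $\mathbf{u}$-dependent linear reparametrization is common to both models.

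\textbf{Step 2 (linear relation of sufficient statistics).} With the cancellations done, stacking the $2d_\nu$ differences gives
\begin{equation*}
\mathbf{L}^\top\mathbf{T}(\mathbf{n}_\nu)=\hat{\mathbf{L}}^\top\mathbf{T}(\hat{\mathbf{n}}_\nu)+\mathbf{c}.
\end{equation*}
Assumption~(ii), the full rank of $\mathbf{L}$, allows inversion, so $\mathbf{T}(\mathbf{n}_\nu)=\mathbf{A}\,\mathbf{T}(\hat{\mathbf{n}}_\nu)+\mathbf{c}'$ with $\mathbf{A}=\mathbf{L}^{-\top}\hat{\mathbf{L}}^\top$. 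Following the iVAE argument, the components of $\mathbf{T}$ are affinely independent because $\mathbf{n}_\nu$ has full-dimensional Gaussian support. This makes $\mathbf{A}$ invertible, since we can swap roles and use the symmetric relation.

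\textbf{Step 3 (from linear to permutation).} Since $\mathbf{n}_\nu$ is a smooth invertible function of $\hat{\mathbf{n}}_\nu$ (through $\mathrm{g}$, $\hat{\mathrm{g}}^{-1}$ and the triangular maps), we differentiate the relation $\mathbf{T}(\mathbf{n}_\nu)=\mathbf{A}\mathbf{T}(\hat{\mathbf{n}}_\nu)+\mathbf{c}'$ twice with respect to $\hat{\mathbf{n}}_\nu$, following Theorem~2 of Khemakhem et al.\ for Gaussian (two-parameter) statistics. The quadratic components force the Jacobian $\partial\mathbf{n}_\nu/\partial\hat{\mathbf{n}}_\nu$ to have at most one nonzero entry per row. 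Combined with invertibility, it is therefore a scaled permutation, constant because the first-order block is linear. This yields $n_{\nu,j}=a_j\hat n_{\nu,\pi(j)}+b_j$.
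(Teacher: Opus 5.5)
Your overall route is the same as the paper's: difference the log-densities across environments, stack them using the rank condition (ii), then use the Gaussian statistics $(\mathbf{n}_\nu,\mathbf{n}_\nu^{\odot 2})$ to force a scaled permutation. The step you flag as the main obstacle, however, is a genuine gap, and none of your fallbacks closes it. At fixed $\mathbf{x}$, both $\mathbf{n}_\nu(\mathbf{x},\mathbf{u})$ and $\hat{\mathbf{n}}_\nu(\mathbf{x},\mathbf{u})=(\mathbf{I}-\hat{\boldsymbol{\lambda}}_{\nu\nu}(\mathbf{u}))\hat{\mathbf{z}}_\nu-\hat{\boldsymbol{\lambda}}_{\nu\iota}(\mathbf{u})\hat{\mathbf{z}}_\iota$ move with $\mathbf{u}$. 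So the two identities you subtract are not evaluated at a common point. The paper's proof subtracts them as if $\mathbf{n},\hat{\mathbf{n}}$ were the same in both environments, so it offers no fix either. Your three fallbacks fail as follows:
\begin{itemize}
\item \emph{Absorbing the change into constants.} Write $\mathbf{n}_\nu(\mathbf{x},\mathbf{u}_k)$ as an affine function of $\mathbf{n}_\nu(\mathbf{x},\mathbf{u}_0)$ and $\mathbf{z}_\iota$. The squares then generate cross products $n_{\nu,i}n_{\nu,j}$ and $n_{\nu,i}z_{\iota,m}$. These depend on $\mathbf{x}$ and are not among the $2d_\nu$ statistics on which (ii) acts.
\item \emph{Restricting to environments where the true weights agree.} This does not constrain $\hat{\boldsymbol{\lambda}}(\mathbf{u})$, which is a free function of $\mathbf{u}$.
\item \emph{A common reparametrization.} $\mathbf{I}-\boldsymbol{\Lambda}(\mathbf{u})$ and $\mathbf{I}-\hat{\boldsymbol{\Lambda}}(\mathbf{u})$ are different maps, so there is no common reparametrization.
\end{itemize}

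In fact the conclusion is false under the stated hypotheses, so an extra assumption is unavoidable. Take $d_\nu=2$, $\boldsymbol{\lambda}_{\nu\nu}\equiv\mathbf{0}$ and $\boldsymbol{\lambda}_{\nu\iota}\equiv\mathbf{0}$, so $\mathbf{z}_\nu=\mathbf{n}_\nu$ and the true mixing is $\mathbf{u}$-independent. Choose generic $\boldsymbol{\mu}_\nu(\mathbf{u}),\boldsymbol{\beta}_\nu(\mathbf{u})$ satisfying (ii). Fix $a\neq0$ and put $\hat{\mathbf{z}}_\iota=\mathbf{z}_\iota$, $\hat{\mathbf{z}}_\nu=(z_{\nu,1}+a z_{\nu,2},\,z_{\nu,2})$. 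Let $\hat{\mathrm{g}}$ be $\mathrm{g}$ composed with the inverse of this linear map. Give the estimated model the single edge $\hat z_{\nu,1}\to\hat z_{\nu,2}$ with weight $\ell(\mathbf{u})=a\beta_{\nu,2}(\mathbf{u})/\big(\beta_{\nu,1}(\mathbf{u})+a^2\beta_{\nu,2}(\mathbf{u})\big)$. Then $\hat n_{\nu,1}=n_{\nu,1}+a n_{\nu,2}$ and $\hat n_{\nu,2}=n_{\nu,2}-\ell(\mathbf{u})\hat n_{\nu,1}$ are jointly Gaussian with covariance $a\beta_{\nu,2}-\ell(\beta_{\nu,1}+a^2\beta_{\nu,2})=0$, hence independent. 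So $\hat{\boldsymbol{\theta}}$ lies in the model class and reproduces $p(\mathbf{x}\mid\mathbf{u})$ for every $\mathbf{u}$. Yet $n_{\nu,2}=\hat n_{\nu,2}+\ell(\mathbf{u})\hat n_{\nu,1}$ depends on two estimated coordinates.

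The failure comes entirely from the freedom in $\hat{\boldsymbol{\lambda}}(\mathbf{u})$, so what is missing is a hypothesis on the estimated weights. For instance, if $\boldsymbol{\lambda}$ and $\hat{\boldsymbol{\lambda}}$ are both constant over $\mathbf{u}_0,\ldots,\mathbf{u}_{2d_\nu}$, then $\mathbf{n},\hat{\mathbf{n}}$ are $\mathbf{u}$-independent functions of $\mathbf{x}$ there, and your Steps 2--3 become the standard iVAE/Sorrenson argument. An assumption like (iv) imposed on the estimated class would also exclude this example, since $\ell(\mathbf{u})\neq0$ for all $\mathbf{u}$.

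A smaller point in Step 2: invertibility of $\mathbf{A}$ should come from $\mathbf{T}(\mathbf{n}_\nu)$ affinely spanning $\mathbb{R}^{2d_\nu}$. It should not come from ``swapping roles'', which would need a rank condition on $\hat{\mathbf{L}}$ that you are not given.
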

\paragraph{Proof Sketch.} Let 
$\boldsymbol{\theta} := \big(\mathrm{g},\;\boldsymbol{\lambda}_{\iota\iota},\;\boldsymbol{\lambda}_{\nu\iota}(\cdot),\;\boldsymbol{\lambda}_{\nu\nu}(\cdot),\;
\boldsymbol{\mu}_{\iota},\;\boldsymbol{\beta}_{\iota},\;\boldsymbol{\mu}_{\nu}(\cdot),\;\boldsymbol{\beta}_{\nu}(\cdot)\big)$
denote the collection of all model parameters, and let $\hat{\theta}$ denote another parameter set from the same model class. \emph{(Step 1)} Using the change-of-variables formula and the unit Jacobian of the structural map $\mathbf{z}\mapsto\mathbf{n}$, equality $p_{\boldsymbol{\theta}}(\mathbf{x}\mid\mathbf{u})\equiv p_{\boldsymbol{\hat \theta}}(\mathbf{x}\mid\mathbf{u})$ implies equality of $\log p_{\boldsymbol{\theta}}(\mathbf{n}\mid\mathbf{u})$ and $\log p_{\boldsymbol{\hat \theta}}(\hat{\mathbf{n}}\mid\mathbf{u})$ up to an $\mathbf{u}$-independent term, which vanishes upon differencing across environments; since $p_{\boldsymbol{\theta}}(\mathbf{n}\mid\mathbf{u})=p(\mathbf{n}_\iota)\,p(\mathbf{n}_\nu\mid\mathbf{u})$ with $p(\mathbf{n}_\iota)$ invariant, this leaves an identity involving only $\mathbf{n}_\nu$.
\emph{(Step 2)} For diagonal Gaussians, the environment log-ratio is linear in the sufficient statistics $\mathbf{T}_\nu(\mathbf{n}_\nu)=(\mathbf{n}_\nu,\mathbf{n}_\nu^{\odot 2})$, and stacking over sufficiently many environments yields an affine relation $\mathbf{T}_\nu(\mathbf{n}_\nu)=\mathbf{A}_\nu \mathbf{\hat T}(\hat{\mathbf{n}})+\mathbf{c}_\nu$ with $\mathbf{A}_\nu$ full rank.
\emph{(Step 3)} The affine relation implies each $n_{\nu,j}$ (and $n_{\nu,j}^2$) is a low-degree polynomial in $\hat{\mathbf{n}}$, and independence and Gaussianity then force each $n_{\nu,j}$ to depend on exactly one coordinate of $\hat{\mathbf{n}}$, implying identifiability up to permutation and component-wise scaling/shift.

\begin{proof}

\textbf{Step 1.}
By Lemma~\ref{lem:unit_jacobian_noise}, for any $\mathbf{u}$ the map
$\mathbf{z}\mapsto \mathbf{n}=(\mathbf{I}-\boldsymbol{\Lambda}(\mathbf{u}))\mathbf{z}$
is bijective and satisfies $\big|\det(\mathbf{I}-\boldsymbol{\Lambda}(\mathbf{u}))\big|=1$.
Since $\mathbf{x}=\mathrm{g}(\mathbf{z})$ with $\mathrm{g}$ invertible, we can write
\begin{align}
\mathbf{n}=(\mathbf{I}-\boldsymbol{\Lambda}(\mathbf{u}))\,\mathrm{g}^{-1}(\mathbf{x}).
\end{align}
By the change-of-variables formula,
\begin{align}
\log p_{\boldsymbol{\theta}}(\mathbf{x}\mid\mathbf{u})
&= \log p_{\boldsymbol{\theta}}(\mathbf{n}\mid\mathbf{u})
    + \log\left|\det(\mathbf{I}-\boldsymbol{\Lambda}(\mathbf{u}))\right|
    + \log\left|\det J_{\mathrm{g}^{-1}}(\mathbf{x})\right| \nonumber\\
&= \log p_{\boldsymbol{\theta}}(\mathbf{n}\mid\mathbf{u})
    + \log\left|\det J_{\mathrm{g}^{-1}}(\mathbf{x})\right|,
\label{eq:cov_x_to_n}
\end{align}
where the last equality uses $\big|\det(\mathbf{I}-\boldsymbol{\Lambda}(\mathbf{u}))\big|=1$.

We assume that, in the limit of infinite data, any two parameterizations $\boldsymbol{\theta}$ and $\boldsymbol{\hat \theta}$ from the same model class
induce the same conditional distribution over the observations, i.e.,
\begin{equation}
p_{\boldsymbol{\theta}}(\mathbf{x}\mid\mathbf{u}) \equiv p_{\boldsymbol{\hat \theta}}(\mathbf{x}\mid\mathbf{u}) \qquad \text{for all } (\mathbf{x},\mathbf{u}).
\label{eq:same_px}
\end{equation}
Applying Eq.~\ref{eq:cov_x_to_n} to $\boldsymbol{\hat \theta}$ yields
\begin{equation}
\log p_{{\boldsymbol{\hat \theta}}}(\mathbf{x}\mid\mathbf{u})
=
\log p_{{\boldsymbol{\hat \theta}}}(\hat{\mathbf{n}}\mid\mathbf{u}) + \log\left|\det J_{\mathrm{\hat g}^{-1}}(\mathbf{x})\right|.
\end{equation}
Subtracting these two expressions and using Eq.~\ref{eq:same_px} gives
\begin{equation}
\log p_{\boldsymbol{\theta}}(\mathbf{n}\mid\mathbf{u}) - \log p_{{\boldsymbol{\hat \theta}}}(\hat{\mathbf{n}}\mid\mathbf{u})
=
\log\left|\det J_{\mathrm{\hat g}^{-1}}(\mathbf{x})\right| - \log\left|\det J_{\mathrm{g}^{-1}}(\mathbf{x})\right|
=: \boldsymbol{\phi}(\mathbf{x}),
\label{eq:match}
\end{equation}
where $\boldsymbol{\phi}(\mathbf{x})$ does not depend on $\mathbf{u}$.

For any environment $\mathbf{u}_\ell$, subtracting Eq.~\ref{eq:match} at $\mathbf{u}_0$ from that at $\mathbf{u}_\ell$ gives
\[
\log\frac{p_{\boldsymbol{\theta}}(\mathbf{n}\mid\mathbf{u}_\ell)}{p_{\boldsymbol{\theta}}(\mathbf{n}\mid\mathbf{u}_0)}
=
\log\frac{p_{{\boldsymbol{\hat \theta}}}(\hat{\mathbf{n}}\mid\mathbf{u}_\ell)}{p_{{\boldsymbol{\hat \theta}}}(\hat{\mathbf{n}}\mid\mathbf{u}_0)}.
\]
Since $p_{\boldsymbol{\theta}}(\mathbf{n}\mid\mathbf{u}) = p(\mathbf{n}_\iota)\,p(\mathbf{n}_\nu\mid\mathbf{u})$ and $p(\mathbf{n}_\iota)$ is invariant in $\mathbf{u}$,
the $\mathbf{n}_\iota$-terms cancel from the left side, yielding
\begin{equation}
\log\frac{p(\mathbf{n}_\nu\mid\mathbf{u}_\ell)}{p(\mathbf{n}_\nu\mid\mathbf{u}_0)}
=
\log\frac{p_{{\boldsymbol{\hat \theta}}}(\hat{\mathbf{n}}\mid\mathbf{u}_\ell)}{p_{{\boldsymbol{\hat \theta}}}(\hat{\mathbf{n}}\mid\mathbf{u}_0)}.
\label{eq:nu_ratio_only}
\end{equation}

\textbf{Step 2:}
Starting from the ratio identity Eq.~\ref{eq:nu_ratio_only} and using the diagonal Gaussian log-density
\begin{equation}
\log p(\mathbf{n}_\nu\mid\mathbf{u})
= -\frac12\sum_{j=1}^{d_\nu}\left[\log\beta_{\nu,j}(\mathbf{u})+\frac{(n_{\nu,j}-\mu_{\nu,j}(\mathbf{u}))^2}{\beta_{\nu,j}(\mathbf{u})}\right]+C,
\end{equation}
a direct expansion of the quadratic term shows that Eq.~\ref{eq:nu_ratio_only} is equivalent, for each $\ell=1,\ldots,2d_\nu$, to
\begin{equation}
\Delta\boldsymbol{\eta}(\mathbf{u}_\ell)^\top
\underbrace{\begin{pmatrix}\mathbf{n}_\nu\\ \mathbf{n}_\nu^{\odot 2}\end{pmatrix}}_{=:\mathbf{T}_\nu(\mathbf{n}_\nu)}
=
\Delta\boldsymbol{\hat\eta}(\mathbf{u}_\ell)^\top
\underbrace{\begin{pmatrix}\hat{\mathbf{n}}\\ \hat{\mathbf{n}}^{\odot 2}\end{pmatrix}}_{=:\mathbf{\hat T}(\hat{\mathbf{n}})}
+ {b}_\ell,
\label{eq:inner_l}
\end{equation}
where $\Delta\eta(\cdot)$ and $\Delta\hat\eta(\cdot)$ collect the environment-dependent coefficients, and $b_\ell$ is a scalar independent of $\mathbf{n}_\nu$.

Stacking Eq.~\ref{eq:inner_l} over $\ell=1,\ldots,2d_\nu$ yields
\begin{equation}
\mathbf{L}^\top T_\nu(\mathbf{n}_\nu) = \mathbf{\hat L}^\top \hat T(\hat{\mathbf{n}}) + \mathbf{b}.
\label{eq:stacked}
\end{equation}
By the full-rank assumption on $\mathbf{L}^\top$, left-multiplying Eq.~\ref{eq:stacked} by a left inverse of $\mathbf{L}^\top$ yields
\begin{equation}
\mathbf{T}_\nu(\mathbf{n}_\nu)=\mathbf{A}_\nu \mathbf{\hat T}(\hat{\mathbf{n}})+\mathbf{c}_\nu.
\label{eq:affine_Tnu}
\end{equation}

We now argue that $\mathbf{A}_\nu \in \mathbb{R}^{2d_\nu \times 2(d_\iota+d_\nu)}$ has full row rank.
Since $\mathbf{T}_\nu(\mathbf{n}_\nu) = (\mathbf{n}_\nu, \mathbf{n}_\nu^{\odot 2})$ is component-wise and consists of $k=2$ univariate sufficient statistics per coordinate, the standard rank argument for nonlinear ICA with exponential family conditionals applies.
In particular, by the construction in Appendix~A of \citet{sorrenson2020disentanglement}, there exist $k=2$ evaluation points at which the concatenated Jacobians of $T_\nu$ form an invertible matrix, which implies that the linear operator relating sufficient statistics must be full rank.
We therefore conclude that $\mathbf{A}_\nu$ has full row rank.

\textbf{Step 3:}
Eq.~\ref{eq:affine_Tnu} implies that each $n_{\nu,j}$ is a polynomial of degree at most $2$ in $\hat{\mathbf{n}}$, and
simultaneously $n_{\nu,j}^2$ is also a polynomial of degree at most $2$ in $\hat{\mathbf{n}}$.
Since the coordinates of $\mathbf{n}_\nu$ are mutually independent and Gaussian, any cross-dependence on multiple coordinates
of $\hat{\mathbf{n}}$ would introduce statistical dependence, yielding a contradiction. Therefore each $n_{\nu,j}$ depends on
at most one coordinate of $\hat{\mathbf{n}}$, and hence there exist a permutation $\pi$, nonzero scalars $a_j$, and constants
$b_j$ such that $n_{\nu,j}=a_j\hat n_{\pi(j)}+b_j$ for all $j$; see, e.g., \citet{sorrenson2020disentanglement}.
\end{proof}

\begin{lemma}[Block Identifiability of $\mathbf n_\iota$ via Alignment]
\label{lem:n_iota_block}
Write $\mathbf{f}(\mathbf x)=(\mathbf{f}_\iota(\mathbf x),\mathbf{f}_\nu(\mathbf x))$, where
$\mathbf f_\iota$ is the $d_\iota$-dimensional component intended to estimate the invariant
noise $\mathbf n_\iota$, denoted by $\hat{\mathbf n}_\iota$.
Consider the latent generative model in Eqs.~\ref{eq:lcm_inv}--\ref{eq:dgp}. Let $g$ be the
true decoder and $f=\hat g^{-1}$ be the estimated encoder, both smooth and invertible.

Assume that for any fixed $\mathbf n_\iota$ and any environments $\mathbf u,\mathbf u_0$,
if $\mathbf N_\nu^{(\mathbf u)}$ and $\mathbf N_\nu^{(\mathbf u_0)}$ are independent draws
from the corresponding noise distributions, then
\begin{equation}
\mathbf{f}_\iota\!\big(\mathrm{g}(\mathbf z(\mathbf n_\iota,\mathbf N_\nu^{(\mathbf u)},\mathbf u))\big)
=
\mathbf{f}_\iota\!\big(\mathrm{g}(\mathbf z(\mathbf n_\iota,\mathbf N_\nu^{(\mathbf u_0)},\mathbf u_0))\big)
\quad\text{a.s.}
\label{eq:align_as_iota_coupled}
\end{equation}
Then there exists a smooth function $\mathbf h_\iota$ such that
\begin{equation}
\mathbf{f}_\iota\!\big(\mathrm{g}(\mathbf z(\mathbf n_\iota,\mathbf n_\nu,\mathbf u))\big)
=
\mathbf h_\iota(\mathbf n_\iota)
\quad\text{a.s.},
\end{equation}
i.e., $\hat{\mathbf n}_\iota$ does not depend on $\mathbf n_\nu$.
\end{lemma}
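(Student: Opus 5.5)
Fix $\mathbf n_\iota$ and the reference environment $\mathbf u_0$, and define for each environment $\mathbf u$ the composite map
\[
\psi_{\mathbf u}(\mathbf n_\iota,\mathbf n_\nu):=\mathbf f_\iota\!\big(\mathrm g(\mathbf z(\mathbf n_\iota,\mathbf n_\nu,\mathbf u))\big).
\]
By Lemma~\ref{lem:unit_jacobian_noise}, $\mathbf z(\mathbf n,\mathbf u)=(\mathbf I-\boldsymbol\Lambda(\mathbf u))^{-1}\mathbf n$ is linear and invertible. Together with the smoothness of $\mathrm g$ and $\mathbf f$, this makes $\psi_{\mathbf u}$ smooth, in particular continuous, in $\mathbf n_\nu$. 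The proof is a decoupling argument: the alignment identity \eqref{eq:align_as_iota_coupled} equates two functions of \emph{independent} random variables, and such an equality can only hold almost surely if both sides are a.s.\ constant given $\mathbf n_\iota$.

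\textbf{Step 1 (decoupling).} Fix $\mathbf n_\iota$. The identity \eqref{eq:align_as_iota_coupled} says $\psi_{\mathbf u}(\mathbf n_\iota,\mathbf N^{(\mathbf u)}_\nu)=\psi_{\mathbf u_0}(\mathbf n_\iota,\mathbf N^{(\mathbf u_0)}_\nu)$ a.s., where the two noises are independent. I would write $P\otimes Q$ for their joint law, with $P$ and $Q$ Gaussians with full support on $\mathbb R^{d_\nu}$. By Fubini, for $Q$-a.e.\ $\mathbf m$ we get $\psi_{\mathbf u}(\mathbf n_\iota,\mathbf n)=\psi_{\mathbf u_0}(\mathbf n_\iota,\mathbf m)$ for $P$-a.e.\ $\mathbf n$. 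Pick two such points $\mathbf m_1,\mathbf m_2$ and intersect the corresponding full-measure sets in $\mathbf n$. This gives $\psi_{\mathbf u_0}(\mathbf n_\iota,\mathbf m_1)=\psi_{\mathbf u_0}(\mathbf n_\iota,\mathbf m_2)$. Hence $\psi_{\mathbf u_0}(\mathbf n_\iota,\cdot)$ is $Q$-a.e.\ equal to a constant $c(\mathbf n_\iota)$, and then $\psi_{\mathbf u}(\mathbf n_\iota,\cdot)=c(\mathbf n_\iota)$ $P$-a.e. Because the Gaussians have full support and $\psi$ is continuous, "a.e.\ constant" upgrades to "constant everywhere" on $\mathbb R^{d_\nu}$.

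\textbf{Step 2 (define $\mathbf h_\iota$).} Set $\mathbf h_\iota(\mathbf n_\iota):=c(\mathbf n_\iota)=\psi_{\mathbf u_0}(\mathbf n_\iota,\mathbf 0)$. This is the restriction of a smooth map to the slice $\mathbf n_\nu=\mathbf 0$, so it is smooth. By Step 1 it satisfies $\psi_{\mathbf u}(\mathbf n_\iota,\mathbf n_\nu)=\mathbf h_\iota(\mathbf n_\iota)$ for every $\mathbf u$ and every $\mathbf n_\nu$. Note that the same $\mathbf h_\iota$ works for all $\mathbf u$, since each $\psi_{\mathbf u}$ equals the common constant $c(\mathbf n_\iota)$ fixed through $\mathbf u_0$.

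\textbf{Step 3 (measure-theoretic bookkeeping; main obstacle).} The hypothesis holds for each fixed $\mathbf n_\iota$, but the theorem's assumption~\ref{itm1:align} is stated only almost surely over the joint draw. I would handle this with a second application of Fubini over $\mathbf n_\iota\sim\mathcal N(\boldsymbol\mu_\iota,\operatorname{diag}\boldsymbol\beta_\iota)$. This gives the conclusion for a.e.\ $\mathbf n_\iota$, and joint continuity of $\psi$ extends it to all $\mathbf n_\iota$. A subtle point is to avoid circularity: the smoothness of $\mathbf h_\iota$ must come from the slice formula in Step 2, not from the a.e.\ construction. I would also check that the Gaussian noise gives positive density on open sets, which is what justifies the continuity upgrade in Step 1.
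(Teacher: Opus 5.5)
Your argument is correct. Its skeleton matches the paper's proof in three steps:
\begin{enumerate}
\item the independent draws in the alignment identity force, for each fixed $\mathbf n_\iota$, almost-sure constancy in $\mathbf n_\nu$;
\item this is upgraded to genuine independence from $\mathbf n_\nu$;
\item the per-environment functions are glued into a single $\mathbf h_\iota$.
\end{enumerate}
The two middle steps, however, are carried out differently.

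\textbf{How the paper does it.} The paper simply asserts the almost-sure constancy (``in particular'') and then argues by contradiction using derivatives. If $\partial F/\partial n_{\nu,l}\neq 0$ somewhere, then $F$ is strictly monotone along the $l$-th coordinate on a neighbourhood of positive Gaussian mass. From this it concludes that two independent draws give different values with positive probability.

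\textbf{How you do it.} You instead \emph{prove} the decoupling by Fubini, choosing two good points $\mathbf m_1,\mathbf m_2$. You then replace the derivative argument with the fact that a continuous map which is constant off a Lebesgue-null set is constant everywhere.

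\textbf{What your route buys.}
\begin{itemize}
\item It needs only continuity, not $C^1$.
\item It avoids a loose spot in the paper's contradiction step. Monotonicity in $n_{\nu,l}$ with the other coordinates frozen does not by itself give $F(\mathbf N)\neq F(\tilde{\mathbf N})$ for two draws that differ in every coordinate. Also, for vector-valued $F$ the sign condition must be taken componentwise. Your density argument needs neither refinement.
\item Defining $\mathbf h_\iota$ as the slice $\psi_{\mathbf u_0}(\cdot,\mathbf 0)$ makes its smoothness and its independence of $\mathbf u$ explicit. The paper only asserts both.
\end{itemize}

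\textbf{On your Step~3.} It is not needed for the lemma as stated, because the hypothesis already holds for every fixed $\mathbf n_\iota$. It matters only if you want to derive that hypothesis from the theorem's almost-sure alignment assumption~(iii). In that setting, your Fubini-plus-continuity extension over $\mathbf n_\iota$ is sound.
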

\begin{proof}
Define the deterministic map
\begin{equation}
F(\mathbf n_\iota,\mathbf n_\nu,\mathbf u)
:=
\mathbf{f}_\iota\!\big(\mathrm{g}(\mathbf z(\mathbf n_\iota,\mathbf n_\nu,\mathbf u))\big),
\end{equation}
where $\mathbf z(\cdot)$ denotes the unique solution of the linear structural equations
given $(\mathbf n_\iota,\mathbf n_\nu,\mathbf u)$.
Since $\mathbf f$, $\mathrm g$ are smooth and invertible and $\mathbf z(\cdot)$ is linear
in $(\mathbf n_\iota,\mathbf n_\nu)$, the map $F$ is smooth in
$(\mathbf n_\iota,\mathbf n_\nu)$ for each fixed $\mathbf u$.

By assumption Eq.~\ref{eq:align_as_iota_coupled}, for any fixed $\mathbf n_\iota$ and any
$\mathbf u,\mathbf u_0$, the random variables
$F(\mathbf n_\iota,\mathbf N_\nu^{(\mathbf u)},\mathbf u)$ and
$F(\mathbf n_\iota,\mathbf N_\nu^{(\mathbf u_0)},\mathbf u_0)$ are almost surely equal.
In particular, for each fixed $\mathbf n_\iota$ and $\mathbf u$, the random variable
$F(\mathbf n_\iota,\mathbf N_\nu^{(\mathbf u)},\mathbf u)$ is almost surely constant
with respect to $\mathbf N_\nu^{(\mathbf u)}$.

We show that $F(\mathbf n_\iota,\mathbf n_\nu,\mathbf u)$ cannot depend on $\mathbf n_\nu$.
Suppose otherwise. Then there exists an environment $\mathbf u$ and an index
$l\in\{1,\dots,d_\nu\}$ such that
\begin{equation}
\frac{\partial F}{\partial n_{\nu,l}}(\mathbf n_\iota^\star,\mathbf n_\nu^\star,\mathbf u)\neq 0
\end{equation}
for some $(\mathbf n_\iota^\star,\mathbf n_\nu^\star)$.
By continuity of the partial derivative, there exists an open neighborhood
$U=U_\iota\times U_\nu$ of $(\mathbf n_\iota^\star,\mathbf n_\nu^\star)$ on which
$\partial F/\partial n_{\nu,l}$ has a fixed nonzero sign.

Fix any $\mathbf n_\iota\in U_\iota$ and $\mathbf n_{\nu,-l}\in U_{\nu,-l}$.
Then the function
\begin{equation}
t \;\mapsto\; F(\mathbf n_\iota,(t,\mathbf n_{\nu,-l}),\mathbf u)
\end{equation}
is strictly monotone on $U_{\nu,l}$.
Since $\mathbf N_\nu^{(\mathbf u)}$ has a non-degenerate Gaussian distribution,
$\mathbb P(\mathbf N_\nu^{(\mathbf u)}\in U_\nu)>0$, and conditional on this event,
two independent draws have distinct $l$-th coordinates with probability one.
Therefore,
\begin{equation}
\mathbb P\!\left(
F(\mathbf n_\iota,\mathbf N_\nu^{(\mathbf u)},\mathbf u)
\neq
F(\mathbf n_\iota,\tilde{\mathbf N}_\nu^{(\mathbf u)},\mathbf u)
\right)>0,
\end{equation}
which contradicts the almost-sure invariance implied by
Eq.~\ref{eq:align_as_iota_coupled}.
Hence $F$ cannot depend on $\mathbf n_\nu$.

Therefore, for each $\mathbf u$ there exists a function $\mathbf h_{\iota,\mathbf u}$
such that
$F(\mathbf n_\iota,\mathbf n_\nu,\mathbf u)=\mathbf h_{\iota,\mathbf u}(\mathbf n_\iota)$
almost surely.
The alignment equality across environments forces these functions to agree almost surely,
yielding a single smooth function $\mathbf h_\iota$ such that
\begin{equation}
F(\mathbf n_\iota,\mathbf n_\nu,\mathbf u)=\mathbf h_\iota(\mathbf n_\iota)
\quad\text{a.s. for all }\mathbf u.
\end{equation}
\end{proof}

\begin{lemma}[Linear Block Identifiability of $\mathbf{n}_\iota$]
\label{lem:n_iota_linear}
Under the conditions of Lemma~\ref{lem:n_iota_block} and assuming the exogenous noises $\mathbf{n}_\iota$ and $\hat{\mathbf{n}}_\iota$ follow non-degenerate Gaussian distributions, the block-wise mapping $\mathbf{h}_\iota$ must be an \textbf{affine transformation}:
\begin{equation}
    \hat{\mathbf{n}}_\iota = \mathbf{A}_\iota \mathbf{n}_\iota + \mathbf{b}_\iota,
\end{equation}
where $\mathbf{A}_\iota \in \mathbb{R}^{d_\iota \times d_\iota}$ is a constant non-singular matrix and $\mathbf{b}_\iota \in \mathbb{R}^{d_\iota}$ is a bias vector.
\end{lemma}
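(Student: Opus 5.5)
Starting from Lemma~\ref{lem:n_iota_block}, we already know that $\hat{\mathbf n}_\iota=\mathbf h_\iota(\mathbf n_\iota)$ almost surely for some smooth $\mathbf h_\iota$, so the task reduces to showing that $\mathbf h_\iota$ is affine and non-singular.

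The plan is first to show that $\mathbf h_\iota$ is a diffeomorphism. Then we exploit the Gaussian structure of both sides through the likelihood-matching identity Eq.~\ref{eq:match}, restricted to the invariant block.

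\textbf{Invertibility.} The full map $\mathbf n\mapsto\hat{\mathbf n}$ is the composition
\[
(\mathbf I-\hat{\boldsymbol\Lambda}(\mathbf u))\circ \hat{\mathrm g}^{-1}\circ\mathrm g\circ(\mathbf I-\boldsymbol\Lambda(\mathbf u))^{-1}.
\]
Each factor is a diffeomorphism, by Lemma~\ref{lem:unit_jacobian_noise} and assumption~\ref{itm1:smooth}. Because $\hat{\boldsymbol\Lambda}$ is block lower triangular, $\hat{\mathbf n}_\iota=(\mathbf I-\hat{\boldsymbol\lambda}_{\iota\iota})\hat{\mathbf z}_\iota$ depends only on $\hat{\mathbf z}_\iota$. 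Hence the Jacobian of $\mathbf n\mapsto\hat{\mathbf n}$ is block triangular in the $(\iota,\nu)$ ordering once we know $\partial\hat{\mathbf n}_\iota/\partial\mathbf n_\nu=0$, which is what Lemma~\ref{lem:n_iota_block} gives. Its determinant therefore factors as
\[
\det J_{\mathbf h_\iota}\cdot\det\big(\partial\hat{\mathbf n}_\nu/\partial\mathbf n_\nu\big)\neq0 .
\]
So $J_{\mathbf h_\iota}$ is nonsingular everywhere. A symmetric argument applied to the inverse map shows that $\mathbf n_\iota$ is also a function of $\hat{\mathbf n}_\iota$ alone, so $\mathbf h_\iota$ is a global diffeomorphism of $\mathbb R^{d_\iota}$.

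\textbf{Density identity.} With the block-triangular Jacobian in hand, write
\[
p(\mathbf n_\iota)\,p(\mathbf n_\nu\mid\mathbf u)=\hat p(\hat{\mathbf n}_\iota)\,\hat p(\hat{\mathbf n}_\nu\mid\mathbf u)\,\big|\det J_{\mathbf h_\iota}\big|\,\big|\det J_\nu\big| .
\]
Integrating out $\mathbf n_\nu$ (equivalently $\hat{\mathbf n}_\nu$ for fixed $\mathbf n_\iota$) gives the pushforward identity
\[
\mathcal N(\boldsymbol\mu_\iota,\operatorname{diag}\boldsymbol\beta_\iota)\circ\mathbf h_\iota^{-1}=\mathcal N(\hat{\boldsymbol\mu}_\iota,\operatorname{diag}\hat{\boldsymbol\beta}_\iota).
\]
In other words, $\mathbf h_\iota$ pushes one non-degenerate Gaussian to another. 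Taking logs yields
\[
-\tfrac12\|\mathbf n_\iota-\boldsymbol\mu_\iota\|^2_{\boldsymbol\beta_\iota^{-1}}=-\tfrac12\|\mathbf h_\iota(\mathbf n_\iota)-\hat{\boldsymbol\mu}_\iota\|^2_{\hat{\boldsymbol\beta}_\iota^{-1}}+\log|\det J_{\mathbf h_\iota}(\mathbf n_\iota)|+\text{const}.
\]

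\textbf{Main obstacle.} The hard step is that a diffeomorphism pushing a Gaussian to a Gaussian need not be affine; the Gaussian rotation symmetry leaves many nonlinear measure-preserving maps. So the marginal identity alone is insufficient, and something more is needed to force linearity.

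My first attempt is to use the structure of the model class. Under the estimated model, $\hat{\mathbf n}_\iota=(\mathbf I-\hat{\boldsymbol\lambda}_{\iota\iota})\hat{\mathbf z}_\iota$ is also the \emph{linear} image of the decoder coordinates. I would then try to argue, as in the standard block-identifiability results (e.g.\ \citet{von2021self}, \citet{kong2022partial}), that identifiability holds up to an invertible map, and treat the Gaussian restriction as selecting the affine representative. Concretely, differentiating the log identity twice gives
\[
\boldsymbol\beta_\iota^{-1}=J^\top\hat{\boldsymbol\beta}_\iota^{-1}J+\sum_k r_k\nabla^2 h_k-\nabla^2\log|\det J| .
\]
I would try to show that the Hessian terms vanish by requiring this identity to hold for all $\mathbf n_\iota$ and letting $\|\mathbf n_\iota\|\to\infty$, using polynomial growth of the quadratic terms. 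If this fails, I would state the result as holding up to an invertible $\mathbf h_\iota$, chosen affine within the Gaussian model class by the alignment objective (which only determines the block). In either case, the final step is to read off $\mathbf A_\iota=J_{\mathbf h_\iota}$, constant and non-singular by the invertibility established above, and $\mathbf b_\iota=\mathbf h_\iota(\mathbf 0)$.
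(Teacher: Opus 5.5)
There is a genuine gap, and it sits exactly at the step you call the main obstacle: nothing in the proposal shows that $\mathbf h_\iota$ is affine. Your growth argument cannot succeed, because a nonlinear Gaussian-preserving diffeomorphism can equal the identity outside a compact set. For example, take $d_\iota=2$ and $\mathbf n_\iota\sim\mathcal N(\mathbf 0,\mathbf I)$. Let $\psi(\mathbf n)=R_{\theta(\|\mathbf n\|)}\mathbf n$, where $R_\alpha$ is rotation by angle $\alpha$ and $\theta$ is a small smooth bump supported in $[1,2]$, not identically zero. In polar coordinates $\psi$ is $(r,\varphi)\mapsto(r,\varphi+\theta(r))$. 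It preserves $r$ and has unit Jacobian, so it pushes $\mathcal N(\mathbf 0,\mathbf I)$ to itself and satisfies your twice-differentiated identity exactly. Yet $\psi$ is not affine, so $\nabla^2\psi_k\neq\mathbf 0$ somewhere on the annulus. Since $|\det J_\psi|\equiv 1$, the terms $\sum_k r_k\nabla^2\psi_k$ simply cancel the non-constant part of $J_\psi^\top J_\psi$. Conjugating $\psi$ by the whitening map $\mathbf n\mapsto\operatorname{diag}(\boldsymbol\beta_\iota)^{-1/2}(\mathbf n-\boldsymbol\mu_\iota)$ handles any mean and diagonal covariance, and acting on two coordinates handles any $d_\iota\ge 2$. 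Your fallback does not close the gap either. Replacing $\mathbf f_\iota$ by $\psi\circ\mathbf f_\iota$ keeps the alignment loss at its global minimum, so alignment cannot select an affine representative. And reading off $\mathbf A_\iota=J_{\mathbf h_\iota}$ as ``constant'' presupposes the conclusion.

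The gap cannot be repaired, because this example lives inside the model and meets every hypothesis. Take a true model with $\boldsymbol\lambda_{\iota\iota}=\mathbf 0$ and $\boldsymbol\lambda_{\nu\iota}\equiv\mathbf 0$. Set $\hat{\mathrm g}:=\mathrm g\circ\Psi^{-1}$ with $\Psi(\mathbf z_\iota,\mathbf z_\nu)=(\psi(\mathbf z_\iota),\mathbf z_\nu)$, and keep all other estimated parameters equal to the true ones. Then:
\begin{itemize}
\item \emph{Likelihood matching holds.} Since $\mathbf z_\iota$ and $\mathbf z_\nu$ are independent given $\mathbf u$ and $\psi$ preserves the law of $\mathbf z_\iota$, $\Psi(\mathbf z)$ has the law of $\mathbf z$ in every environment. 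Hence $p_{\hat{\theta}}(\mathbf x\mid\mathbf u)=p_{\theta}(\mathbf x\mid\mathbf u)$.
\item \emph{Alignment holds.} We have $\mathbf f_\iota(\mathbf x)=\psi(\mathbf n_\iota)$, which does not depend on $(\mathbf n_\nu,\mathbf u)$, so the hypothesis of Lemma~\ref{lem:n_iota_block} is satisfied.
\item \emph{Gaussianity holds.} $\hat{\mathbf n}_\iota=\psi(\mathbf n_\iota)$ is non-degenerate Gaussian.
\end{itemize}
Yet $\mathbf h_\iota=\psi$ is not affine. So for $d_\iota\ge 2$ the statement is false. The paper's proof fails at precisely the point you flagged: it asserts without argument that differentiating the score identity twice forces $\nabla^2\mathbf h_\iota\equiv 0$.

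Two things are actually provable. First, the case $d_\iota=1$: a diffeomorphism of $\mathbb R$ is monotone, and the monotone transport between two Gaussians is affine. Second, for general $d_\iota$, identifiability of $\mathbf n_\iota$ up to a diffeomorphism. Your invertibility paragraph establishes this correctly; the paper simply asserts that $\mathbf h_\iota$ is a bijection, although Lemma~\ref{lem:n_iota_block} only yields a function. An affine conclusion for $d_\iota\ge 2$ needs an extra assumption, for instance restricting $\hat{\mathrm g}^{-1}\circ\mathrm g$, or the $\iota$-head of the encoder, to an affine class.
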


\begin{proof}
By Lemma~\ref{lem:n_iota_block}, there exists a smooth bijection
$\mathbf h_\iota:\mathbb R^{d_\iota}\to\mathbb R^{d_\iota}$ such that
$\hat{\mathbf n}_\iota=\mathbf h_\iota(\mathbf n_\iota)$ almost surely.
Assume $\mathbf n_\iota\sim\mathcal N(\boldsymbol\mu_\iota,\boldsymbol\Sigma_\iota)$ and
$\hat{\mathbf n}_\iota\sim\mathcal N(\hat{\boldsymbol\mu}_\iota,\hat{\boldsymbol\Sigma}_\iota)$
with $\boldsymbol\Sigma_\iota,\hat{\boldsymbol\Sigma}_\iota\succ 0$.

Let $p$ and $\hat p$ denote the densities of $\mathbf n_\iota$ and $\hat{\mathbf n}_\iota$, respectively.
By change of variables,
\begin{equation}
\log p(\mathbf n)
=
\log \hat p(\mathbf h_\iota(\mathbf n))
+ \log\big|\det J_{\mathbf h_\iota}(\mathbf n)\big|.
\label{eq:cov_gauss}
\end{equation}
Taking the gradient w.r.t. $\mathbf n$ on both sides yields
\begin{equation}
\nabla_{\mathbf n}\log p(\mathbf n)
=
J_{\mathbf h_\iota}(\mathbf n)^\top \nabla_{\hat{\mathbf n}}\log \hat p(\hat{\mathbf n})
\big|_{\hat{\mathbf n}=\mathbf h_\iota(\mathbf n)}
\;+\;
\nabla_{\mathbf n}\log\big|\det J_{\mathbf h_\iota}(\mathbf n)\big|.
\label{eq:grad_id}
\end{equation}
For a non-degenerate Gaussian, the score function is affine:
\[
\nabla_{\mathbf n}\log p(\mathbf n)= -\boldsymbol\Sigma_\iota^{-1}(\mathbf n-\boldsymbol\mu_\iota),
\qquad
\nabla_{\hat{\mathbf n}}\log \hat p(\hat{\mathbf n})
= -\hat{\boldsymbol\Sigma}_\iota^{-1}(\hat{\mathbf n}-\hat{\boldsymbol\mu}_\iota).
\]
Substituting these into \eqref{eq:grad_id} gives, for all $\mathbf n$,
\begin{equation}
-\boldsymbol\Sigma_\iota^{-1}(\mathbf n-\boldsymbol\mu_\iota)
=
- J_{\mathbf h_\iota}(\mathbf n)^\top \hat{\boldsymbol\Sigma}_\iota^{-1}
(\mathbf h_\iota(\mathbf n)-\hat{\boldsymbol\mu}_\iota)
\;+\;
\nabla_{\mathbf n}\log\big|\det J_{\mathbf h_\iota}(\mathbf n)\big|.
\label{eq:score_eq}
\end{equation}

Now differentiate \eqref{eq:score_eq} once more w.r.t. $\mathbf n$.
The left-hand side has constant Jacobian $-\boldsymbol\Sigma_\iota^{-1}$.
On the right-hand side, any nonzero second derivatives of $\mathbf h_\iota$ would generate terms
depending on $\mathbf n$ through $(\mathbf h_\iota(\mathbf n)-\hat{\boldsymbol\mu}_\iota)$ and through
the derivatives of $\log|\det J_{\mathbf h_\iota}(\mathbf n)|$.
Since the equality holds for all $\mathbf n$ and the left-hand side is constant,
it follows that $\nabla^2 \mathbf h_\iota(\mathbf n)\equiv 0$, i.e., $\mathbf h_\iota$ has constant Jacobian.
Therefore $\mathbf h_\iota$ is affine:
\[
\mathbf h_\iota(\mathbf n)=\mathbf A_\iota \mathbf n + \mathbf b_\iota,
\]
with $\mathbf A_\iota$ nonsingular because $\mathbf h_\iota$ is bijective.
\end{proof}

\section{Justification and Clarification for Assumptions in Theorem~\ref{thm: identifiability}}
\label{app: justify}
Assumptions~\ref{itm1:smooth}-\ref{itm1:rank} are originally developed by nonlinear ICA \citep{hyvarinen2016unsupervised,hyvarinen2017nonlinear,khemakhem2020variational,sorrenson2020disentanglement}, and have also been adopted in several recent works on causal representation learning, with different forms~\cite{zhang2024causal,liu2022identifying,liu2023identifiable}. Intuitively, assumption~\ref{itm1:smooth} requires that the mapping from the latent space to the observation space be information-preserving, in the sense that no latent information is irreversibly lost during the generation process. If this condition were violated, exact recovery of the latent variables would in general be impossible, regardless of the learning algorithm. Assumption~\ref{itm1:rank} ensures that the auxiliary variable (environment or intervention) induces sufficiently significant and diverse changes in the distributions of the latent components. This condition guarantees that different latent factors respond distinctly across environments, providing the variation for disentanglement.

Assumption~\ref{itm1:align} is inspired by analyses in contrastive and invariant representation learning, where optimal alignment objectives are assumed to recover representations that are invariant across different views or environments. Related theoretical analyses have been provided in recent contrastive learning works~\citep{wang2020understanding,saunshi2019theoretical,von2021self}.  In particular, \citet{von2021self} (e..g, Theorems 4.3 and 4.4) explicitly adopt a global optimality assumption to establish identifiability guarantees for self-supervised representations. Following this line of work, we likewise impose a population-level optimality assumption to connect the alignment objective with identifiability. We emphasize that such optimality assumptions are standard in the identifiability literature, and are commonly introduced to characterize what is theoretically achievable under idealized conditions, rather than to describe practical optimization behavior~\citep{hyvarinen2016unsupervised,hyvarinen2017nonlinear,khemakhem2020variational,sorrenson2020disentanglement,von2021self}.

Assumption~\ref{itm1:lambda} is aligned with identifiability analyses in causal representation learning, where variations across environments are required to disentangle causal mechanisms~\citep{lippe2022citris,lippe2022causal,brehmer2022weakly,liu2022identifying,liu2023identifiable,von2024nonparametric}. 
Intuitively, this assumption requires the existence of at least one environment $\mathbf{u}$ in which the influence of parent variables on a given node is effectively removed, thereby isolating the corresponding latent noise component. 
This can be interpreted as a hard intervention on the target variable, and is introduced to rule out degenerate causal structures in which parent effects cannot be separated from intrinsic noise.

\newpage
\section{Proof of Theorem~\ref{thm: identifiability}}
\label{app: identi}
Based on the preceding lemmas, we now establish the following identifiability results for the latent variables $\mathbf{z} = (\mathbf{z}_\iota, \mathbf{z}_\nu)$. 
\setcounter{theorem}{0} 

\begin{theorem}[Identifiability Results]
\label{thm: identifiability}
Suppose the observed variable $\mathbf{x}$ and latent causal variables $\mathbf{z} = (\mathbf{z}_\iota, \mathbf{z}_\nu)$ follow the generative model defined in Eqs.~\ref{eq:lcm_inv}--\ref{eq:dgp}, parameterized by $\boldsymbol{\theta} = (\mathrm{g}, \boldsymbol{\lambda}, \boldsymbol{\mu}, \boldsymbol{\beta})$. Let $\boldsymbol{\hat{\theta}} = (\mathrm{\hat{g}}, \hat{\boldsymbol{\lambda}}, \hat{\boldsymbol{\mu}}, \hat{\boldsymbol{\beta}})$ be the estimated parameters obtained by matching the conditional data distribution $p_\theta(\mathbf{x}|\mathbf{u}) = p_{\hat{\theta}}(\mathbf{x}|\mathbf{u})$ and minimizing the alignment loss. Assume the following conditions hold:

\begin{itemize}
    \item[\namedlabel{itm:smooth}{(i)}] \textbf{Invertibility and Smoothness:} The unknown nonlinear mapping $\mathrm{g}$ is smooth and invertible.
    
    \item[\namedlabel{itm:rank}{(ii)}] \textbf{Environmental Sufficiency:} There exist $2d_\nu$ distinct environments $\{\mathbf{u}_1, \dots, \mathbf{u}_m\}$ relative to a reference $\mathbf{u}_0$ such that the matrix 
    \begin{equation}
        \mathbf{L}^\top = [\Delta\boldsymbol{\eta}(\mathbf{u}_1), \dots, \Delta\boldsymbol{\eta}(\mathbf{u}_{2d_\nu})]^\top \in \mathbb{R}^{2d_\nu \times 2d_\nu}
    \end{equation}
    has full column rank $2d_\nu$, where (elementwise divisions)
\begin{equation}
\Delta\eta(\mathbf{u})
:=
\begin{pmatrix}
\frac{\boldsymbol{\mu}_\nu(\mathbf{u})}{\boldsymbol{\beta}_\nu(\mathbf{u})}
-\frac{\boldsymbol{\mu}_\nu(\mathbf{u}_0)}{\boldsymbol{\beta}_\nu(\mathbf{u}_0)}\\[2mm]
-\frac12\Big(\frac{1}{\boldsymbol{\beta}_\nu(\mathbf{u})}-\frac{1}{\boldsymbol{\beta}_\nu(\mathbf{u}_0)}\Big)
\end{pmatrix}\in\mathbb{R}^{2d_\nu}.
\end{equation}.
    \item[\namedlabel{itm:align}{(iii)}] \textbf{Optimal Alignment:} The alignment loss, e.g., Eq.~\ref{eq: align} attains its global minimum such that $\mathbf{f}_\iota(\mathbf{x}^{(\mathbf{u})}) = \mathbf{f}_\iota(\mathbf{x}^{(\mathbf{u}_0)})$ almost surely for any $\mathbf{u}, \mathbf{u}_0$, where $\mathbf{f}_\iota = \mathrm{\hat{g}}^{-1}_\iota$.
    \item[\namedlabel{itm:lambda} {(iv)}] \textbf{Intervention Sufficiency:} The function class of $\boldsymbol{\lambda}$ satisfies the following condition: there exists $\mathbf{u}_{i}$, such that, for all parent nodes $z_j \in\mathrm{pa}_i $ of $z_i$, $\boldsymbol{\lambda}_{j,i} =0$.
\end{itemize}

Then, the true latent causal variables $\mathbf{z}$ are related to the variables $\hat{\mathbf{z}}$ estimated by matching likelihood (i.e., the upper bound of ELBO Eq.~\ref{eq:elbo}) as follows:
\begin{enumerate}
    \item $\mathbf{z}_\nu$ is identified up to permutation and scaling, i.e., $\mathbf{z}_\nu = \mathbf{P}_\nu \hat{\mathbf{z}}_\nu + \mathbf{c}_\nu$, where $\mathbf{P}_\nu$ is a permutation matrix with scaling.
    \item $\mathbf{z}_\iota$ is identified up to a linear block transformation, i.e., $\mathbf{z}_\iota = \mathbf{A}_\iota \hat{\mathbf{z}}_\iota + \mathbf{c}_\iota$, where $\mathbf{A}_\iota \in \mathbb{R}^{d_\iota \times d_\iota}$ is a non-singular matrix.
\end{enumerate}
\end{theorem}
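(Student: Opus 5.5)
The plan is to work at the level of the exogenous noises $\mathbf{n}=(\mathbf{n}_\iota,\mathbf{n}_\nu)$, where the lemmas of App.~\ref{aop: lemmas} already give the needed identifiability. We then transport the conclusions back to $\mathbf{z}$ through the linear structural equations. Concretely, Lemma~\ref{lem:unit_jacobian_noise} writes $\mathbf{z}=(\mathbf{I}-\boldsymbol{\Lambda}(\mathbf{u}))^{-1}\mathbf{n}$ with unit Jacobian, for both the true and the estimated models. Matching $p_\theta(\mathbf{x}\mid\mathbf{u})=p_{\hat\theta}(\mathbf{x}\mid\mathbf{u})$ under assumptions~\ref{itm:smooth}--\ref{itm:rank} then lets us invoke Lemma~\ref{lem:n_nu_perm}. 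This gives $n_{\nu,j}=a_j\hat n_{\nu,\pi(j)}+b_j$, so the responsive noises are identified up to permutation, scaling and shift.

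For the invariant block, the plan uses assumption~\ref{itm:align}. It is exactly the almost-sure alignment hypothesis of Lemma~\ref{lem:n_iota_block}, since $\mathbf{x}^{(\mathbf{u})}$ and $\mathbf{x}^{(\mathbf{u}_0)}$ share $\mathbf{n}_\iota$ (whose distribution does not depend on $\mathbf{u}$) and carry independent responsive noises. That lemma shows $\hat{\mathbf{n}}_\iota=\mathbf{h}_\iota(\mathbf{n}_\iota)$ with no dependence on $\mathbf{n}_\nu$. Lemma~\ref{lem:n_iota_linear} then upgrades $\mathbf{h}_\iota$ to an affine map. Because $\mathbf{z}_\iota=(\mathbf{I}-\boldsymbol{\lambda}_{\iota\iota})^{-1}\mathbf{n}_\iota$ is a $\mathbf{u}$-independent invertible linear map, and likewise for the estimate, composing gives $\mathbf{z}_\iota=\mathbf{A}_\iota\hat{\mathbf{z}}_\iota+\mathbf{c}_\iota$ with $\mathbf{A}_\iota=(\mathbf{I}-\boldsymbol{\lambda}_{\iota\iota})^{-1}\mathbf{A}'(\mathbf{I}-\hat{\boldsymbol{\lambda}}_{\iota\iota})$ nonsingular. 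This settles claim 2.

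For claim 1 we must pass from $\mathbf{n}_\nu$ to $\mathbf{z}_\nu$. We have $\mathbf{z}_\nu=(\mathbf{I}-\boldsymbol{\lambda}_{\nu\nu}(\mathbf{u}))^{-1}\big(\boldsymbol{\lambda}_{\nu\iota}(\mathbf{u})\mathbf{z}_\iota+\mathbf{n}_\nu\big)$, and the same form holds for the estimate. Since $\mathbf{z}$ and $\hat{\mathbf{z}}$ are both functions of $\mathbf{x}$ alone, the map $\hat{\mathbf{z}}\mapsto\mathbf{z}$ is environment independent. Writing it through the noises, $\mathbf{z}_\nu$ becomes a linear function of $(\hat{\mathbf{z}}_\iota,\hat{\mathbf{z}}_\nu)$ with coefficients built from $\boldsymbol{\lambda}(\mathbf{u})$, $\hat{\boldsymbol{\lambda}}(\mathbf{u})$ and the permutation-scaling $\mathbf{P}$ from Lemma~\ref{lem:n_nu_perm}. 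Equating these expressions across environments forces the $\mathbf{u}$-dependent parts to cancel. Assumption~\ref{itm:lambda} is what makes this cancellation informative. For each node $i$ there is an environment $\mathbf{u}_i$ in which all incoming weights of $z_{\nu,i}$ vanish, so $z_{\nu,i}$ equals a function of its own noise (plus possibly $\mathbf{z}_\iota$ terms) in that environment. Evaluating the environment-invariant relation at $\mathbf{u}_i$, and proceeding inductively along the fixed topological order (footnote on the common ordering), we show that the row of the $\hat{\mathbf{z}}_\nu\mapsto\mathbf{z}_\nu$ map for node $i$ has a single nonzero entry at $\pi(i)$. Its cross-block coefficient on $\hat{\mathbf{z}}_\iota$ should also vanish. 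That yields $\mathbf{z}_\nu=\mathbf{P}_\nu\hat{\mathbf{z}}_\nu+\mathbf{c}_\nu$.

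The main obstacle is this last step. We must rule out leakage of $\mathbf{z}_\iota$ into $\mathbf{z}_\nu$ through $\boldsymbol{\lambda}_{\nu\iota}(\mathbf{u})$, and of ancestors through $\boldsymbol{\lambda}_{\nu\nu}(\mathbf{u})$, using only environment invariance of the observation-level map. The approach we would try first is to differentiate the identity $\mathbf{z}_\nu(\hat{\mathbf{z}})$ across pairs of environments, $\mathbf{u}_i$ versus generic $\mathbf{u}$. The hope is to obtain that $(\boldsymbol{\lambda}(\mathbf{u})-\boldsymbol{\lambda}(\mathbf{u}_i))$ times the mixing must equal the corresponding estimated difference after permutation. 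From that we would conclude that the mixing matrix is a scaled permutation row by row. If the $\boldsymbol{\lambda}_{\nu\iota}$ term resists, we would read assumption~\ref{itm:lambda} as also zeroing the $\iota\to\nu$ parents in $\mathbf{u}_i$, which treats $\mathbf{z}_\iota$ components as parents of $z_{\nu,i}$.
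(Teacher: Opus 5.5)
Your proposal follows essentially the same route as the paper's proof. Like the paper, you get noise-level identification from Lemmas~\ref{lem:n_nu_perm}, \ref{lem:n_iota_block} and~\ref{lem:n_iota_linear}, transport it back to $\mathbf{z}$ through the linear structural equations using the $\mathbf{u}$-invariance of $\mathrm{g}^{-1}\circ\hat{\mathrm{g}}$, and close the $\nu$-block with an assumption-(iv) argument that the paper also only sketches, by deferring to Step~III of \citet{liu2022identifying}. Your explicit flagging of the $\hat{\mathbf{z}}_\iota\to\mathbf{z}_\nu$ cross-block term is more careful than the paper, which never addresses it; the one point you both leave implicit is that evaluating the identity at $\mathbf{u}_i$ isolates row $i$ only if the estimated node $\pi(i)$ also has vanishing incoming weights in that same environment, which a common zero-weight environment for both parameterizations (as in \citet{liu2022identifying}) would guarantee.
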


\begin{proof}

\textbf{Step 1: Identification Results of Latent Noise $\mathbf{n}$.}
From Lemma~\ref{lem:n_nu_perm}, the environmental noise $\mathbf{n}_\nu$ is identified component-wise due to sufficient variance across environments: $\mathbf{n}_\nu = \mathbf{P}_n \hat{\mathbf{n}}_\nu + \mathbf{b}_\nu$, where $\mathbf{P}_n$ is a permutation matrix with scaling. From Lemma~\ref{lem:n_iota_linear}, the invariant noise $\mathbf{n}_\iota$ is identified as a linear block: $\mathbf{n}_\iota = \mathbf{A}_n \hat{\mathbf{n}}_\iota + \mathbf{b}_\iota$. Combining these, the global exogenous noise vector satisfies:
\begin{equation}
    \mathbf{n} = \mathbf{A}_{global} \hat{\mathbf{n}} + \mathbf{b}, \quad \text{with } \mathbf{A}_{global} = 
    \begin{pmatrix} 
    \mathbf{A}_n & \mathbf{0} \\ 
    \mathbf{0} & \mathbf{P}_n 
    \end{pmatrix}.
    \label{eq:global}
\end{equation}

\textbf{Step 2: Derivation of Linear Latent Relationship.}
Substituting the structural equations $\mathbf{n} = (\mathbf{I} - \boldsymbol{\Lambda}(\mathbf{u}))\mathbf{z}$ and $\hat{\mathbf{n}} = (\mathbf{I} - \hat{\boldsymbol{\Lambda}}(\mathbf{u}))\hat{\mathbf{z}}$ (See Eq.~\ref{eq:z_Lambda_n}) into the noise relationship yields:
\begin{equation}
    (\mathbf{I} - \boldsymbol{\Lambda}(\mathbf{u})) \mathbf{z} = \mathbf{A}_{global} (\mathbf{I} - \hat{\boldsymbol{\Lambda}}(\mathbf{u})) \hat{\mathbf{z}} + \mathbf{b}_n.
    \label{eq:substitution}
\end{equation}
Solving Eq.~\ref{eq:substitution} for $\mathbf{z}$ thus implies a constant affine mapping:
\begin{equation}
    \mathbf{z} = \mathbf{M} \hat{\mathbf{z}} + \mathbf{c},
\end{equation}
where the transformation matrix
\begin{equation}
    \mathbf{M} = (\mathbf{I} - \boldsymbol{\Lambda}(\mathbf{u}))^{-1} \mathbf{A}_{global} (\mathbf{I} - \hat{\boldsymbol{\Lambda}}(\mathbf{u}))
    \label{eq:structural_identity}.
\end{equation}
Since $\mathbf{z} = \mathrm{g}^{-1}(\mathbf{x})$ and $\hat{\mathbf{z}} = \mathrm{\hat{g}}^{-1}(\mathbf{x})$, the mapping between $\mathbf{z}$ and $\hat{\mathbf{z}}$ is given by $\mathbf{z} = (\mathrm{g}^{-1} \circ \mathrm{\hat{g}})(\hat{\mathbf{z}})$. Because both $g$ and $\mathrm{\hat{g}}$ are assumed to be independent of the environment $\mathbf{u}$, the transformation from $\hat{\mathbf{z}}$ to $\mathbf{z}$ must also be $\mathbf{u}$-invariant. As a result, $\mathbf{M}$ is independent of $\mathbf{u}$. 

\textbf{Step 3: Identifiability Results of Latent $\mathbf{z}$.} 
\paragraph{Linear Block Identifiability of $\mathbf z_\iota$.}
From Lemma~\ref{lem:n_iota_linear}, there exist a non-singular matrix $\mathbf A_\iota$ and a vector $\mathbf b_\iota$ such that
\begin{equation}
\hat{\mathbf n}_\iota=\mathbf A_\iota \mathbf n_\iota+\mathbf b_\iota.
\end{equation}
By the structural equations for the invariant block,
\begin{equation}
\mathbf n_\iota=(\mathbf I-\boldsymbol{\lambda}_{\iota\iota})\mathbf z_\iota,
\qquad
\hat{\mathbf n}_\iota=(\mathbf I-\hat{\boldsymbol{\lambda}}_{\iota\iota})\hat{\mathbf z}_\iota.
\end{equation}
Combining these yields
\begin{equation}
(\mathbf I-\hat{\boldsymbol{\lambda}}_{\iota\iota})\hat{\mathbf z}_\iota
=
\mathbf A_\iota(\mathbf I-\boldsymbol{\lambda}_{\iota\iota})\mathbf z_\iota+\mathbf b_\iota,
\end{equation}
which implies that $\mathbf z_\iota$ is identifiable up to an invertible linear transformation (and a shift).

\subparagraph{Component-Wise Identifiability of $\mathbf{z}_\nu$}
To determine the internal structure of $\mathbf z_\nu$, we temporarily treat the invariant block
$\mathbf z_\iota$ as an aggregated variable and do not resolve its internal coordinates.
That is, for the purpose of analyzing the $\nu$-block, we regard $\mathbf z_\iota$ as a single latent
variable $z_\iota$ and correspondingly $\mathbf n_\iota$ as a single noise variable $n_\iota$.

Under this reduced representation, the block-wise linear relation
$\hat{\mathbf n}_\iota = \mathbf A_\iota \mathbf n_\iota + \mathbf b_\iota$ from Lemma~\ref{lem:n_iota_linear}
reduces to a scalar affine relation $\hat n_\iota = s\, n_\iota + b$ for some $s\neq 0$.

Consequently, in this reduced view, the global linear transformation in Eq.~\ref{eq:global}
takes the form
\begin{equation}
 \mathbf A'_{\mathrm{global}} =
 \begin{pmatrix}
 s & \mathbf 0 \\
 \mathbf 0 & \mathbf P_n
 \end{pmatrix},
\end{equation}
where $\mathbf P_n$ is a diagonal scaling-permutation matrix on the $\nu$-coordinates.

Therefore, by Lemma~\ref{lem:n_nu_perm}, the reduced noise vector $(n_\iota,\mathbf n_\nu)$ is identified
up to a joint permutation and component-wise scaling, and in particular $\mathbf n_\nu$ is identified
up to permutation and scaling independently of the internal parameterization of $\mathbf n_\iota$.

Then, under Assumption~\ref{itm:lambda}, the proof can be completed by following Step~III in the proof of Theorem~1 of \citet{liu2022identifying}. For completeness, we briefly adapt the argument to our setting below. The key idea in Step~III is to examine the block structure of the linear mapping between the true and estimated latents, exploiting the fact that (i) $\mathbf{I}-\boldsymbol{\Lambda}(\mathbf u)$ and $\mathbf{I}-\hat{\boldsymbol{\Lambda}}(\mathbf u)$ are strictly lower triangular with unit diagonal, and (ii) the $\nu$-block of the noise transformation is diagonal up to permutation and scaling by Lemma~\ref{lem:n_nu_perm}. By comparing corresponding entries of the resulting matrix identity across carefully chosen environments where individual structural coefficients vanish (as guaranteed by Assumption~\ref{itm:lambda}), one shows that all off-diagonal entries of the $\nu\nu$ submatrix must be zero, yielding identification of $\mathbf z_\nu$ up to permutation and scaling.
\end{proof}

\newpage
\section{Implementation of the Evidence Lower Bound}
\label{app:elbo}

In this appendix, we provide a general derivation of the Evidence Lower Bound (ELBO) for our generative model, valid for any intervention vector $\mathbf{u}$.

\paragraph{Generative Model.}
For an observation $\mathbf{x}$ under intervention $\mathbf{u}$, the generative model factorizes as:
\begin{equation}
p(\mathbf{x}, \mathbf{z}_\nu, \mathbf{z}_\iota \mid \mathbf{u})
= p(\mathbf{x} \mid \mathbf{z}_\nu, \mathbf{z}_\iota)\; 
  p(\mathbf{z}_\nu \mid \mathbf{u}, \mathbf{z}_\iota)\; 
  p(\mathbf{z}_\iota),
\label{eq:app_gen_model}
\end{equation}
where $\mathbf{z}_\nu$ denotes the \emph{variant} (intervention-specific) latents and $\mathbf{z}_\iota$ the \emph{invariant} latents.  
The variational posterior adopts the structured mean-field factorization:
\begin{equation}
q(\mathbf{z}_\nu, \mathbf{z}_\iota \mid \mathbf{x}, \mathbf{u})
= q(\mathbf{z}_\nu \mid \mathbf{x}, \mathbf{u})\; 
  q(\mathbf{z}_\iota \mid \mathbf{x}).
\label{eq:app_posterior}
\end{equation}

\paragraph{Derivation.}
The marginal likelihood is
\[
\log p(\mathbf{x} \mid \mathbf{u})
= \log \int 
   \frac{p(\mathbf{x}, \mathbf{z}_\nu, \mathbf{z}_\iota \mid \mathbf{u})}
        {q(\mathbf{z}_\nu, \mathbf{z}_\iota \mid \mathbf{x}, \mathbf{u})}
   q(\mathbf{z}_\nu, \mathbf{z}_\iota \mid \mathbf{x}, \mathbf{u})
   \, d\mathbf{z}_\nu d\mathbf{z}_\iota.
\]
Applying Jensen’s inequality to the logarithm yields the ELBO:
\begin{align}
\mathcal{L}_{\text{ELBO}}(\mathbf{x}, \mathbf{u})
&= \mathbb{E}_{q(\mathbf{z}_\nu, \mathbf{z}_\iota \mid \mathbf{x}, \mathbf{u})}
   \big[\log p(\mathbf{x} \mid \mathbf{z}_\nu, \mathbf{z}_\iota)\big] -  D_{\mathrm{KL}}\!\left(q(\mathbf{z}_\nu \mid \mathbf{x}, \mathbf{u}) 
   \,\|\, p(\mathbf{z}_\nu \mid \mathbf{u}, \mathbf{z}_\iota)\right)  -  D_{\mathrm{KL}}\!\left(q(\mathbf{z}_\iota \mid \mathbf{x}) 
   \,\|\, p(\mathbf{z}_\iota)\right).
\label{eq:app_elbo}
\end{align}
Considering the trade-off between reconstruction and regularization and motivated by $\beta$-VAE, we implement the ELBO as
\begin{align}
\mathcal{L}_{\text{ELBO}}(\mathbf{x}, \mathbf{u})
&= \mathbb{E}_{q(\mathbf{z}_\nu, \mathbf{z}_\iota \mid \mathbf{x}, \mathbf{u})}
   \big[\log p_\phi(\mathbf{x} \mid \mathbf{z}_\nu, \mathbf{z}_\iota)\big] - \beta_\nu\, D_{\mathrm{KL}}\!\left(q_\theta(\mathbf{z}_\nu \mid \mathbf{x}, \mathbf{u}) 
   \,\|\, p_\phi(\mathbf{z}_\nu \mid \mathbf{u}, \mathbf{z}_\iota)\right)  - \beta_\iota\, D_{\mathrm{KL}}\!\left(q_\theta(\mathbf{z}_\iota \mid \mathbf{x}) 
   \,\|\, p_\phi(\mathbf{z}_\iota)\right).
\label{eq:app_elbo1}
\end{align}

\paragraph{Priors.}
Following the latent causal generative model Eqs.~\ref{eq:lcm_inv}-\ref{eq:lcm_nu} in theoretical analysis in Sec.~\ref{sec:dgp}, we implement the priors appearing in the ELBO in Eq.~\ref{eq:app_elbo} as follows: we specify (i) a Gaussian prior for the invariant block $\mathbf{z}_\iota$, and (ii) a DAG-structured linear-Gaussian conditional prior for the perturbation-responsive block $\mathbf{z}_\nu$ conditioned on the intervention label $\mathbf{u}$ and $\mathbf{z}_\iota$.

\paragraph{Invariant Prior $p(\mathbf{z}_\iota)$.}
We place a standard diagonal Gaussian prior on the invariant latent variables:
\begin{equation}
p(\mathbf{z}_\iota) = \mathcal{N}(\mathbf{0}, \mathbf{I}).
\label{eq:prior_ziota}
\end{equation}
This choice matches the third term in Eq.~\ref{eq:app_elbo} and yields a closed-form KL divergence with the diagonal-Gaussian variational posterior $q(\mathbf{z}_\iota\mid \mathbf{x})$.

\paragraph{Variant Prior $p(\mathbf{z}_\nu \mid \mathbf{u}, \mathbf{z}_\iota)$.}
Following the linear structural causal model in Eqs.~\ref{eq:lcm_inv}--\ref{eq:lcm_nu}, we parameterize the conditional prior over $\mathbf{z}_\nu$ via a DAG-structured linear-Gaussian model with a fixed causal ordering.
Let $\boldsymbol{\Lambda}_{\nu\nu}(\mathbf{u})$ be strictly lower triangular (acyclic), and define
\begin{equation}
\mathbf{A}(\mathbf{u}) := \mathbf{I} - \boldsymbol{\Lambda}_{\nu\nu}(\mathbf{u}).
\end{equation}
We implement latent noise
\begin{equation}
\mathbf{n}_\nu \sim \mathcal{N}\!\big(\boldsymbol{\mu}_\nu(\mathbf{u}), \mathrm{diag}(\boldsymbol{\beta}_\nu(\mathbf{u}))\big),
\end{equation}
and a linear dependence on $\mathbf{z}_\iota$ through $\boldsymbol{\Lambda}_{\nu\iota}(\mathbf{u})$:
\begin{equation}
\mathbf{z}_\nu = \boldsymbol{\Lambda}_{\nu\nu}(\mathbf{u})\,\mathbf{z}_\nu
               + \boldsymbol{\Lambda}_{\nu\iota}(\mathbf{u})\,\mathbf{z}_\iota
               + \mathbf{n}_\nu.
\label{eq:scm_znu}
\end{equation}
Equivalently, since $\mathbf{A}(\mathbf{u})$ is invertible under acyclicity, this induces the conditional Gaussian prior
\begin{equation}
p(\mathbf{z}_\nu \mid \mathbf{u}, \mathbf{z}_\iota)
= \mathcal{N}\!\Big(
\mathbf{A}(\mathbf{u})^{-1}\big(\boldsymbol{\Lambda}_{\nu\iota}(\mathbf{u})\,\mathbf{z}_\iota + \boldsymbol{\mu}_\nu(\mathbf{u})\big),\;
\mathbf{A}(\mathbf{u})^{-1}\,\mathrm{diag}(\boldsymbol{\beta}_\nu(\mathbf{u}))\,\mathbf{A}(\mathbf{u})^{-\top}
\Big).
\label{eq:prior_znu}
\end{equation}
In practice, we implement this prior by sampling $\mathbf{n}_\nu$ and solving the triangular linear system in Eq.~\ref{eq:scm_znu}, which avoids explicit matrix inversion and naturally enforces the DAG constraint via the strictly lower-triangular $\boldsymbol{\Lambda}_{\nu\nu}(\mathbf{u})$.

\paragraph{Variational Posteriors.}
We use a structured variational family that mirrors the decomposition of invariant and perturbation-responsive factors in Eq.~\ref{eq:app_posterior}.

\paragraph{Invariant Posterior $q(\mathbf{z}_\iota\mid \mathbf{x})$.}
The invariant block is parameterized as a diagonal Gaussian:
\begin{equation}
q(\mathbf{z}_\iota \mid \mathbf{x})
= \mathcal{N}\!\big(\boldsymbol{\mu}_\iota'(\mathbf{x}),\;
\mathrm{diag}(\boldsymbol{\sigma}_{\iota}'(\mathbf{x}))\big),
\label{eq:q_ziota}
\end{equation}
where $(\boldsymbol{\mu}_\iota'(\cdot),\boldsymbol{\sigma}_{\iota}'(\cdot))$ are outputs of an inference network.

\paragraph{Variant Posterior $q(\mathbf{z}_\nu\mid \mathbf{x},\mathbf{u})$.}
To capture the causal dependencies among perturbation-responsive variables, we adopt an autoregressive (DAG-ordered) variational posterior:
\begin{equation}
q(\mathbf{z}_\nu \mid \mathbf{x}, \mathbf{u})
= \prod_{i=1}^{d_\nu} q\!\left(z_{\nu,i} \mid \mathbf{z}_{\nu,<i}, \mathbf{x}, \mathbf{u}\right),
\label{eq:q_znu_ar}
\end{equation}
with each conditional factor being Gaussian,
\begin{equation}
q\!\left(z_{\nu,i} \mid \mathbf{z}_{\nu,<i}, \mathbf{x}, \mathbf{u}\right)
= \mathcal{N}\!\Big(\mu_{\nu,i}'(\mathbf{x},\mathbf{u},\mathbf{z}_{\nu,<i}),\;
\sigma_{\nu,i}'(\mathbf{x},\mathbf{u},\mathbf{z}_{\nu,<i})\Big).
\label{eq:q_znu_cond}
\end{equation}
This form allows the posterior to represent nontrivial correlations in $\mathbf{z}_\nu$ induced by the directed dependencies, while maintaining tractable sampling via an ordered reparameterization. Here we can fix a predefined causal ordering over the latent variables because the identifiability result determines the perturbation-responsive factors only up to permutation. We therefore choose a canonical ordering of the latent dimensions and enforce the corresponding triangular structure during learning, without loss of generality. This follows the standard practice in identifiable latent causal models (e.g., \citet{liu2022identifying}). Algorithm~\ref{alg:contrasdagvae_forward} summarizes the training procedure, closely mirroring our implementation.

\begin{algorithm}[t]
\caption{\textbf{Training Procedure of {PerturbedVAE}}}
\label{alg:contrasdagvae_forward}
\small
\begin{algorithmic}[1]
\REQUIRE Dataset $\mathcal{D}$

\STATE $(\mathbf{x}, \mathbf{u}, \mathbf{x}^{(\mathbf{u}_0)}) \sim \mathcal{D}$
\STATE $\mathbf{h}_1 \gets \encbody(\mathbf{x})$;\quad $\mathbf{h}_2 \gets \encbody(\mathbf{x}^{(\mathbf{u}_0)})$

\STATE \textit{--- Step 1: Encode variational posteriors ---}
\STATE $(\boldsymbol{\mu}_\nu', \log{\boldsymbol{\sigma}_\nu'}^{2}) \gets g_\nu(\mathbf{h}_1,\mathbf{u})$
\STATE $(\boldsymbol{\mu}_{\iota,1}', \log{\boldsymbol{\sigma}_{\iota,1}'}^{2}) \gets g_\iota(\mathbf{h}_1)$;\quad
       $(\boldsymbol{\mu}_{\iota,2}', \log{\boldsymbol{\sigma}_{\iota,2}'}^{2}) \gets g_\iota(\mathbf{h}_2)$
\STATE $\boldsymbol{\varepsilon}_\nu, \boldsymbol{\varepsilon}_{\iota,1}, \boldsymbol{\varepsilon}_{\iota,2} \sim \mathcal{N}(\mathbf{0},\mathbf{I})$
\STATE $\tilde{\mathbf{z}}_\nu \gets \boldsymbol{\mu}_\nu' + \boldsymbol{\sigma}_\nu' \odot \boldsymbol{\varepsilon}_\nu$ \hfill {\footnotesize(sample base noise for $\mathbf{z}_\nu$)}
\STATE $\mathbf{z}_{\iota}^{(1)} \gets \boldsymbol{\mu}_{\iota,1}' + \boldsymbol{\sigma}_{\iota,1}' \odot \boldsymbol{\varepsilon}_{\iota,1}$
\STATE $\mathbf{z}_{\iota}^{(2)} \gets \boldsymbol{\mu}_{\iota,2}' + \boldsymbol{\sigma}_{\iota,2}' \odot \boldsymbol{\varepsilon}_{\iota,2}$

\STATE \textit{--- Step 2: DAG-structured transformation for $\mathbf{z}_\nu$ ---}
\STATE $\boldsymbol{\Lambda}_{\nu\nu}(\mathbf{u}) \gets f_{\nu\nu}(\mathbf{u})$ \hfill {\footnotesize(strictly lower-triangular)}
\STATE $\boldsymbol{\Lambda}_{\nu\iota}(\mathbf{u}) \gets f_{\nu\iota}(\mathbf{u})$
\STATE $\boldsymbol{\mu}_\nu(\mathbf{u}) \gets f_{\mu}(\mathbf{u})$
\STATE $\mathbf{z}_\nu \gets \big(\mathbf{I}-\boldsymbol{\Lambda}_{\nu\nu}(\mathbf{u})\big)^{-1}
\Big(\tilde{\mathbf{z}}_\nu + \boldsymbol{\Lambda}_{\nu\iota}(\mathbf{u})\,\mathbf{z}_{\iota}^{(1)} + \boldsymbol{\mu}_\nu(\mathbf{u})\Big)$
\hfill {\footnotesize(implements Eq.~\eqref{eq:scm_znu})}

\STATE \textit{--- Step 3: Reconstruction ---}
\STATE $\hat{\mathbf{x}} \gets \decoder([\mathbf{z}_\nu, \mathbf{z}_{\iota}^{(1)}])$

\STATE \textit{--- Step 4: Losses (ELBO + contrastive alignment) ---}
\STATE $\mathcal{L}_{\text{rec}} \gets \|\mathbf{x} - \hat{\mathbf{x}}\|_2^2$
\STATE $\mathcal{L}_{\text{KL-}\nu} \gets D_{\text{KL}}\!\left(q(\mathbf{z}_\nu \mid \mathbf{x},\mathbf{u}) \,\|\, p(\mathbf{z}_\nu \mid \mathbf{u},\mathbf{z}_\iota)\right)$
\STATE $\mathcal{L}_{\text{KL-}\iota} \gets D_{\text{KL}}\!\left(q(\mathbf{z}_\iota \mid \mathbf{x})\,\|\,p(\mathbf{z}_\iota)\right)
+ D_{\text{KL}}\!\left(q(\mathbf{z}_\iota \mid \mathbf{x}^{(\mathbf{u}_0)})\,\|\,p(\mathbf{z}_\iota)\right)$
\STATE $\mathcal{L}_{\text{contrast}} \gets \|\boldsymbol{\mu}_{\iota,1}'-\boldsymbol{\mu}_{\iota,2}'\|_2^2$

\STATE $(\beta_\nu, \beta_\iota, \alpha) \gets \schedule(t)$ \hfill {\footnotesize(annealing schedule)}
\STATE $\mathcal{L}_{\text{total}} \gets \mathcal{L}_{\text{rec}} + \beta_\nu\mathcal{L}_{\text{KL-}\nu} + \beta_\iota\mathcal{L}_{\text{KL-}\iota} + \alpha\mathcal{L}_{\text{contrast}}$
\STATE Update $\Theta \gets \Theta - \eta\nabla_\Theta \mathcal{L}_{\text{total}}$

\end{algorithmic}
\end{algorithm}


\newpage
\section{Experimental Details}

\subsection{Synthetic Data Experiments}
\label{app:simulation}

\paragraph{Data Generation.}
We sample data following the data-generating process described in Sec.~\ref{sec:dgp}. The concrete simulation parameters are summarized in Table~\ref{app:dgp}.

\renewcommand{\arraystretch}{1.2}
\begin{table}[H]
\centering
\caption{Simulation data generation parameters.}
\begin{tabular}{l c r}
\toprule
\textbf{Quantity} & \textbf{Symbol} & \textbf{Value} \\
\midrule
Observation dimension    & $\mathbf{x}$       & 500  \\
Latent dimension (variant)   & $\mathbf{z}_\nu$   & 4    \\
Latent dimension (invariant) & $\mathbf{z}_\iota$ & 7    \\
Intervention dimension   & $\mathbf{u}$       & 12   \\
Training size            & --                 & 3000 \\
Test size                & --                 & 1000 \\
\bottomrule
\end{tabular}
\label{app:dgp}
\end{table}

\paragraph{Training Setup and Hyperparameters.}
We use the Adam optimizer with the hyperparameters listed in Table~\ref{app:simu_hparams}.

\begin{table*}[t]
\centering
\caption{Simulation hyperparameters.}
\begin{tabular}{l c l c}
\toprule
\textbf{Hyperparameter} & \textbf{Value} & \textbf{Hyperparameter} & \textbf{Value} \\
\midrule
Batch size & 64 & $\mathbf{z}_\nu$ dim & 4 \\
Epochs & 100 & $\mathbf{z}_\iota$ dim & 7 \\
Learning rate & $1\times 10^{-3}$ & $\beta_\nu$ & $1.5\times 10^{-5}$ \\
$\beta_\iota$ & $5\times 10^{-4}$ & $\alpha_{\text{contrast}}$ & 0.1 \\
\bottomrule
\end{tabular}
\label{app:simu_hparams}
\end{table*}

\paragraph{Evaluation Metrics.}
Identifiability of the variant block $\mathbf{z}_\nu$ is quantified using the mean correlation coefficient (MCC), which measures the one-to-one correspondence between each learned latent and its ground-truth. To compute MCC, we follow these steps:

\begin{enumerate}
    \item \textbf{Compute correlation coefficients.} 
    We first compute the pairwise correlation coefficients between the ground-truth latent components $\mathbf{z}_{\nu,i}$ and the learned latent components $\hat{\mathbf{z}}_{\nu,j}$ across the test samples. Specifically, for each pair $(i,j)$ we compute the Pearson correlation coefficient
    \begin{equation}
        \rho_{i,j} \;=\; 
        \frac{\mathrm{Cov}(\mathbf{z}_{\nu,i},\,\hat{\mathbf{z}}_{\nu,j})}
        {\sigma_{\mathbf{z}_{\nu,i}}\,\sigma_{\hat{\mathbf{z}}_{\nu,j}}},
    \end{equation}
    where the covariance and standard deviations are computed over the dataset. We take absolute values $|\rho_{i,j}|$ to account for the inherent sign ambiguity of latent variables.

    \item \textbf{Solve the linear sum assignment problem.} 
    Since the learned components may be permuted relative to the ground-truth latents, we solve a linear sum assignment problem to find the optimal one-to-one matching $\pi$ between ground-truth and learned components that maximizes the total absolute correlation:
    \begin{equation}
        \pi \;=\; \arg\max_{\pi \in \mathcal{S}_d} \sum_{i=1}^{d} |\rho_{i,\pi(i)}|,
    \end{equation}
    where $\mathcal{S}_d$ denotes the set of all permutations of $\{1,\dots,d\}$ and $d$ is the latent dimension.

    \item \textbf{Compute the mean correlation coefficient (MCC).} 
    Given the optimal assignment $\pi$, we define the MCC as the average of the matched absolute correlations:
    \begin{equation}
        \mathrm{MCC} \;=\; \frac{1}{d} \sum_{i=1}^{d} |\rho_{i,\pi(i)}|.
    \end{equation}
\end{enumerate}

To assess block identifiability result over $\mathbf{z}_\iota$, we regress the ground-truth latent variables $(\mathbf{z}_\iota)$ on their learned estimates $( \hat{\mathbf{z}}_\iota)$, and report the coefficient of determination ($R^2$). High $R^2$ values close to one indicate successful block identifiability. To compute the coefficient of determination ($R^2$) for block-wise identifiability, we proceed as follows:

\begin{enumerate}
    \item \textbf{Extract learned and ground-truth latents.} 
We collect the learned latent representations $\hat{\mathbf{z}}_\iota$ and the corresponding ground-truth latent variables $\mathbf{z}_\iota$ on the test set.

\item \textbf{Regression from learned to ground-truth latents.} 
Since identifiability is only defined up to an unknown transformation at the block level, we fit a regression model from the learned latents to the ground-truth latents. Concretely, we fit a function $f$ of the form
\begin{equation}
    \mathbf{z}_\iota = f(\hat{\mathbf{z}}_\iota) + \epsilon.
\end{equation}

\item \textbf{Compute the coefficient of determination.} 
After fitting the regressor, we compute the coefficient of determination
\begin{equation}
    R^2 \;=\; 1 - \frac{\sum_{i=1}^{n} \| \mathbf{z}_{\iota,i} - \hat{\mathbf{z}}^{\,\text{pred}}_{\iota,i}\|^2}
    {\sum_{i=1}^{n} \| \mathbf{z}_{\iota,i} - \bar{\mathbf{z}}_\iota \|^2},
\end{equation}
where $\hat{\mathbf{z}}^{\,\text{pred}}_{\iota,i} = f(\hat{\mathbf{z}}_{\iota,i})$ is the predicted ground-truth latent for sample $i$, $\bar{\mathbf{z}}_\iota$ is the empirical mean of the ground-truth latents, and $n$ is the number of test samples.

    \item \textbf{Interpretation.} 
    A value of $R^2$ close to $1$ indicates that the learned latent block can be (nonlinearly) transformed to accurately recover the true latent block, consistent with block-wise identifiability. Lower $R^2$ indicates loss of information or entanglement across blocks.
\end{enumerate}

\subsection{Real-Data Experiments Details} 
\label{app: realdata}
\paragraph{Dataset from~\citep{norman2019exploring}} For real-world perturbation data, we consider the large-scale Perturb-seq dataset from~\citep{norman2019exploring}. 
It consists of 105{,}528 cells from an erythroleukemia cell line (K562) subjected to CRISPR activation~\citep{gilbert2014genome} targeting 112 genes, resulting in 105 single-gene and 131 double-gene perturbation conditions. Each perturbation condition contains between 50 and 2{,}000 cells. Across all conditions, each cell is represented as a 5,000-dimensional vector $\mathbf{x}$, corresponding to the gene expression levels.

\paragraph{Experimental Protocol for Comparing with Existing Latent Causal Models} In this setting, we follow the experimental protocol of \citet{zhang2023identifiability}. Specifically, we partition the dataset into training and test splits as follows. The training set consists of all unperturbed cells together with the 105 single-gene perturbation datasets $\mathcal{X}_1, \ldots, \mathcal{X}_{105}$. For each single-gene dataset containing more than 800 cells, we randomly hold out 96 cells to form a \emph{single-gene test set}, while the remaining cells are used for training. The \emph{double-gene test set} consists of the 112 double-gene perturbation datasets $\mathcal{X}_{106}, \ldots, \mathcal{X}_{217}$, which are entirely held out from training and used only for evaluation. This setup ensures that models are trained on unperturbed and single-gene perturbation data, but evaluated on both held-out single-gene cells and, more importantly, on combinatorial perturbations.

\paragraph{Evaluation Metrics: RMSE and Population-Level $R^2$} 
In real single-cell perturbation datasets, the ground-truth latent variables are unobservable. We therefore report the root mean squared error (RMSE) between the predicted and observed mean expression vectors, which directly measures the absolute magnitude of prediction errors in gene expression space. In addition, $R^2$ cannot be used to assess latent recovery or identifiability, and is instead used as a measure of predictive utility at the level of gene expression. Concretely, for each perturbation condition, the model first generates a population of ``virtual'' cells conditioned on the perturbation label. We compute the mean gene expression vector of these generated cells and compare it to the mean expression vector of the experimentally observed cells under the same perturbation. A linear regression is then fitted between the predicted and observed mean expression vectors, and the resulting coefficient of determination $R^2$ quantifies how well the model explains the population-level transcriptional response to the perturbation. Under this protocol, $R^2$ measures the accuracy of predicted perturbation effects in terms of explained variance rather than the recovery of latent causal variables, and therefore serves as a proxy for the practical usefulness of the learned representations in predicting gene expression changes under perturbations.

\paragraph{Hyperparameter Settings for Real Data Experiments.}
We use the Adam optimizer with hyperparameters detailed in Table~\ref{app:real_hparams}. 
\begin{table}[H]
\centering
\caption{Real Data Hyperparameters.}
\label{app:real_hparams}
\begin{tabular}{ll|ll}
\toprule
\textbf{Hyperparameter} & \textbf{Value} & \textbf{Hyperparameter} & \textbf{Value} \\
\midrule
Batch size         & 64              & Hidden dimension        & 256 \\
Epochs             & 100             & $\mathbf{z}$ dimension  & 10, 35, 75, 105 \\
Learning rate      & $1 \times 10^{-4}$ & $\alpha_{\text{contrast}}$ & 0.05 \\
$\beta_\nu, \beta_\iota$ & $1 \times 10^{-2}$ &  &  \\
\bottomrule
\end{tabular}
\end{table}

\section{Unperturbed Latent Subspace Analysis}
\label{app:latent space}

In this section, we provide additional visual evidence that the invariant latent block $\mathbf{z}_\iota$ indeed captures perturbation-invariant background transcriptional programs, and does not encode perturbation-specific information.



We examine whether this invariance property holds globally across the full test set, including both single-gene and double-gene perturbations. Figure~\ref{fig:tsne_invariant_overall} shows t-SNE embeddings of $\mathbf{z}_\iota$ for all test samples under single-gene and double-gene perturbation conditions. In both cases, cells from different perturbation regimes remain well mixed and do not form separated clusters, suggesting that $\mathbf{z}_\iota$ generalizes its invariance property beyond the training distribution.

\begin{figure}[H]
    \centering
    \begin{subfigure}[t]{0.45\textwidth}
        \centering
        \includegraphics[width=0.8\textwidth]{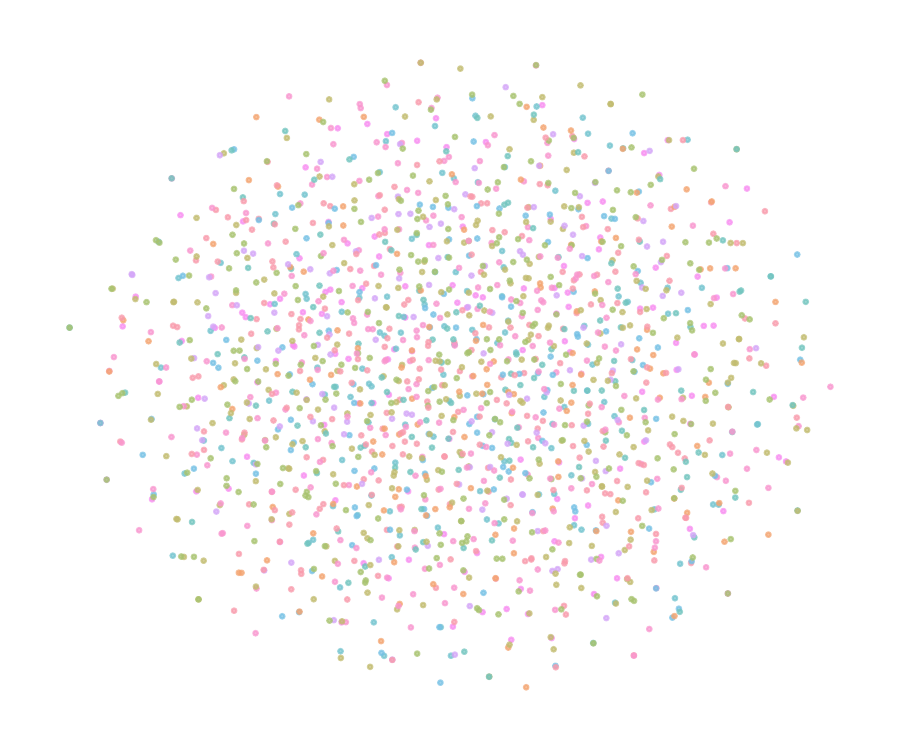}
        \caption{Single-gene test set.}
        \label{fig:tsne_single}
    \end{subfigure}
    \hfill
    \begin{subfigure}[t]{0.45\textwidth}
        \centering
        \includegraphics[width=0.8\textwidth]{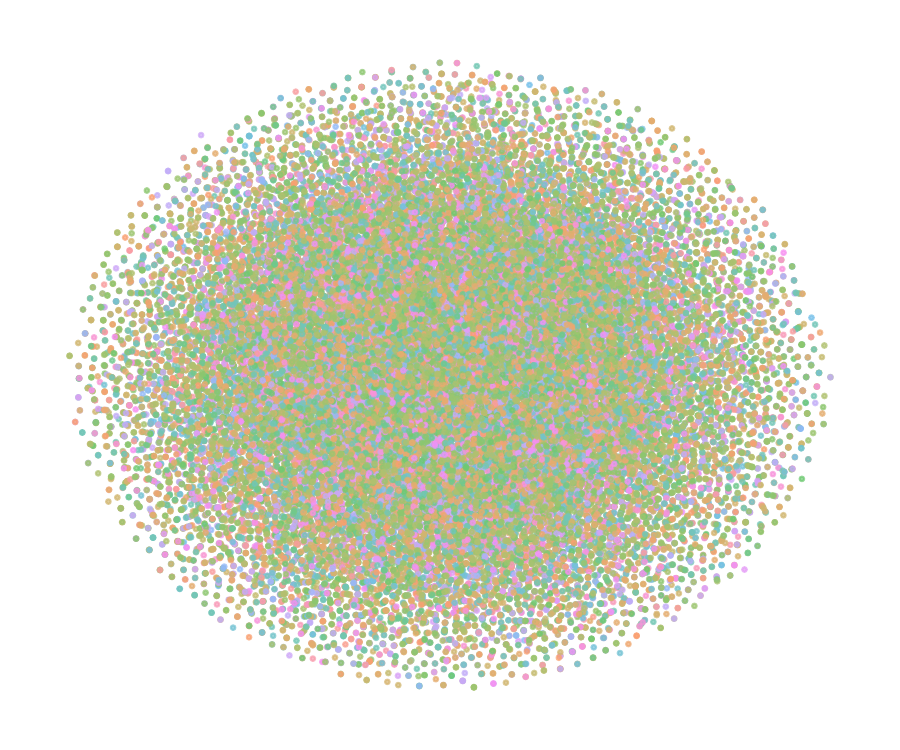}
        \caption{Double-gene test set.}
        \label{fig:tsne_double}
    \end{subfigure}
    \caption{t-SNE visualization of the invariant block $\mathbf{z}_\iota$ for single-gene (a) and double-gene (b) perturbations in the test set.}
    \label{fig:tsne_invariant_overall}
\end{figure}

Together, these results provide additional evidence that $\mathbf{z}_\iota$ captures perturbation-invariant variation in gene expression, validating the disentanglement between the invariant background block $\mathbf{z}_\iota$ and the perturbation-responsive block $\mathbf{z}_\nu$.

\newpage
\section{Perturbed Latent Subspace Analysis}

\subsection{The Learned Latent Causal Representations}\label{app:structure learning}

In this section, we examine whether the latent causal representation learned by {PerturbedVAE} is (i) identifiable and consistently aligned with external perturbations, (ii) structured rather than dense or entangled, and (iii) biologically meaningful. We provide a sequence of visualizations and analyses to validate these properties.

\paragraph{Alignment and Identifiability.}
Following \citet{zhang2023identifiability}, we first present in Figure~\ref{fig:hit} the hit map between perturbed genes and the identifiable latent causal components $\mathbf{z}_\nu(i)$ learned by our model. Columns correspond to perturbed genes, while rows denote individual causal components. Each entry highlights the component most strongly associated with a given perturbation. This visualization assesses whether perturbations are consistently mapped to specific latent components, thereby validating both identifiability and alignment of the learned representation.

\begin{figure}[H]
  \centering
  \includegraphics[width=\textwidth]{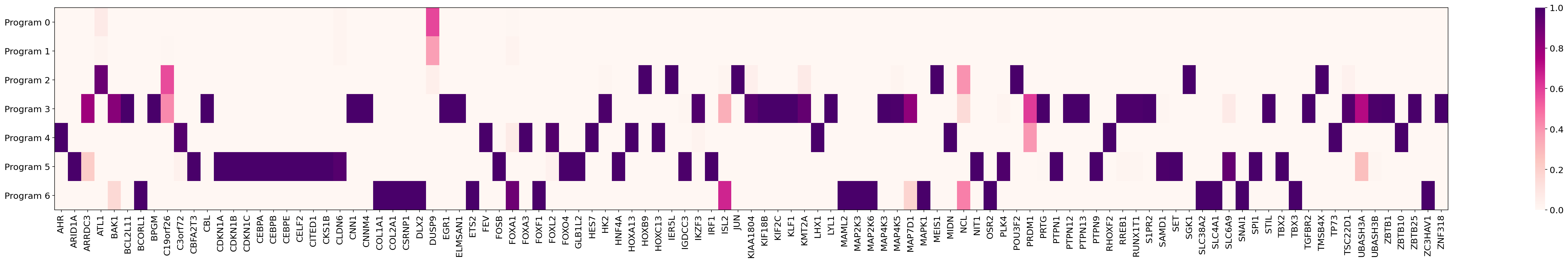} 
  \caption{Perturbed gene hits on identifiable causal components.}
  \label{fig:hit}
\end{figure}

\paragraph{Learned Causal Structure.}
To further examine the structure of the learned latent representation, we visualize the directed dependencies among the identifiable components $\mathbf{z}_\nu$. Figure~\ref{fig:threshold} (left) shows the full adjacency matrix estimated by the model prior to thresholding, where color intensity reflects the signed strength of each estimated causal effect. For interpretability, we apply a threshold ($\tau = 0.25$) to prune weak connections, yielding a sparse graph that highlights the dominant causal relations (Figure~\ref{fig:threshold}, right). This comparison demonstrates that the learned structure is neither dense nor arbitrary, but exhibits a sparse and interpretable causal backbone.

\begin{figure}[H]
    \centering
    \includegraphics[width=0.4\linewidth]{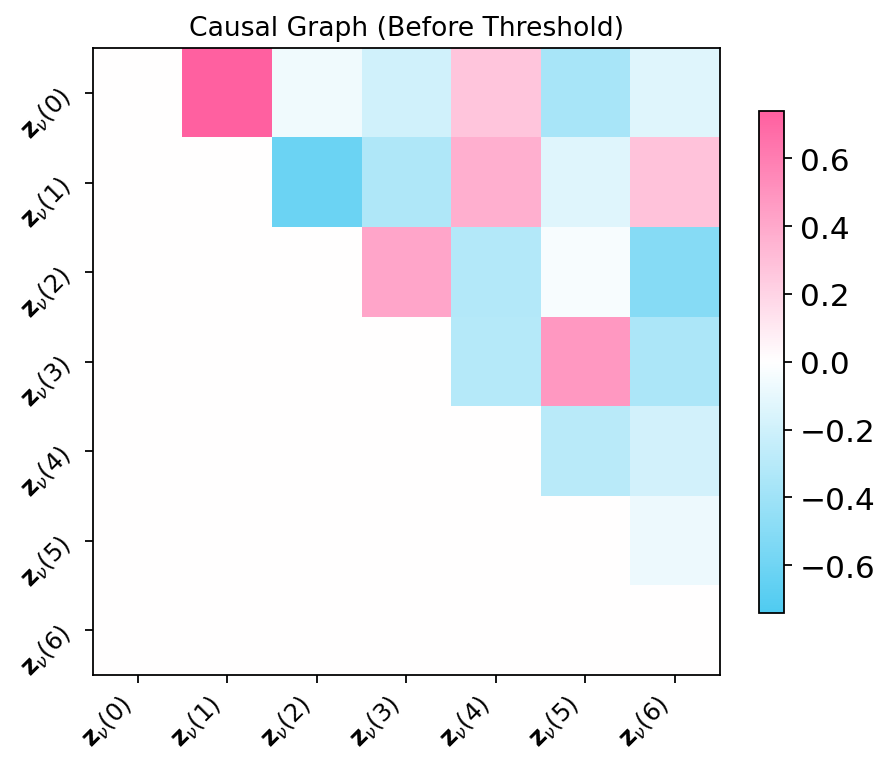}
    \includegraphics[width=0.4\linewidth]{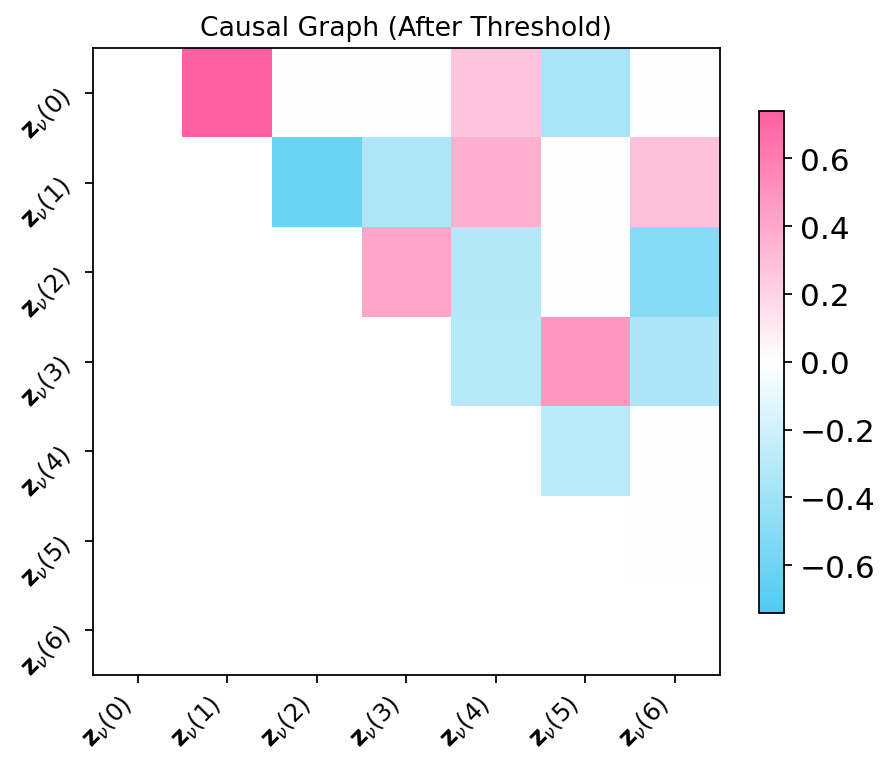}
    \caption{
        Visualization of the learned causal graph among identifiable components $\mathbf{z}_\nu$.
        \textbf{Left:} full adjacency matrix before thresholding. 
        \textbf{Right:} sparse graph after thresholding ($\tau = 0.25$).
    }
    \label{fig:threshold}
\end{figure}

\paragraph{Semantic Grounding of Latent Programs.}
We next assess whether the latent components correspond to coherent biological programs. Figure~\ref{fig:app_structure} visualizes the inferred causal graph among latent components, where each node represents a latent program and each directed edge denotes an inferred causal dependency. To provide gene-level interpretability, we map each latent program back to its associated genes. The complete mapping is reported in Table~\ref{app_tab:program_genes}, demonstrating that each latent component corresponds to a meaningful and coherent gene module.

\begin{figure}[htbp]
  \centering
  \includegraphics[width=0.4\textwidth]{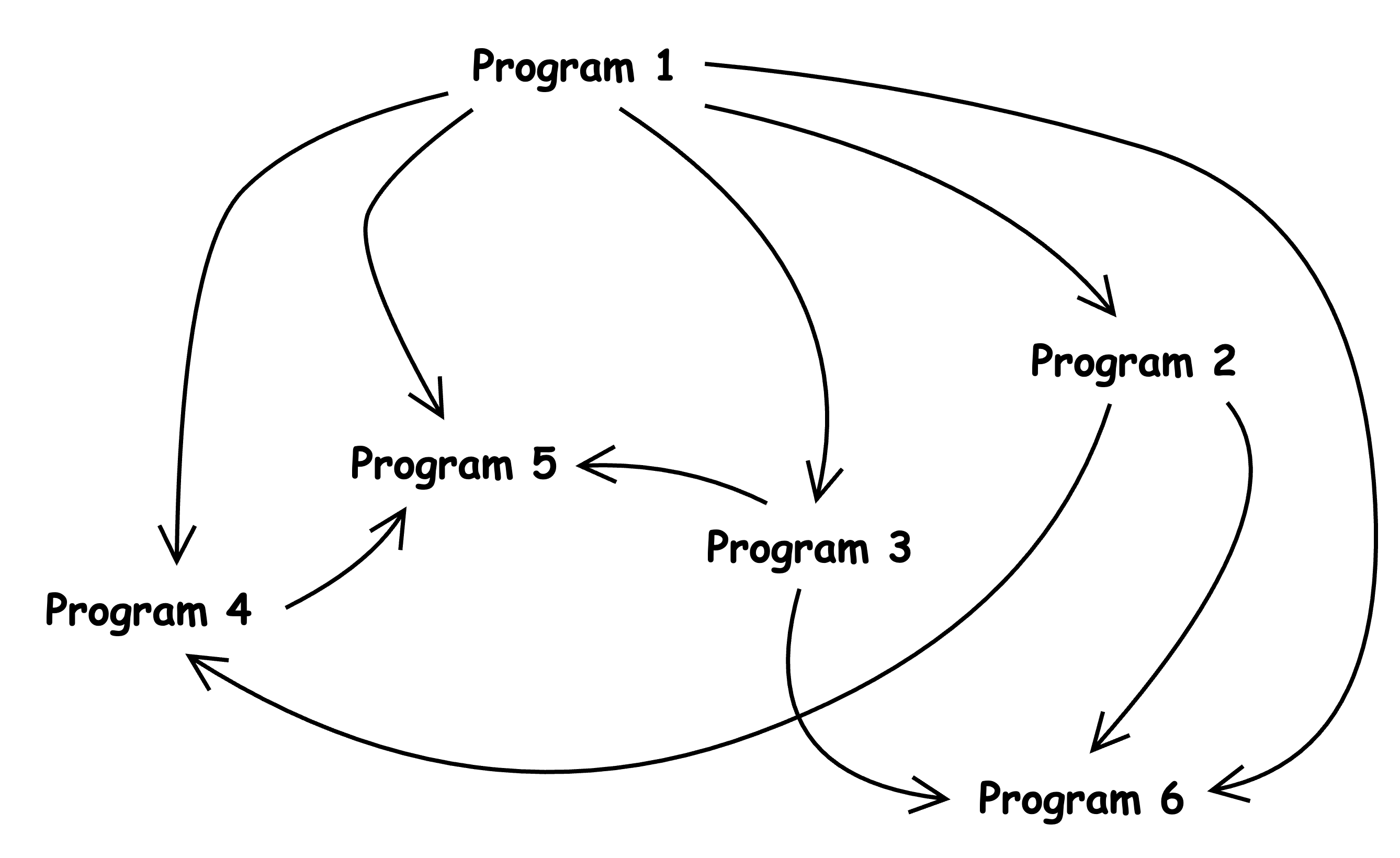} 
  \caption{Inferred causal structure among latent programs.}
  \label{fig:app_structure}
\end{figure}

\paragraph{Biological Plausibility of Inferred Edges.}
Finally, we examine whether representative inferred causal edges are consistent with known biological mechanisms. Table~\ref{app_tab:edges} summarizes several directed edges together with their mechanistic interpretations and supporting references, illustrating that the learned structure aligns with established regulatory pathways rather than reflecting spurious statistical associations.

\begin{table}[H]
\centering
\caption{Program-level representative edges: mechanistic rationale and supporting references.}
\label{app_tab:edges}
\renewcommand{\arraystretch}{1.4}
\begin{tabularx}{0.9\textwidth}{l >{\RaggedRight\arraybackslash}X >{\RaggedRight\arraybackslash}p{0.25\textwidth}}
\toprule
\textbf{Edge} & \textbf{Mechanistic rationale (summary)} & \textbf{Refs.} \\
\midrule
{\footnotesize DUSP9 $\to$ TGFBR2} & TGFBR2 activates ERK through a non-Smad branch; DUSP9 dephosphorylates ERK/JNK, attenuating this output. &
\citep{emanuelli2008overexpression, zhang2009non} \\
{\footnotesize DUSP9 $\to$ TP73} & c-Jun enhances TP73 stability; DUSP9 lowers JNK/ERK$\to$AP-1 signaling, indirectly downregulating TP73. &
\citep{koeppel2011crosstalk, emanuelli2008overexpression} \\
{\footnotesize DUSP9 $\to$ CDKN1A} & ERK$\to$ELK1/EGR1 induces p21 transcription; DUSP9 suppresses ERK phosphorylation, blunting this induction. &
\citep{lim1998stress, ragione2003p21cip1} \\
{\footnotesize DUSP9 $\to$ SNAI1} & EMT induction requires SMAD3--AP-1 cooperation; DUSP9 attenuates AP-1, weakening SNAI1 transcription. &
\citep{sundqvist2013specific, fan2025identification} \\
{\footnotesize JUN $\to$ TP73} & c-Jun stabilizes and potentiates TP73, enhancing apoptosis-related transcription. &
\citep{koeppel2011crosstalk} \\
{\footnotesize JUN $\to$ SNAI1} & AP-1 cooperates with SMAD factors to elevate SNAI1 expression in TGF-$\beta$-driven EMT. &
\citep{sundqvist2013specific, fan2025identification} \\
{\footnotesize TGFBR2 $\to$ CDKN1A} & Canonical SMAD2/3/4 downstream of TGFBR2 transactivates p21, enforcing cytostasis. &
\citep{ikushima2010tgfbeta} \\
\bottomrule
\end{tabularx}
\end{table}

\begin{table}[htbp]
\centering
\caption{Complete list of genes assigned to each latent program inferred from structure learning.}
\label{app_tab:program_genes}
\renewcommand{\arraystretch}{1.2} 
\begin{tabularx}{0.85\textwidth}{c >{\RaggedRight\arraybackslash}X}
\toprule
\textbf{Program} & \textbf{Genes} \\
\midrule
1 & {\small DUSP9} \\
2 & {\small ATL1, C19orf26, HOXB9, IER5L, JUN, MEIS1, POU3F2, SGK1, TMSB4X} \\
3 & {\small ARRDC3, BAK1, BCL2L11, BPGM, CBL, CNN1, CNNM4, EGR1, ELMSAN1, HK2, IKZF3, KIAA1804, KIF18B, KIF2C, KLF1, KMT2A, LYL1, MAP4K3, MAP4K5, MAP7D1, PRDM1, PRTG, PTPN12, PTPN13, RREB1, RUNX1T1, S1PR2, STIL, TGFBR2, TSC22D1, UBASH3A, UBASH3B, ZBTB1, ZBTB25, ZNF318} \\
4 & {\small AHR, C3orf72, FEV, FOXA3, FOXL2, HES7, HOXA13, HOXC13, LHX1, MIDN, RHOXF2, TP73, ZBTB10} \\
5 & {\small ARID1A, CBFA2T3, CDKN1A, CDKN1B, CDKN1C, CEBPA, CEBPB, CEBPE, CELF2, CITED1, CKS1B, CLDN6, FOSB, FOXO4, GLB1L2, HNF4A, IGDCC3, IRF1, NIT1, PLK4, PTPN1, PTPN9, SAMD1, SET, SLC6A9, SPI1, TBX2} \\
6 & {\small BCORL1, COL1A1, COL2A1, CSRNP1, DLX2, ETS2, FOXA1, FOXF1, ISL2, MAML2, MAP2K3, MAP2K6, MAPK1, NCL, OSR2, SLC38A2, SLC4A1, SNAI1, TBX3, ZC3HAV1} \\
\bottomrule
\end{tabularx}
\end{table}

\subsection{Perturbation Information Preservation}
\label{app:de genes}

In this section, we evaluate whether perturbation-specific information is preserved in the perturbed latent subspace rather than being suppressed by the invariant background representation. Following the main text, we use the top 20 differentially expressed (DE) genes as a perturbation-enriched readout to probe the preservation of perturbation-induced variation.

\paragraph{Metric Definitions and Empirical Observations.}
To quantify information preservation, we compute performance metrics on two complementary feature sets for each perturbation condition:
\begin{itemize}[leftmargin=5pt]
    \item\textbf{All genes}: measurements computed using the entire 5{,}000-dimensional gene expression, reflecting the global state.
    \item\textbf{DE genes}: measurements computed using the 20-dimensional sub-vectors corresponding to the top 20 most differentially expressed genes, which form a perturbation-enriched readout of the perturbed subspace.
\end{itemize}

We make the following empirical observations:
\begin{itemize}[leftmargin=5pt]
    \item \textbf{In-distribution (single-gene).} The model achieves high accuracy on both feature sets. The $R^2$ scores on the DE genes are nearly identical to those on all genes, while the RMSE on the DE subset is notably lower (Figure~\ref{fig:de_single}), indicating that perturbation-induced signals are well preserved under single-gene interventions.
    \item \textbf{Out-of-distribution (double-gene).} While the global $R^2$ on all genes remains high (around $\sim 0.98$), the $R^2$ on the DE genes exhibits a mild degradation for a subset of double-gene perturbations, with values in the $0.5$--$0.9$ range (Figure~\ref{fig:de_double}). This reflects the increased difficulty of zero-shot combinatorial extrapolation, where novel, potentially non-additive interactions must be inferred from single-gene training data.
\end{itemize}

\begin{figure}[H]
    \centering
    \begin{subfigure}[t]{0.4\textwidth}
        \centering
        \includegraphics[width=0.9\textwidth]{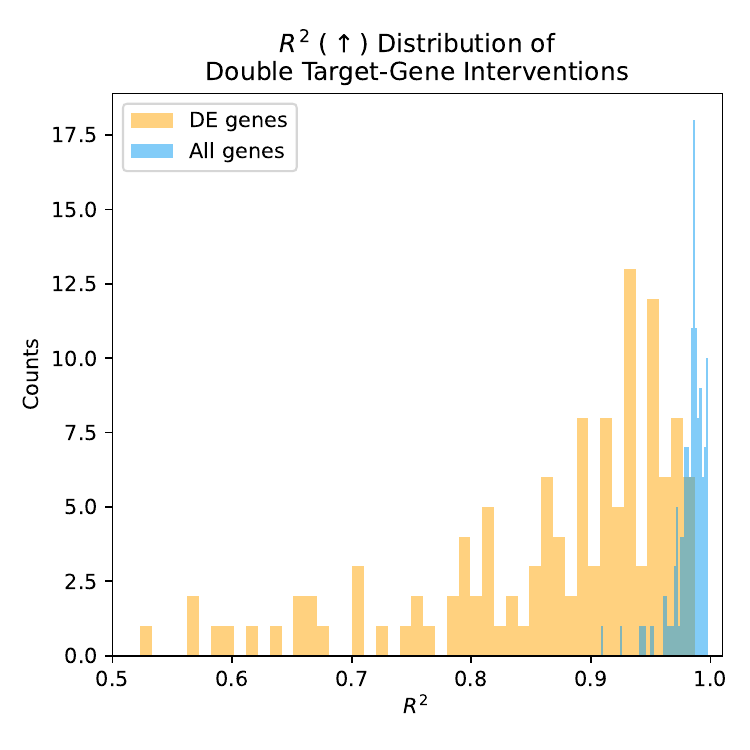}
    \end{subfigure}
    \hfill
    \begin{subfigure}[t]{0.4\textwidth}
        \centering
        \includegraphics[width=0.9\textwidth]{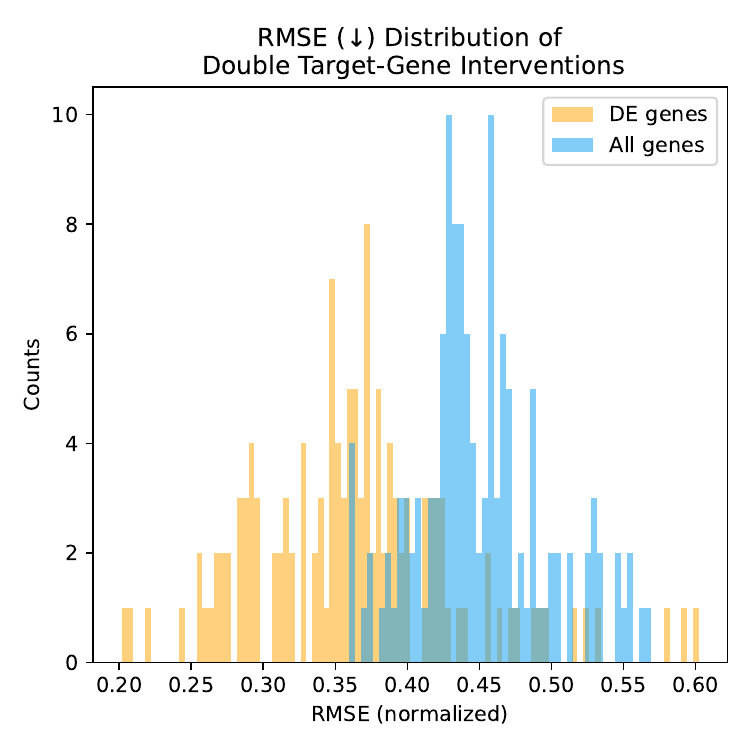}
    \end{subfigure}
    \caption{Performance on DE genes for double-gene perturbations.}
    \label{fig:de_double}
\end{figure}

\paragraph{Interpretation: Information Preservation under Invariance.}
These patterns are consistent with the intended design of the model: perturbation-invariant background variation is stabilized in the invariant block $\mathbf{z}_\iota$, while perturbation-specific information is retained in the perturbed block $\mathbf{z}_\nu$ rather than being suppressed.

\paragraph{Preservation of Perturbation Signals.}
The strong performance on DE genes in the single-gene setting indicates that the model successfully preserves perturbation-induced variation in $\mathbf{z}_\nu$ and propagates it to gene-level predictions.

\paragraph{Limits under Combinatorial Generalization.}
The moderate degradation of $R^2$ on DE genes for some double-gene perturbations reflects the intrinsic difficulty of zero-shot combinatorial causal prediction. Importantly, DE-gene RMSE typically remains low even when $R^2_{\text{DE}}$ decreases, suggesting that the model often predicts the magnitude of key expression changes reasonably well, even when fine-grained variance patterns are harder to match.

\begin{figure}[H]
    \centering
    \begin{subfigure}[t]{0.8\textwidth}
        \centering
        \includegraphics[width=\textwidth]{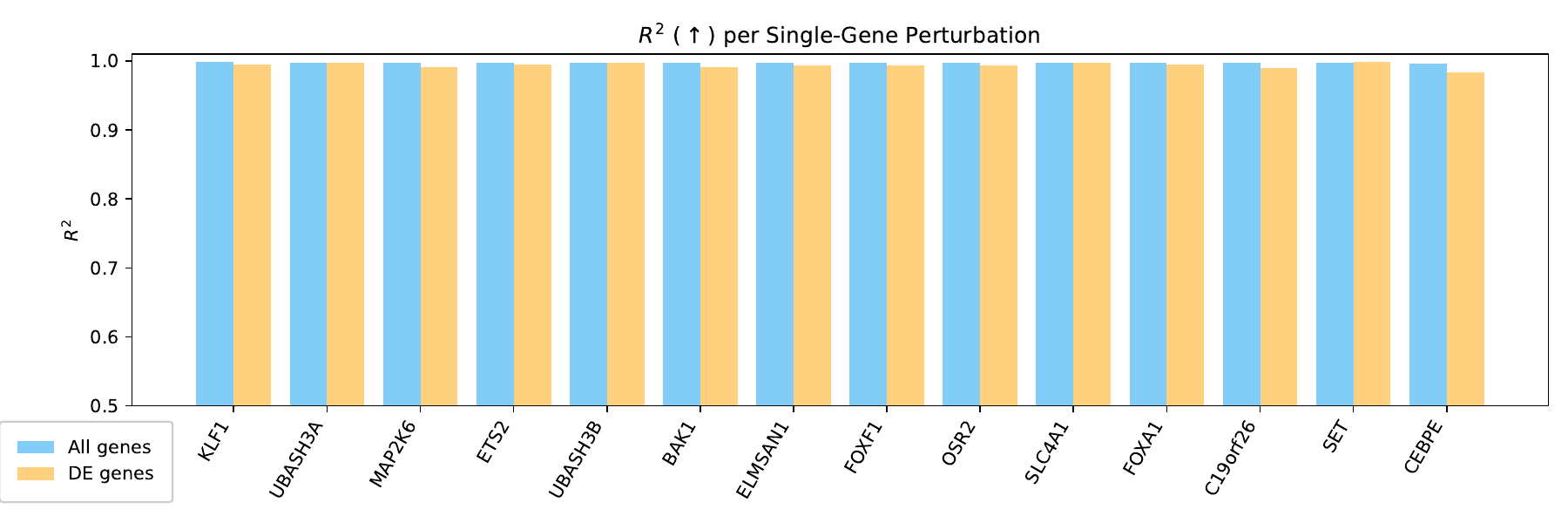}
    \end{subfigure}
    \hfill
    \begin{subfigure}[t]{0.8\textwidth}
        \centering
        \includegraphics[width=\textwidth]{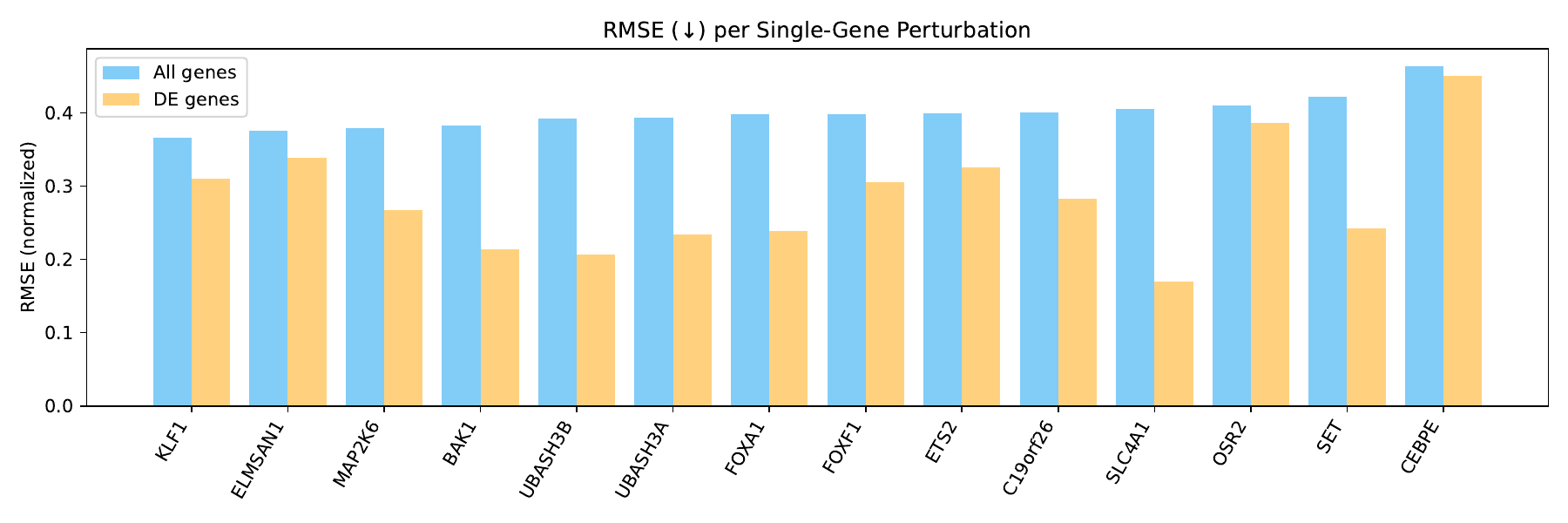}
    \end{subfigure}
    \caption{Performance on DE genes for single-gene perturbations.}
    \label{fig:de_single}
\end{figure}

\newpage
\section{Cross-Dataset Single-Gene I.I.D. Check on \texttt{Replogle2022}}
\label{app:replogle_iid}
\paragraph{Scope.}
We include \citet{replogle2022mapping} as an additional cross-dataset single-gene i.i.d. check for the latent causal representation model comparison. This experiment evaluates whether the proposed perturbation-aware representation learning mechanism remains effective on an independent perturbation dataset. It is not intended to test double-gene OOD extrapolation, which is evaluated in the main \citet{norman2019exploring} combinatorial setting.

\paragraph{Protocol.}
We follow the same latent causal model comparison protocol as in the \citet{norman2019exploring} single-gene i.i.d. evaluation. For each method, we evaluate prediction quality at both the condition level and the cell level. At the condition level, we report L2 distance and $\Delta$Pearson between predicted and observed perturbation responses. At the cell level, we report RMSE. We additionally use $R^2$ as a sanity check to assess whether the predicted single-cell profiles preserve non-degenerate cell-level variation.

\begin{table}[H]
\centering
\caption{
\texttt{Replogle2022} single-gene i.i.d. evaluation for latent causal representation models.
Condition-level metrics evaluate perturbation-level response recovery, while cell-level RMSE evaluates single-cell prediction fidelity.
}
\label{tab:replogle_iid}
\small
\setlength{\tabcolsep}{5pt}
\begin{tabular}{lccc}
\toprule
Method
& L2 $\downarrow$
& $\Delta$Pearson $\uparrow$
& RMSE $\downarrow$ \\
\midrule
SENA
& $10.916 \pm 0.064$
& $0.057 \pm 0.007$
& $0.6761 \pm 0.0038$ \\
Discrepancy-VAE
& $10.862 \pm 0.029$
& $0.056 \pm 0.006$
& $0.6731 \pm 0.0012$ \\
sVAE+
& $10.608 \pm 0.045$
& $0.136 \pm 0.002$
& $0.4905 \pm 0.0001$ \\
SAMS-VAE
& $8.489 \pm 0.061$
& $0.157 \pm 0.014$
& $0.4818 \pm 0.0005$ \\
\textbf{PerturbedVAE}
& $\mathbf{8.296 \pm 0.034}$
& $\mathbf{0.192 \pm 0.001}$
& $\mathbf{0.4815 \pm 0.0004}$ \\
\bottomrule
\end{tabular}
\end{table}

\paragraph{Results.}
As shown in Table~\ref{tab:replogle_iid}, PerturbedVAE achieves the best condition-level performance among the compared latent causal representation models, with the lowest L2 error and the highest $\Delta$Pearson. It also obtains the lowest cell-level RMSE, indicating improved single-cell prediction fidelity on an independent perturbation dataset.

\paragraph{$R^2$ Sanity Check.}
Although not included in the table, we also examine population-level $R^2$ as a sanity check for preserving meaningful single-cell variation. All compared baselines yield negative cell-level $R^2$ values, indicating that their predicted cell-level profiles are worse than the empirical-mean predictor and suffer from severe distortion at the single-cell level. In contrast, PerturbedVAE is the only method that achieves a positive cell-level $R^2$, suggesting that the learned perturbation-aware representation better preserves cell-level structure while maintaining condition-level perturbation accuracy.

\paragraph{Interpretation.}
This cross-dataset result supports the representation-learning aspect of PerturbedVAE: explicitly separating perturbation-responsive variation from dominant invariant structure improves prediction not only on \citet{norman2019exploring}, but also on an independent single-gene perturbation dataset. We emphasize that this experiment is complementary to the main \citet{norman2019exploring} double-gene OOD evaluation. The combinatorial generalization claim is supported by the \citet{norman2019exploring} double-gene benchmark, while \citet{replogle2022mapping} provides an additional i.i.d. check of representation learning robustness.

\newpage
\section{Ablation Studies and Design Validation}
\label{app:implementation}
This subsection reports a series of control and ablation experiments designed to isolate the contributions of key components of {PerturbedVAE}, including the choice of discrepancy metric, latent capacity allocation, and contrastive alignment.

\paragraph{Control: MMD-Based Discrepancy Variant.} 
To rule out the possibility that the observed performance gains are simply due to using a different discrepancy loss, we evaluate a variant of our model in which the contrastive alignment term is replaced by a maximum mean discrepancy (MMD) regularizer, denoted as {PerturbedVAE(MMD)}. This mirrors the MMD-based discrepancy formulation used in \citet{zhang2023identifiability}, enabling a direct, like-for-like comparison with Discrepancy-VAE under the same alignment criterion. This control therefore isolates whether the improvements arise from the proposed causal structure and disentanglement mechanism, rather than from the specific form of the discrepancy loss. Performance on single-gene perturbation prediction is reported in Table~\ref{tab:mmd}.

\begin{table}[H]
\centering
\caption{Evaluation of the PerturbedVAE with MMD variant on single-gene perturbation prediction.}
\resizebox{0.65\textwidth}{!}{ 
\begin{tabular}{cccc}
\toprule
\multirow{2}{*}{\textbf{Method}} & \multicolumn{3}{c}{\textbf{Metrics}}          \\ \cmidrule{2-4} 
                                 & \textbf{RMSE} & $\mathbf{R}^2$ & \textbf{MMD} \\ \midrule
\textbf{Discrepancy-VAE}~\citep{zhang2023identifiability} & $0.5558_{\pm 0.0022}$          & $0.9916_{\pm 0.0014}$          & $0.3243_{\pm 0.0050}$          \\
\textbf{PerturbedVAE} (MMD)  & $\mathbf{{0.5485}_{\pm 0.0013}}$ & 
$\mathbf{{0.9958}_{\pm 0.0003}}$ & 
$\mathbf{{0.3077}_{\pm 0.0036}}$ \\ \bottomrule
\end{tabular}}
\label{tab:mmd}
\end{table}

\paragraph{Ablation on Latent Capacity Allocation.}
We next ablate the allocation of latent capacity between the invariant block $\mathbf{z}_\iota$ and the perturbation-responsive block $\mathbf{z}_\nu$. As reported in Table~\ref{app:capacity}, together with Figure~\ref{fig:gene_r2_pair}, asymmetric allocations in which $\mathbf{z}_\iota$ is assigned substantially more capacity than $\mathbf{z}_\nu$ consistently outperform both equal-split and variant-heavy configurations on both in-distribution (single-gene) and out-of-distribution (double-gene) prediction. The invariant-heavy configuration $(z_\nu, z_\iota) = (20,85)$ achieves the lowest RMSE and highest $R^2$ across settings, indicating that sufficient capacity for modeling background transcriptional programs is critical for accurate generalization.

In contrast, when $\mathbf{z}_\iota$ is under-resourced (e.g., $(85,20)$ or $(50,55)$), performance degrades noticeably and becomes largely indistinguishable across such settings. This suggests that (i) the perturbation-responsive subspace $\mathbf{z}_\nu$ is already adequate at relatively small dimensionalities, and increasing its capacity yields diminishing returns; whereas (ii) the invariant block $\mathbf{z}_\iota$ constitutes the primary performance bottleneck.

\begin{table}[H]
\centering
\caption{Results on single- and double-gene perturbations under different capacity allocations of $\mathbf{z}_\nu$ and $\mathbf{z}_\iota$.}
\resizebox{0.6\textwidth}{!}{ 
\begin{tabular}{ccccl}
\toprule
\multirow{2}{*}{\textbf{Dimension}} & \multicolumn{2}{c}{\textbf{Single-Gene Perturbation}} & \multicolumn{2}{c}{\textbf{Double-Gene Perturbation}} \\ \cmidrule{2-5} 
                                    & \textbf{RMSE}             & $\mathbf{R}^2$            & \textbf{RMSE}             & $\mathbf{R}^2$            \\ \midrule
$\mathbf{z}_\nu = \mathbf{z}_\iota$ & $0.4084_{\pm 0.0011}$     & $0.9875_{\pm 0.0007}$     & $0.4627_{\pm 0.0003}$     & $0.9649_{\pm 0.0003}$     \\
$\mathbf{z}_\nu > \mathbf{z}_\iota$ & $0.4084_{\pm 0.0010}$     & $0.9875_{\pm 0.0007}$     & $0.4627_{\pm 0.0002}$     & $0.9649_{\pm 0.0002}$     \\
$\mathbf{z}_\nu < \mathbf{z}_\iota$ &
  \multicolumn{1}{l}{$\mathbf{0.3995_{\pm 0.0013}}$} &
  \multicolumn{1}{l}{$\mathbf{0.9977_{\pm 0.0002}}$} &
  \multicolumn{1}{l}{$\mathbf{0.4474_{\pm 0.0007}}$} &
  $\mathbf{0.9865_{\pm 0.0009}}$ \\ \bottomrule
\end{tabular}}
\label{app:capacity}
\end{table}

\paragraph{Ablation on Contrastive Alignment.}
Finally, we ablate the contrastive alignment term by comparing {PerturbedVAE} with and without the alignment loss ($\alpha=0.05$ vs.\ $\alpha=0$) under a fixed total latent dimensionality ($z=105$). As shown in Table~\ref{app:alignloss}, removing the alignment loss leads to consistent performance degradation on both single- and double-gene prediction.

Empirically, when $\alpha=0$, the invariant block $\mathbf{z}_\iota$ collapses and carries little information (with $\mathrm{KL}_\iota \to 0$), causing the effective representation to be dominated by the perturbation-responsive block $\mathbf{z}_\nu$. As a result, performance resembles that of capacity splits with $\mathbf{z}_\nu \ge \mathbf{z}_\iota$, in which the model effectively ignores the invariant subspace. In contrast, with alignment enabled, $\mathbf{z}_\iota$ remains informative and stable, preventing leakage of perturbation-specific effects into the invariant block and yielding substantially better generalization, particularly on out-of-distribution double-gene perturbations.

These results indicate that contrastive alignment is a key mechanism for sustaining the informativeness of the invariant block and maintaining the intended disentanglement, and is therefore critical for the robust performance of {PerturbedVAE}.

\begin{table}[H]
\centering
\caption{Single- and double-gene performance under contrastive alignment ablation.}
\resizebox{0.6\textwidth}{!}{ 
\begin{tabular}{ccccl}
\toprule
\multirow{2}{*}{\textbf{\begin{tabular}[c]{@{}c@{}}Contrastive\\ Alignment\end{tabular}}} &
  \multicolumn{2}{c}{\textbf{Single-Gene Perturbation}} &
  \multicolumn{2}{c}{\textbf{Double-Gene Perturbation}} \\ \cmidrule{2-5} 
       & \textbf{RMSE}         & $\mathbf{R}^2$        & \textbf{RMSE}         & $\mathbf{R}^2$        \\ \midrule
\xmark & $0.4083_{\pm 0.0011}$ & $0.9875_{\pm 0.0007}$ & $0.4626_{\pm 0.0002}$ & $0.9650_{\pm 0.0002}$ \\
\cmark &
  \multicolumn{1}{l}{$\mathbf{0.3995_{\pm 0.0013}}$} &
  \multicolumn{1}{l}{$\mathbf{0.9977_{\pm 0.0002}}$} &
  \multicolumn{1}{l}{$\mathbf{0.4474_{\pm 0.0007}}$} &
  $\mathbf{0.9865_{\pm 0.0009}}$ \\ \bottomrule
\end{tabular}}
\label{app:alignloss}
\end{table}

\section{Perspective on Additive Linear Baselines for Double-Gene Perturbation}
\label{app:perspective}

This section details the main-paper discussion on the classical additive linear baseline for combinatorial perturbation prediction. We aim to clarify (i)~why additive models can be highly competitive on standard pseudobulk benchmarks on Norman2019, as also observed by \citet{ahlmann2025deep}, and (ii)~what aspects of the prediction problem are not well captured by purely additive or regression-centric objectives. We then report supplementary results that compare {PerturbedVAE} with the additive baseline and GEARS under a strict single-gene $\rightarrow$ double-gene OOD protocol, using evaluation criteria that separately probe average responses and single-cell heterogeneity.

\paragraph{Why Additive Baselines Can Be Strong on \texttt{Norman2019}.}
Recent benchmarking results~\citep{ahlmann2025deep} highlight an important empirical characteristic of Norman2019: when evaluated on condition-level pseudobulk responses (i.e., averages over cells), even sophisticated models (including GEARS~\citep{roohani2023gears} and several foundation-model variants) may not consistently outperform a simple additive predictor under squared-error metrics. A plausible explanation is that, for many gene pairs and for many high-expression targets, the dominant component of the double-perturbation response is well approximated by a near-linear superposition of single-gene effects. In such regimes, the additive baseline benefits from a strong inductive bias that is directly aligned with the benchmark objective.

\paragraph{Perspective: Latent Causal Modeling Beyond Pseudobulk Regression.}
Our work targets a different modeling goal from regression-centric predictors that directly map perturbation labels to pseudobulk profiles. We aim to learn a structured latent causal representation that supports mechanism-level disentanglement and generalization to combinatorial interventions without using any double-perturbation supervision during training. Concretely, we model single-cell observations as arising from low-dimensional latent causal variables $\mathbf{z}$ whose dynamics are modulated by interventions $\mathbf{u}$ and corrupted by biologically meaningful stochasticity $\mathbf{n}$. In this formulation, $\mathbf{z}$ is not gene expression itself; rather, it can be interpreted as latent cellular programs, pathway activities, or regulatory modules that mediate perturbation effects. The observed expression $\mathbf{x}$ is treated as a nonlinear projection of these latent factors through the VAE decoder. 

The explicit noise term $\mathbf{n}$ reflects substantial cell-to-cell stochasticity in single-cell transcriptomics (e.g., transcriptional bursting and technical variability) that is typically suppressed by pseudobulk averaging. Modeling this stochasticity, rather than collapsing it, is intended to help separate perturbation-driven signals from unstructured variation and to make the learned latent mechanisms more suitable for downstream interpretation and OOD generalization. This perspective motivates evaluating models not only on average effects, but also on how well they capture perturbation-conditioned single-cell variability.

\paragraph{Feature Space and Evaluation Granularity.}
To balance standard comparability with causal/biological validity, we report results on two complementary gene sets and at two levels of evaluation granularity.

\textbf{Gene sets.}
(i)~\textbf{High-expression benchmark subset:} following \citet{ahlmann2025deep}, we compute metrics on the 1,000 most highly expressed genes in control cells. This subset provides a high–signal-to-noise regime and reflects the commonly used pseudobulk benchmark setting.  
(ii)~\textbf{Genome-wide profile:} we also evaluate on the full set of 5,000 genes. This is more aligned with the design goal of {PerturbedVAE}, since background cellular programs may manifest as subtle, distributed signals that are not restricted to high-expression genes and may be missed by top-expression filtering.

\textbf{Evaluation levels.}
(i)~\textbf{Condition-level pseudobulk:} we average single-cell profiles within each perturbation condition to form a pseudobulk vector and report standard metrics (Delta Pearson\footnote{We report \textit{Pearson Delta} at the pseudobulk level, defined as the Pearson correlation (across genes) between predicted and observed perturbation-induced changes relative to control.}, $L_2$, RMSE, $R^2$). These metrics quantify how well a method recovers the average conditional response associated with each perturbation and provide a direct comparison to additive baselines.  
(ii)~\textbf{Perturbation-conditioned single-cell evaluation:} for each perturbation label $\mathbf{u}$, the model produces a predicted mean expression vector, interpreted as a deterministic summary of $p_\theta(\mathbf{x}\mid \mathbf{u})$. We then compare this predicted mean against the ensemble of observed single-cell profiles under the same $\mathbf{u}$, computing RMSE and $R^2$ at the single-cell level and finally averaging across held-out double-perturbation conditions. Unlike pseudobulk metrics, this evaluation probes how well the model explains perturbation-conditioned variability in single-cell states. In {PerturbedVAE}, strong performance here is consistent with the intended disentanglement: $\mathbf{z}_{\iota}$ captures perturbation-invariant background programs, while $\mathbf{z}_{\nu}$ captures perturbation-responsive mechanisms that reshape the single-cell expression landscape (see also App.~\ref{app:de genes} for DE-gene–enriched probes).

\begin{table}[H]
\centering
\caption{Supplementary evaluation on genome-wide expression profiles (single-gene $\rightarrow$ double-gene OOD). \textbf{\textit{Note.}} A dash (--) indicates that the corresponding cell-level $R^2$ is negative (worse than a trivial condition-mean predictor) and is therefore not informative for comparing methods in this setting.}
\resizebox{\textwidth}{!}{ 
\begin{tabular}{ccccccc}
\hline
\multirow{2}{*}{\textbf{Method}} & \multicolumn{4}{c}{\textbf{Condition-level}} & \multicolumn{2}{c}{\textbf{Cell-level}} \\ \cline{2-7} 
 &
  \textbf{Prediction error ($L2$)} &
  \textbf{Delta Pearson} &
  \textbf{RMSE} &
  \textbf{$\mathbf{R}^2$} &
  \textbf{RMSE} &
  $\mathbf{R}^2$ \\ \hline
\textbf{Additive} & $2.5407_{\pm 0.0000}$ & $0.9076_{\pm 0.0000}$ & $0.0887_{\pm 0.0000}$ & $0.6431_{\pm 0.0000}$ & $0.4424_{\pm 0.0000}$ & $-$ \\
\textbf{GEARS}    & $4.6797_{\pm 0.2620}$ & $0.4631_{\pm 0.0644}$ & $0.1514_{\pm 0.0086}$ & $0.9730_{\pm 0.0032}$ & $0.5861_{\pm 0.0031}$ & $-$ \\
\textbf{PerturbedVAE}  & $3.7238_{\pm 0.0012}$ & $0.6869_{\pm 0.0005}$ & $0.1285_{\pm 0.0015}$ & $0.9965_{\pm 0.0005}$ & $0.4494_{\pm 0.0008}$ & $0.9840_{\pm 0.0011}$ \\ \hline
\end{tabular}}
\label{tab:genome-wide}
\medskip
\raggedright
\end{table}

\begin{table}[H]
\centering
\caption{Supplementary evaluation on the high-expression gene subset (single-gene $\rightarrow$ double-gene OOD).}
\resizebox{\textwidth}{!}{ 
\begin{tabular}{ccccccc}
\hline
\multirow{2}{*}{\textbf{Method}} & \multicolumn{4}{c}{\textbf{Condition-level}} & \multicolumn{2}{c}{\textbf{Cell-level}} \\ \cline{2-7} 
 &
  \textbf{Prediction error ($L2$)} &
  \textbf{Delta Pearson} &
  \textbf{RMSE} &
  \textbf{$\mathbf{R}^2$} &
  \textbf{RMSE} &
  $\mathbf{R}^2$ \\ \hline
\textbf{Additive} & $2.4906_{\pm 0.0000}$ & $0.9101_{\pm 0.0000}$ & $0.0870_{\pm 0.0000}$ & $0.6470_{\pm 0.0000}$ & $0.4332_{\pm 0.0000}$ & $-$ \\
\textbf{GEARS}    & $4.2649_{\pm 0.2044}$ & $0.5068_{\pm 0.0710}$ & $0.1381_{\pm 0.0065}$ & $0.9682_{\pm 0.0065}$ & $0.5746_{\pm 0.0018}$ & $-$ \\
\textbf{PerturbedVAE}  & $3.6491_{\pm 0.0010}$ & $0.6936_{\pm 0.0004}$ & $0.1259_{\pm 0.0013}$ & $0.9951_{\pm 0.0005}$ & $0.4411_{\pm 0.0007}$ & $0.9758_{\pm 0.0011}$ \\ \hline
\end{tabular}}
\label{tab:high-expression}
\end{table}

\paragraph{Discussion.}
Tables~\ref{tab:genome-wide}-\ref{tab:high-expression} summarize results for {PerturbedVAE}, the additive baseline, and GEARS under the strict single-gene $\rightarrow$ double-gene OOD protocol. Among the deep learning models, {PerturbedVAE} attains higher $R^2$ and lower RMSE than GEARS on both gene sets, suggesting that conditioning prediction on a learned causal latent representation can support stronger OOD generalization than directly learning a perturbation-to-expression mapping with the graph neural network baseline.

Consistent with \citet{ahlmann2025deep}, the additive baseline remains highly competitive on condition-level pseudobulk metrics, achieving the lowest $L_2$ error and highest Delta Pearson, especially on the high-expression subset on which the benchmark is commonly defined. This behavior is expected when the dominant component of the average response is approximately linear and weakly interacting, matching the inductive bias of the additive model. In such settings, more flexible models must recover this near-linearity from data while also accommodating residual non-additive effects, which can lead to a small gap on average-effect metrics even when the model is useful for other aspects of the problem.

However, condition-level metrics alone can obscure differences in how methods relate to single-cell variability. In our evaluation, both the additive baseline and GEARS yield negative cell-level $R^2$ on held-out double perturbations, indicating that their predicted condition means do not improve over a trivial condition-mean predictor when assessed against the distribution of single-cell profiles. In contrast, {PerturbedVAE} achieves substantially higher cell-level $R^2$ while remaining competitive on pseudobulk metrics. This pattern is consistent with the modeling objective of {PerturbedVAE}: learning disentangled latent factors that capture both shared background programs and perturbation-responsive mechanisms, thereby providing a coherent generative account of how perturbations reshape single-cell state distributions. From a CRL perspective, such single-cell–level fidelity is particularly relevant for downstream tasks such as mechanism interpretation, causal structure discovery, and robust OOD generalization, including settings where linear compositionality assumptions may not hold.

\section{Supplementary PCA/scVI-Based Additive Controls}
\label{app:pca_scvi_additive}

Our main comparison with simple baselines follows the additive linear model emphasized by recent large-scale benchmarking work~\citep{ahlmann2025deep}. In addition, related benchmarking studies have reported that simple representation-extractor pipelines, such as PCA-based representations combined with linear predictors, can be competitive in single-cell perturbation prediction~\citep{bendidi2024benchmarking}. We therefore include PCA/scVI-based additive controls as supplementary analyses. These experiments test whether generic low-dimensional representation extraction followed by additive extrapolation is sufficient for the double-gene OOD setting considered in this work.

We evaluate two representation-extractor controls: PCA+Additive and scVI+Additive. PCA provides a linear low-dimensional representation of the expression matrix, while scVI provides a generic nonlinear variational representation. In both cases, the extracted representation is used with the same additive prediction pipeline. These controls differ from PerturbedVAE in that they do not explicitly separate perturbation-invariant background variation from perturbation-responsive effects, nor do they learn a perturbation-conditioned latent mechanism for unseen combinatorial interventions.

\begin{table}[h]
\centering
\caption{
Supplementary PCA/scVI-based additive controls for double-gene OOD prediction.
PCA and scVI are used as generic representation extractors followed by an additive prediction pipeline.}
\label{tab:pca_scvi_additive}
\small
\setlength{\tabcolsep}{6pt}
\begin{tabular}{lcc}
\toprule
Method & RMSE $\downarrow$ & $R^2$ $\uparrow$ \\
\midrule
PCA + Additive
& $0.4907 \pm 0.0002$
& -- \\
scVI + Additive
& $0.4735 \pm 0.0002$
& -- \\
\textbf{PerturbedVAE}
& $\mathbf{0.4474 \pm 0.0007}$
& $\mathbf{0.9865 \pm 0.0009}$ \\
\bottomrule
\end{tabular}
\end{table}

As shown in Table~\ref{tab:pca_scvi_additive}, PCA+Additive and scVI+Additive provide competitive simple controls, confirming that generic low-dimensional representations can capture useful structure for perturbation prediction. However, both remain below PerturbedVAE in double-gene OOD prediction. These supplementary results do not diminish the value of simple PCA-based pipelines; rather, they clarify their role as controls for generic representation compression. In our double-gene OOD setting, generic compression followed by additive extrapolation remains insufficient to match PerturbedVAE, supporting the need for perturbation-aware modeling beyond standard low-dimensional representation extraction.

\end{document}